\def\eqref#1{equation~\ref{#1}}
\def\1{\bm{1}}
\DeclareMathAlphabet{\mathsfit}{\encodingdefault}{\sfdefault}{m}{sl}
\SetMathAlphabet{\mathsfit}{bold}{\encodingdefault}{\sfdefault}{bx}{n}
\let\AND\relax
\newtheorem{definition}{Definition}
\newtheorem{assumption}{Assumption}
\newtheorem{theo}{Theorem}
\newtheorem{corollary}{Corollary}
\newtheorem{lemma}{Lemma}
\title{Balancing Utility and Privacy: Dynamically Private SGD with Random Projection}
\author{\name Zhanhong Jiang$^\#$ \email zhjiang@iastate.edu \\
      \AND
      \name Md Zahid Hasan** \email zahid@iastate.edu \\
      \AND
      \name Nastaran Saadati* \email nsaadati@iastate.edu\\
      \AND
      \name Aditya Balu$^\#$\email baditya@iastate.edu \\
      \AND
      \name Chao Liu*** \email cliu5@tsinghua.edu.cn \\
      \AND
      \name Soumik Sarkar*$^\#$ \email soumiks@iastate.edu\\
      \addr *Department of Mechanical Engineering, $^\#$Translational AI Center, **Department of Electrical and Computer Engineering, Iowa State University\\
      \addr ***Department of Energy and Power Engineering, Tsinghua University
}
\begin{document}

\maketitle

\begin{abstract}
Stochastic optimization is a pivotal enabler in modern machine learning, producing effective models for various tasks. However, several existing works have shown that model parameters and gradient information are susceptible to privacy leakage. Although Differentially Private SGD (DPSGD) addresses privacy concerns, its static noise mechanism impacts the error bounds for model performance. Additionally, with the exponential increase in model parameters, efficient learning of these models using stochastic optimizers has become more challenging. To address these concerns, we introduce the Dynamically Differentially Private Projected SGD (D2P2-SGD) optimizer. In D2P2-SGD, we combine two important ideas: (i) dynamic differential privacy (DDP) with automatic gradient clipping and (ii) random projection with SGD, allowing dynamic adjustment of the tradeoff between utility and privacy of the model. It exhibits provably sub-linear convergence rates across different objective functions, matching the best available rate. The theoretical analysis further suggests that DDP leads to better utility at the cost of privacy, while random projection enables more efficient model learning. Extensive experiments across diverse datasets show that D2P2-SGD remarkably enhances accuracy while maintaining privacy. Our code is available \href{https://github.com/zahid-isu/d2p2}{here}.
\end{abstract}

\section{Introduction}\label{intro}
Deep learning models~\cite{thirunavukarasu2023large,menghani2023efficient}, enabled by stochastic optimization techniques, have achieved remarkable success in many fundamental machine learning tasks. Though these models empirically show incredibly appealing capabilities, there are critical concerns regarding the privacy of these models. In numerous applications, such as healthcare~\cite{chen2021ethical} and finance~\cite{goodell2021artificial}, training datasets often contain highly sensitive information that must remain confidential. However, due to the widespread use of deep learning models, their rich representations can possibly disclose private information under privacy attacks, as demonstrated in the prior works~\cite{zhao2020idlg,wang2022protect}. 
Additionally, as the number of model parameters increases exponentially, it is unclear to us how privacy and model performance affect each other as the learning becomes more computationally complex.
\begin{figure}
\centering
\includegraphics[width=\linewidth]{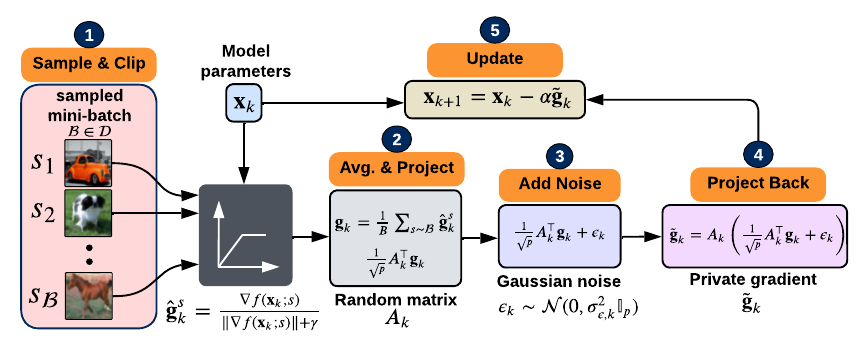}
    \caption{D2P2-SGD method involves five steps. For more technical details, please see Algorithm~\ref{alg:d2p2-sgd}.}
    \label{fig:d2p2_diagram}
\end{figure}

\noindent\textbf{Related Works.} To mitigate these privacy concerns, \textit{differential privacy} (DP)~\cite{dwork2006differential,dwork2008differential} was introduced, and it has gained considerable attention~\cite{ji2014differential,blanco2022critical} to provide principled and rigorous privacy guarantees. Intuitively speaking, DP is a mechanism to ensure that all data samples have no significant impact on the ultimate trained model. Differentially private SGD (DPSGD)~\cite{abadi2016deep,bassily2014private} is one of the most acknowledged methods to solve the private empirical risk minimization (ERM) problems. Specifically, it perturbs each gradient update with a static random noise vector (with the same dimension as that of the gradient) sampled from a distribution. 
The dimension of the noise vector is typically the same as that of the gradient, which could be extremely large and problematic.
More advanced variants on top of DPSGD have also been developed to consider other key issues, such as communication-efficient distributed setting~\cite{agarwal2018cpsgd, li2025convergence}, non-smooth losses~\cite{wang2022differentially}, uniform Lipschitz condition~\cite{das2023beyond}, and compilation and vectorization~\cite{subramani2021enabling}.
Using the perturbed gradient updates, we can compute the tradeoff between the utility and privacy of the model. This tradeoff can be adjusted via a noise mechanism, typically sampled from a \textit{static} distribution with properly chosen yet fixed variance. While this approach is technically simple and provably effective, the noise variance significantly impacts the ultimate error bound. 
A challenging issue that remains in various private machine learning tasks is how to achieve a desirable tradeoff between privacy and utility, which becomes particularly important in large-scale deep learning models~\cite{flemings2024differentially,mattern2022differentially}.
Recent work~\cite{du2021dynamic} proposed a dynamic DP mechanism to adjust the tradeoff on the fly, reducing the model performance loss gap at the cost of increased privacy loss. Further, the dimension of the noise vector is typically the same as that of the gradient, which could be extremely large to cause intensive computational complexity. This issue motivates us to seek out an approach that assists in reducing the complexity while maintaining privacy. Inspired by the work~\cite{blocki2012johnson}, we pay attention to model compression techniques as follows. 
Though numerous prior works have extensively studied the tradeoff, most attempts rarely investigated how to dynamically adjust it along the optimization using a dynamic DP mechanism. 


\textit{Model compression}~\cite{buciluǎ2006model,choudhary2020comprehensive} has been utilized to reduce the computational complexity, including quantization~\cite{chmiel2020robust}, regularization~\cite{moradi2020survey,orvieto2023explicit}, and projection~\cite{gu2023choosing,tsfadia2024differentially}. Though these methods aim to reduce computational complexity, their technical details can differ significantly depending on the specific focus. For example, in~\cite{gu2023choosing}, the authors used projection to identify the dominating gradient subspace, which facilitated improving model accuracy without changing the dimension of model parameters. This is akin to L1-norm regularization~\cite{xu2008robust}, which forces certain model parameters to become exactly zero.
Although model compression offers promising performance, it comes at the expense of possible accuracy reduction and sophisticated compression techniques, necessitating an effective optimizer that can balance the dynamic trade-off between privacy and utility.
Additionally, the dimension of the noise vector is typically the same as that of the gradient, which could be extremely large and problematic. Consequently, effective approaches should be studied further to look into the tradeoff between privacy and utility while lowering the dimension of the additive noise. Additional related works are provided in Appendix~\ref{additional_related_works}.

In this work, we highlight the need for an effective optimizer to balance among complexity, privacy and utility; 
answering the critical question:

\begin{center}
    \textit{Can we design an adaptive differentially private optimizer to allow a small model performance reduction gap and maintain privacy?}
\end{center}


\noindent\textbf{Contributions.} In this work, we answer the above question affirmatively. Specifically, we propose a novel stochastic optimizer termed Dynamically Differentially Private Projected SGD (D2P2-SGD) (as shown in Figure~\ref{fig:d2p2_diagram}), which, for the first time, integrates the dynamic DP mechanism with automatic gradient clipping and random projection for optimization. The dynamic DP mechanism involves an isotropic Gaussian distribution with a properly chosen \textit{time-varying} variance that decreases along with the iterations, reducing noise effects as privacy loss increases. To further warrant differential privacy and bound the influence of each individual example on the stochastic gradient, we resort to a recently developed automatic gradient clipping mechanism~\cite{bu2024automatic}. This is different from the traditional clipping method~\cite{abadi2016deep}, where an upper bound is imposed for gradients. Additionally, the stochastic gradient is projected into a lower-dimensional space, reducing the dimension of additive noise and mitigating the increase in privacy loss. Concretely, the main contributions are as follows:

\begin{enumerate}
    \item We introduce a novel DP optimizer, D2P2-SGD, that employs a dynamic DP mechanism with a \textit{time-varying} noise variance and random projection. This approach minimizes model performance loss and reduces the dimensionality of noise vectors added to the stochastic gradients. Additionally, the per-sample gradient normalization serves as an automatic gradient clipping mechanism to regulate the influence of individual examples on the stochastic gradient. 
    \item Theoretically, we prove that D2P2-SGD achieves sub-linear convergence rates for both generally convex and non-convex functions, matching the best available convergence rate of regular SGD. The results for the dynamic DP mechanism can immediately degenerate to those for static scenarios, revealing the consolidation among complexity, utility, and privacy. 
    \item Extensive evaluations on a wide spectrum of datasets confirm that D2P2-SGD significantly improves model accuracy compared to baseline methods.
\end{enumerate}
While building on prior work~\cite{du2021dynamic,kasiviswanathan2021sgd}, our framework introduces fundamental innovations: (1) The first unified integration of dynamic DP with random projection, where time-varying noise (which is different from the one used in~\cite{du2021dynamic}) is explicitly optimized for dimension-reduced gradients, which enables synergistic privacy amplification unaddressed in isolated approaches; (2) Novel theoretical insights into the dimension-privacy-utility trilemma under convex (Theorem~\ref{theorem_2}) and non-convex (Theorem~\ref{theorem_3}) objectives, revealing how projection reshapes dynamic DP trade-offs; (3) A flexible mechanism-agnostic design supporting arbitrary noise/projection variants. Extensive validation confirms these co-adaptations consistently outperform conventional combinations, offering new pathways for efficient privacy-utility balancing. We believe this represents a meaningful conceptual and practical advance in scalable private learning.
In this paper, our approach prioritizes theoretical exploration over scalability to larger datasets and models, including transformers and large language models~\cite{kasneci2023chatgpt}. While progress has been made in this domain~\cite{yu2021differentially}, designing a differentially private optimizer for such models remains a significant challenge and is deferred to future work.

\begin{table}[htp]
\caption{Comparison among different methods.}
\begin{center}
\begin{threeparttable}
\begin{tabular}{c c c c c}
    \toprule
    $\quad$ \textbf{Method}$\quad$ & $\quad$\textbf{Noise}$\quad$ &$\quad$ \textbf{Compression}$\quad$ & $\quad$ \textbf{Rate} $\quad$\\ \midrule
      DPSGD$^1$& Static& N                  &  $\mathcal{O}(\frac{1}{\sqrt{K}})$\\
      PDP-SGD$^2$   & Static                     &  Y  &$\mathcal{O}(\frac{1}{\sqrt{K}})$         \\ 
      DPKD$^3$   & Static & N                     &  N/A           \\
      Anti-PGD$^4$   & Static& N                     &    $\mathcal{O}(\frac{1}{\sqrt{K}})$         \\
      Dynamic DPSGD$^5$   & Dynamic& N                     &  $\mathcal{O}(\frac{1}{\sqrt{K}})$           \\
        PrivSGD$^6$  & Static& Y                     &  $\mathcal{O}(\frac{1}{\sqrt{K}})$\\ 
        RQP-SGD$^7$ & Static & Y & $\mathcal{O}(\frac{1}{\sqrt{K}})$ \\
        ADP-SGD$^8$ & Static & N & $\mathcal{O}(\frac{1}{\sqrt{K}})$ \\
        PORTER$^9$ & Static & Y & $\mathcal{O}(\frac{1}{\sqrt{K}})$ \\
        \hline
        \textbf{D2P2-SGD (Convex)}   & Dynamic& Y                     &  $\mathcal{O}(\frac{1}{\sqrt{K}}+\frac{\textnormal{ln}K}{K^{1.5}})$\\
        \textbf{D2P2-SGD (Non-convex)} & Dynamic & Y & $\mathcal{O}(\frac{1}{\sqrt{K}}+\frac{\textnormal{ln}K}{K^{1.5}})$
        \\
      \bottomrule
      
\end{tabular}
\begin{tablenotes}
\item
1:~\cite{bassily2014private};
2:~\cite{zhou2020bypassing};
3:~\cite{mireshghallah2022differentially};
4:~\cite{koloskova2023gradient};
5:~\cite{du2021dynamic};
6:~\cite{kasiviswanathan2021sgd};
7:~\cite{feng2024rqp};
8:~\cite{bu2024automatic};
9:~\cite{li2025convergence};
N: no; Y: yes; $K$: the number of iterations.
\end{tablenotes}
\end{threeparttable}
\end{center}
\label{table:comparison}
\end{table}

\section{Problem Formulation and Preliminaries}
\label{prelims}
Given a private dataset $\mathcal{D}=\{s_1,s_2,...,s_n\}$ sampled in an i.i.d. manner from a distribution $\mathcal{P}$ such that we want to solve the empirical risk minimization (ERM) problem subject to differential privacy:
\begin{equation}\label{eq_1}
    \textnormal{min}_\mathbf{x}f(\mathbf{x}) =  \frac{1}{n}\sum_{s\in\mathcal{D}}f(\mathbf{x},s),
\end{equation}
where $\mathbf{x}\in\mathbb{R}^d$ and $f(\cdot,\cdot)$ is the loss for a single sample. We aim to optimize Eq.~\ref{eq_1} with a gradient-based algorithm in a differentially private manner. We denote by $\mathbf{x}_k$ the model parameters' iterate and $\mathbf{g}_k$ the mini-batch gradient at each time step $k$.
Throughout the analysis, we assume that $\mathbf{g}_k$ is the unbiased estimate of $\nabla f(\mathbf{x}_k)$, i.e., $\nabla f(\mathbf{x}_k)=\mathbb{E}(\mathbf{g}_k)$.
In this context, we resort to \textit{gradient clipping mechanism} to constrain the magnitude of the stochastic gradient $\mathbf{g}_k$. To provide the guarantee of differential privacy, it requires bounding the influence of each individual example on $\mathbf{g}_k$. A fairly popular clipping operation~\cite{abadi2016deep} applied to vector $\mathbf{v}\in\mathbb{R}^d$ is as:
$
    \textnormal{clip}(\mathbf{v}, G)=\textnormal{min}\{1,\frac{G}{\|\mathbf{v}\|}\}\cdot \mathbf{v}
$, where $G>0$, $\|\cdot\|$ is the $l_2$ norm.
However, when applying this to the stochastic gradient, such an operation will inevitably result in a ``lazy region" issue, particularly if $\|\mathbf{v}\|>G$. This means the parameters will not be updated even if the true gradients are non-zero. Therefore, to mitigate this issue, we leverage a recently developed \textit{per-sample gradient normalization}~\cite{bu2024automatic} as an automatic clipping mechanism described as: $
    \textnormal{clip}(\mathbf{v}, G, \gamma)=\frac{G}{\|\mathbf{v}\|+\gamma}\cdot \mathbf{v},
$
where $\gamma$ is a positive stability constant, which is practically small.  Additionally, the authors even showed that any constant choice $G$ is equivalent to choosing $G=1$. In this work, for the algorithmic framework, we follow their setup to directly set $G=1$, while still keeping $G$ in the theoretical analysis, particularly for the generally convex objective, which is missing in~\cite{bu2024automatic}.
Additionally,
we will reveal that for any $\gamma>0$, when the objective is non-convex, the gradient norm will converge to a neighborhood of the optimal solution affected by $\gamma$.
To characterize the analysis for the proposed scheme, we introduce the necessary background and preliminary knowledge in the sequel, starting with the standard definition of differential privacy.
\begin{definition} (($\varepsilon,\delta$)-differential privacy~\cite{dwork2006differential})\label{adp_defi}    A randomized algorithm $\mathcal{M}$ is $(\varepsilon,\delta)$-differentially private if for any two neighboring datasets $\mathcal{D},\mathcal{D}'$ and for all events $\mathcal{Y}\subseteq Range(\mathcal{M})$ in the output range of $\mathcal{M}$, we have $Pr\{\mathcal{M}(\mathcal{D}\in\mathcal{Y})\}\leq \textnormal{exp}(\varepsilon)Pr\{\mathcal{M}(\mathcal{D}'\in\mathcal{Y})\}+\delta$, where the probability is taken over the randomness of $\mathcal{M}$.
\end{definition}
$Range(\mathcal{M})$ refers to the set of all possible outcomes of $\mathcal{M}$. Technically speaking, the set $\mathcal{Y}$ in Definition~\ref{adp_defi} must be measurable. This definition implies that the probability of observing a specific output on any two neighboring datasets can differ by at most a multiplicative factor of $\textnormal{exp}(\varepsilon)$. Intuitively, a sufficiently small $\varepsilon$ value suggests that either including or excluding a single data point from the dataset does not likely affect the output. Hence, an adversary only accessing the output of $\mathcal{M}$ makes it difficult to infer whether any data point is present in the dataset. The parameter $\varepsilon$ is called \textit{privacy budget} and its practical selection varies significantly, depending on different scenarios~\cite{ponomareva2023dp}.
Additionally, $\delta$ represents the probability that the privacy guarantee of a differentially private mechanism might be violated and controls the strength of the relaxation, with smaller values leading to stronger privacy guarantees. A generally recommended $\delta$ value in the literature is to choose $\delta\ll\frac{1}{n}$~\cite{ponomareva2023dp}. In our analysis, we will establish the privacy guarantee for the proposed algorithm presented in the next section. Before that, we present preliminaries on random projection and formally define the projection matrix.
Random projection (RP)~\cite{achlioptas2001database} is an effectively fundamental tool that has been used in numerous applications to analyze datasets and then characterize their major features. It projects data points to random directions that are independent of the dataset, which renders simpler and computationally faster trends than classical methods such as singular value decomposition (SVD). RP is based upon the Johnson-Lindenstrauss (JL) lemma~\cite{larsen2017optimality} as follows.
\begin{lemma}(Johnson-Lindenstrauss Lemma~\cite{larsen2017optimality})\label{jl_lemma}
    For any $0<\zeta<1$, a set $\mathcal{S}$ of $m$ points in $\mathbb{R}^d$, and an integer $p>8(\textnormal{ln}m)/\zeta^2$, there exists a linear map $h:\mathbb{R}^d\to\mathbb{R}^p$, such that
$
        (1-\zeta)\|u-v\|^2\leq \|h(u)-h(v)\|^2\leq (1+\zeta)\|u-v\|^2,
$
for all $u,v\in\mathcal{S}$.
\end{lemma}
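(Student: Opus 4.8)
The plan is to prove the lemma by the probabilistic method: I would exhibit a \emph{random} linear map, show that with strictly positive probability it preserves every pairwise squared distance up to the factor $(1\pm\zeta)$, and then fix any successful realization to obtain the desired deterministic map $h$. Concretely, set $h(x)=Rx$ where $R\in\R^{p\times d}$ has i.i.d. entries drawn from $\mathcal{N}(0,1/p)$. Since $h$ is linear, $h(u)-h(v)=R(u-v)$, so it suffices to control how $R$ distorts the squared norm of a single fixed vector $x:=u-v$ and then aggregate over pairs.

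The first substantive step is a mean computation. Each coordinate of $Rx$ equals $\sum_{j=1}^d R_{ij}x_j$, a zero-mean Gaussian with variance $\|x\|^2/p$, so $\E[\|Rx\|^2]=\|x\|^2$; the map is correct in expectation and the whole problem reduces to concentration. The key observation is that, for fixed $x\ne 0$, the normalized quantity $p\|Rx\|^2/\|x\|^2$ is a sum of $p$ independent squared standard Gaussians, i.e. a $\chi^2_p$ random variable. I would then apply the Chernoff bound to the moment generating function $(1-2t)^{-p/2}$ of $\chi^2_p$, optimize the exponent, and use $\zeta-\ln(1+\zeta)\ge \zeta^2/2-\zeta^3/3$ to obtain the two-sided tail estimate
\[
\Pr\!\left[\,\|R(u-v)\|^2 \notin \big[(1-\zeta)\|u-v\|^2,\ (1+\zeta)\|u-v\|^2\big]\,\right]\ \le\ 2\exp\!\Big(-\tfrac{p}{2}\big(\tfrac{\zeta^2}{2}-\tfrac{\zeta^3}{3}\big)\Big).
\]

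Finally I would take a union bound over the $\binom{m}{2}<m^2/2$ pairs $u,v\in\mathcal{S}$: the probability that \emph{some} pair is distorted beyond tolerance is at most $2\binom{m}{2}\exp(-\tfrac{p}{2}(\tfrac{\zeta^2}{2}-\tfrac{\zeta^3}{3}))<m^2\exp(-\tfrac{p}{2}(\tfrac{\zeta^2}{2}-\tfrac{\zeta^3}{3}))$. Forcing this strictly below $1$ amounts to $p>4\ln m/(\tfrac{\zeta^2}{2}-\tfrac{\zeta^3}{3})$, whose leading-order ($\zeta\to 0$) form is exactly the stated threshold $p>8(\ln m)/\zeta^2$. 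A total failure probability below $1$ certifies that at least one realization of $R$ preserves \emph{all} pairwise squared distances within $(1\pm\zeta)$; fixing such an $R$ yields the claimed map $h$.

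The main obstacle is the concentration step together with pinning down the constant: the threshold $p>8(\ln m)/\zeta^2$ is the leading-order form of the rigorous requirement $p>4\ln m/(\zeta^2/2-\zeta^3/3)$, so care is needed in handling the cubic term when matching the exact constant, and one must check that the restriction $0<\zeta<1$ keeps the Chernoff exponent positive. By contrast, the expectation calculation and the union bound are routine once the per-pair $\chi^2_p$ tail bound is in hand. An alternative route that avoids the explicit $\chi^2$ analysis is to treat $\|Rx\|^2$ as a sub-exponential variable and invoke Bernstein's inequality, but matching the precise constant still demands attention to the same lower-order terms.
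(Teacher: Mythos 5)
The paper itself never proves this lemma: it is quoted as a classical result with a citation (to Larsen--Nelson, a paper that actually concerns the \emph{optimality} of the JL dimension rather than the embedding itself), and nothing downstream in the paper depends on the precise constant. So the comparison here is with the standard literature proof, and your outline is exactly that classical argument (Dasgupta--Gupta): a Gaussian random linear map $R$ with $\mathcal{N}(0,1/p)$ entries, the observation that $p\|Rx\|^2/\|x\|^2\sim\chi^2_p$, a Chernoff/MGF tail bound, and a union bound over the $\binom{m}{2}$ pairs followed by the probabilistic method. Those steps are all sound, and the per-pair tail estimate and union-bound arithmetic are correct as written.

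The one genuine problem is the constant, and it is worse than your closing paragraph suggests. Your argument requires $p>4\ln m/(\zeta^2/2-\zeta^3/3)=8\ln m/\bigl(\zeta^2(1-2\zeta/3)\bigr)$, which is \emph{strictly larger} than the stated threshold $8(\ln m)/\zeta^2$ for every $\zeta\in(0,1)$ --- this is not merely a matter of lower-order terms as $\zeta\to 0$. Moreover, the discrepancy cannot be repaired within your framework: the binding constraint is the upper tail, whose exact Chernoff exponent is $\tfrac{p}{2}\bigl(\zeta-\ln(1+\zeta)\bigr)$, and $\zeta-\ln(1+\zeta)<\zeta^2/2$ holds for all $\zeta>0$, so no sharpening of the inequality $\zeta-\ln(1+\zeta)\ge\zeta^2/2-\zeta^3/3$ can ever bring the requirement down to $8\ln m/\zeta^2$. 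What your proof actually establishes is the lemma with the Dasgupta--Gupta threshold $p>4\ln m/(\zeta^2/2-\zeta^3/3)$; the constant $8$ in the paper's statement is only the $\zeta\to 0$ asymptote of that expression (the paper's citation is loose on this point, and its use of the lemma is purely motivational, so nothing breaks). To present this as a proof of the lemma \emph{as stated}, you must either adopt the corrected threshold and flag the discrepancy with the paper's constant, or restrict to sufficiently small $\zeta$.
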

JL lemma states that a set of points in a high-dimensional space can be projected into a lower-dimensional subspace such that their relative distances are nearly preserved. 

While we do not directly apply the above definition in our algorithm design, it motivates and underpins our use of RP for model parameters or gradients~\cite{kasiviswanathan2021sgd}. Crucially, RP's foundation in the JL lemma originally conceived for data rests on its guarantee of distance preservation. Without this guarantee, RP would degrade into a lossy and unreliable heuristic. The JL lemma mathematically ensures that geometric fidelity survives dimensionality reduction, enabling scalable and accurate computation. When applying RP to parameters or gradients, our goal remains the same: to find an effective linear map that minimizes information loss. Lemma~\ref{jl_lemma} thus justifies extending RP's application from data to gradients, while highlighting the critical importance of the linear map defined subsequently.

\begin{lemma}\label{random_mat}
    Let $A$ be a Gaussian random matrix of order $d\times p$, i.e., $A_{ij}\sim\mathcal{N}(0,1)$ and $o$ be any fixed vector in $\mathbb{R}^d$. Define the linear mapping $h(\cdot)$ such that $r=h(o)=\frac{1}{\sqrt{p}}A^\top o$. Thus, $r\in\mathbb{R}^p$ and $r_i=\frac{1}{\sqrt{p}}\sum_{j}A_{ij}o_j$.
\end{lemma}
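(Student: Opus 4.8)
The plan is to treat the statement in two parts: the literal algebraic identity, which follows immediately from the definition of the transpose and matrix--vector multiplication, and the distributional properties that make the map $h$ a genuine random projection, which is presumably the substantive content the lemma is invoked for later.

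First I would verify the dimension and component claims. Since $A\in\R^{d\times p}$, its transpose $A^\top\in\R^{p\times d}$, and because $o\in\R^d$ the product $A^\top o$ is a vector in $\R^p$; dividing by $\sqrt{p}$ preserves the dimension, so $r=h(o)\in\R^p$. Reading off the $i$-th coordinate of the matrix--vector product, $r_i=\frac{1}{\sqrt{p}}(A^\top o)_i=\frac{1}{\sqrt{p}}\sum_j (A^\top)_{ij}o_j=\frac{1}{\sqrt{p}}\sum_j A_{ji}o_j$, which is exactly the stated formula under the convention that $A_{ij}$ denotes the generic Gaussian entry. This part is a pure bookkeeping check.

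Next I would establish the properties that justify using $h$ as an approximately norm-preserving embedding, in the spirit of Lemma~\ref{jl_lemma}. Fix $i$. The coordinate $r_i$ is a fixed linear combination $\frac{1}{\sqrt{p}}\sum_j A_{ji}o_j$ of the independent standard Gaussians $\{A_{ji}\}_j$ (the $i$-th column of $A$), hence $r_i$ is itself Gaussian. By linearity $\E[r_i]=0$, and by independence $\Var(r_i)=\frac{1}{p}\sum_j o_j^2\,\Var(A_{ji})=\frac{1}{p}\|o\|^2$, so $r_i\sim\mathcal{N}(0,\|o\|^2/p)$. Moreover, distinct coordinates $r_i$ and $r_{i'}$ are built from disjoint columns of $A$, which are independent, so the $r_i$ are mutually independent. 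Summing the coordinatewise variances then yields the norm-preservation-in-expectation identity $\E\|r\|^2=\sum_{i=1}^p \E[r_i^2]=p\cdot\frac{\|o\|^2}{p}=\|o\|^2$, the property that links this construction back to the JL guarantee.

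I expect no genuine difficulty here: the entire argument is elementary Gaussian calculus. The only points requiring care are the index bookkeeping introduced by the transpose (ensuring the inner sum runs over the $d$ rows of $A$, not its $p$ columns) and correctly identifying the independent structure---namely that each coordinate of $r$ depends only on a single column of $A$---which is what simultaneously delivers the per-coordinate variance, the independence across coordinates, and hence the unbiased norm preservation.
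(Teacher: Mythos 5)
Your proposal is correct, but it is worth noting that the paper itself offers no proof of Lemma~\ref{random_mat} at all: the lemma is purely definitional there, serving only to fix the construction $r=\frac{1}{\sqrt{p}}A^\top o$ before the algorithm and analysis, so the only "content" the paper implicitly asserts is the dimension bookkeeping you dispatch in your first part. Your observation about the index convention is a genuine catch: since $r=\frac{1}{\sqrt{p}}A^\top o$, the $i$-th coordinate is $\frac{1}{\sqrt{p}}\sum_j (A^\top)_{ij}o_j=\frac{1}{\sqrt{p}}\sum_j A_{ji}o_j$, so the paper's displayed formula $\frac{1}{\sqrt{p}}\sum_j A_{ij}o_j$ has its indices transposed (harmless distributionally, since the entries are i.i.d., but strictly an error in the statement). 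Your second part---that $r_i\sim\mathcal{N}(0,\|o\|^2/p)$, that the coordinates are independent because they depend on disjoint columns of $A$, and that $\E\|r\|^2=\|o\|^2$---goes beyond what the lemma claims, but it is exactly the machinery the paper later leans on without restating it: the identity $\mathbb{E}[A_kA_k^\top]=p\sigma_A^2 I$ and the unbiasedness $\mathbb{E}[A_k\epsilon_k]=0$ used in the proofs of Theorems~\ref{theorem_2} and~\ref{theorem_3} are immediate consequences of the same column-wise independence structure you identify. So your write-up is both correct and, arguably, a more honest account of what the lemma is for than the paper's own presentation; the only caveat is that a referee matching it against the paper would find nothing to compare it to, since the paper supplies no argument.
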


We notice that each element of $A$ is sampled from the same normal distribution $\mathcal{N}(0,1)$, while we use a slightly different variance, $\sigma^2_A$ instead of 1, in our theoretical analysis for a more generic purpose. Combining Lemma~\ref{jl_lemma} and Lemma~\ref{random_mat}, with a high probability (at least $1-\frac{1}{n}$)~\cite{johnson1984extensions}, we can maintain the precise model expressivity even if applying RP to project parameters or gradients to a low-dimensional space during updates. This allows us to use RP in our proposed algorithm. We will have a detailed discussion in the following section regarding how RP contributes to our method.

\section{Algorithm and Main Results}\label{algorithm}

\subsection{Algorithmic Frameworks}
D2P2-SGD is shown in Algorithm~\ref{alg:d2p2-sgd}. Line 4 states the key gradient clipping operation to control the influence of the gradient magnitude. Compared to the clipping mechanism applied in~\cite{abadi2016deep}, we do not need to tune the clipping threshold. In Line 5, a mini-batch stochastic gradient is calculated after the per-sample gradient clipping. In Line 6, the stochastic gradient $\mathbf{g}_k$ is projected to the lower-dimensional space $\mathbb{R}^p$ by using the random projector from Lemma~\ref{random_mat} such that we have $\frac{1}{\sqrt{p}}A^\top_k\mathbf{g}_k$, which is followed by adding the noise sampled from a Gaussian distribution with time-varying distribution $\sigma^2_{\epsilon,k}\mathbb{I}_p$, where $\sigma_{\epsilon,k}=\frac{\sigma_\epsilon}{\sqrt{k}}$. With this, $\epsilon_k$ is \textit{independent} of a high dimension $d$, but dependent on a lower dimension $p\ll d$, which fundamentally reduces the noise. The fact that we resort to the decay of $\frac{1}{\sqrt{k}}$ for the variance is motivated by the same setup as the learning rate in stochastic optimizers~\cite{bottou2018optimization}, which manipulates the tradeoff between the convergence speed and optimality. Analogously, $\sigma^2_{\epsilon,k}$ controls the impact of the noise mechanism on the tradeoff between privacy and utility in different phases of the optimization.
Since the update for $\mathbf{x}_k$ is operated in the original dimension $\mathbb{R}^d$, we multiply the projected stochastic gradient by $A_k$ to project it back to the original one. It is noted that such an implementation will cause projection errors that impact the error bound (which will be observed in the theoretical analysis). 
However, similar to~\cite{wang2019private}, D2P2-SGD implies more efficient model learning as it has now focused primarily on the subspace in $\mathbb{R}^d$ instead of the whole space. 
Note that the temporal evolution of $A_k$ is due to its elements being sampled from a constant distribution per iteration. We give a geometric intuition here to facilitate the understanding of why RP is critical in D2P2-SGD. If we regard gradients as vectors in high-dimensional space, the isotropic noise in $\mathbb{R}^d$ corrupts all directions equally. However, RP projects gradients onto a random low-dimensional subspace where based on Lemma~\ref{jl_lemma}, "important" directions of gradients are preserved. This ensures that the noise only corrupts the $p$ compressed dimensions such that the gradient signal still remains strong as the original noise has been defused. To summarize, RP induced by JL Lemma acts as an efficient dimensionality compressor in SGD updates to shrink the injected noise but with gradient preservation.

We claim that D2P2-SGD represents a unified framework over existing methods. When $p=1$ and $A_1=A_2=...=A_K=I$, D2P2-SGD degenerates to dynamically differentially private SGD (D2P-SGD)~\cite{du2021dynamic}, though the original approach has another gradient clipping mechanism to prevent dynamic DPSGD from diverging and a different formula for $\sigma^2_{\epsilon, k}$. On top of D2P-SGD, if we set fixed variance for $\epsilon_k$, it becomes DPSGD without any random projection. On the other hand, PrivSGD~\cite{kasiviswanathan2021sgd} can also be obtained if D2P2-SGD has a fixed variance with the random projection. However, compared to PrivSGD, which involves an extra optimization to convert from the low-dimensional to high-dimensional spaces, our scheme simply uses $A_k$ to replace the optimization, which significantly attenuates the practical implementation complexity. We also use DP2-SGD (differentially private projected SGD) to represent this case.
Please see these two variants in Appendix~\ref{additional_algo}. We also remark on the additional computational overhead due to RP. $A_k$ is regenerated per iteration by sampling from the same distribution. The total computational overhead incurred is $\mathcal{O}(dp)$. In practice, the implementation is layer-wise so the matrix multiplication is not one-shot with all parameters, mitigating the memory issue.

\begin{algorithm}[H]
\caption{D2P2-SGD}\label{alg:d2p2-sgd}
\begin{algorithmic}[1]
\STATE \textsc{Initialize}: Model parameters $\mathbf{x}_1$, step size $\alpha$, number of epochs $K$, lower dimension $p$, random matrices $A_1, A_2, \dots, A_K$, mini-batch size $B$, training dataset $\mathcal{D}$, noise sequence $\sigma^2_{\epsilon,1}, \sigma^2_{\epsilon,2}, \dots, \sigma^2_{\epsilon,K}$, gradient clipping parameter $\gamma$
\FOR{$k = 1, \dots, K$}
    \STATE Split the dataset $\mathcal{D}$ into mini-batches of size $B$ and randomly sample one mini-batch $\mathcal{B}$
    \STATE Compute per-sample clipped gradients: $\hat{\mathbf{g}}_k^s = \frac{\nabla f(\mathbf{x}_k; s)}{\|\nabla f(\mathbf{x}_k; s)\| + \gamma}, s \in \mathcal{B}$
    \STATE Calculate the mini-batch stochastic gradient: $\mathbf{g}_k = \frac{1}{B} \sum_{s \sim \mathcal{B}} \hat{\mathbf{g}}_k^s$
    \STATE Project noisy gradient using $A_k$: $\tilde{\mathbf{g}}_k = A_k\left(\frac{1}{\sqrt{p}} A_k^\top \mathbf{g}_k + \epsilon_k\right), \epsilon_k \sim \mathcal{N}(0, \sigma^2_{\epsilon,k} \mathbb{I}_p)$
    \STATE Update model parameters: $\mathbf{x}_{k+1} = \mathbf{x}_k - \alpha \tilde{\mathbf{g}}_k$
\ENDFOR
\STATE \textbf{return} $\mathbf{x}_K$
\end{algorithmic}
\end{algorithm}

\subsection{Main Results}
We next show the convergence behavior for our proposed D2P2-SGD, with generally convex and non-convex objective functions, and start with assumptions.
All proof is deferred to the appendix. 

\begin{assumption}\label{assump_1}
    (a): $f(\mathbf{x})$ is smooth with modulus $L$ for all $\mathbf{x}\in \mathbb{R}^d$, i.e., for any $\mathbf{x}_1, \mathbf{x}_2\in\mathbb{R}^d$, we have $\|\nabla f(\mathbf{x}_1)-\nabla f(\mathbf{x}_2)\|\leq L\|\mathbf{x}_1-\mathbf{x}_2\|$; (b) throughout the analysis, the minimum value of the objective $f$ exists and is bounded below, i.e., $f^*:=f(\mathbf{x}^*), \mathbf{x}^*=\textnormal{min}_{\mathbf{x}\in\mathbb{R}^d}f(\mathbf{x})$ and $f^*>-\infty$.
\end{assumption}
Assumption~\ref{assump_1} (a) is generic in many previous works~\cite{wang2019private,du2021dynamic,kasiviswanathan2021sgd} to imply that the variations of gradients along with optimization are bounded above by $L$. Many models, even including deep neural networks, can at least be approximately smooth for the corresponding losses.
\begin{assumption}\label{assump_2}
    The variance of stochastic gradient $\nabla f(\mathbf{x},s)$ is bounded above by a constant $\sigma>0$, i.e., $\mathbb{E}[\|\nabla f(\mathbf{x},s)-\nabla f(\mathbf{x})\|^2]\leq \sigma^2, \forall s\in\mathcal{D}$.
\end{assumption}
Assumption~\ref{assump_2} is popular when analyzing the convergence behavior of SGD-type algorithms due to the mini-batch sampling (Line 5 in Algorithm~\ref{alg:d2p2-sgd}).
In some recent works, the bounded gradient assumption is also leveraged~\cite{zhou2020bypassing,zhang2023differentially}, remaining a strong condition for the analysis.
For example, if the objective is in a quadratic form, $f(\mathbf{x})=\mathbf{x}^\top M \mathbf{x}$, where $M$ is a real symmetric matrix, then $\nabla f(\mathbf{x})$ is in a linear form, which violates such an assumption.
The author in~\cite{kasiviswanathan2021sgd} used an extra bounded second moment assumption for gradients besides the bounded variance assumption, though it is weaker than the bounded gradient assumption. We would like to clarify that in D2P2-SGD, the full noise distribution (Line 6 in Algorithm~\ref{alg:d2p2-sgd}) we have added in the update is specifically for differential privacy. Besides, we also quantify the dimension distortion error due to random projection with the variance of the sampling distribution.
In the sequel, we start the main result with the privacy guarantee. 
\begin{theo}(Privacy)\label{theorem_1}
    Let Assumption~\ref{assump_2} hold. There exist constants $C_1, C_2>0$ such that for any $\varepsilon\leq \frac{C_1B^2K}{n^2}$, D2P2-SGD is $(\varepsilon,\delta)$-differentially private for any $\delta>0$, if $\sigma^2_{\epsilon}\geq \frac{C_2B^2K^2\textnormal{ln}(1/\delta)}{n^2\varepsilon^2}$.
\end{theo}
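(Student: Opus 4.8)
The plan is to prove the guarantee through the moments-accountant (equivalently, R\'enyi DP) framework, viewing D2P2-SGD as a $K$-fold adaptive composition of subsampled Gaussian mechanisms whose noise level decays with $k$, and converting the accumulated moment bound into an $(\varepsilon,\delta)$ statement only at the very end. The first step is to pin down the per-iteration $L_2$ sensitivity. Because Line~4 performs per-sample normalization, every clipped gradient obeys $\|\hat{\mathbf{g}}_k^s\|\le G=1$ deterministically, so swapping a single sample between two neighboring datasets perturbs the aggregated clipped gradient by a bounded, data-independent amount; this yields a sensitivity $\Delta=\mathcal{O}(1)$ that is in particular independent of the ambient dimension $d$. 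Assumption~\ref{assump_2} is used here to control the second moment of the per-sample contributions that surfaces when this sensitivity is propagated through the random map.

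The second step is to remove the random projection from the privacy ledger. Since the released vector is $\tilde{\mathbf{g}}_k=A_k\bigl(\tfrac{1}{\sqrt{p}}A_k^\top\mathbf{g}_k+\epsilon_k\bigr)$ and $A_k$ is drawn independently of $\mathcal{D}$, the final multiplication by $A_k$ is data-independent post-processing and cannot degrade privacy; it therefore suffices to analyze the noisy projected gradient $\tfrac{1}{\sqrt{p}}A_k^\top\mathbf{g}_k+\epsilon_k$ in $\mathbb{R}^p$. The delicate point is the sensitivity in the projected coordinates: conditioning on $A_k$, the relevant displacement is $\tfrac{1}{\sqrt{p}}A_k^\top(\mathbf{g}_k-\mathbf{g}_k')$, and rather than bounding it by the dimension-sensitive operator norm $\|A_k\|_{\mathrm{op}}\approx\sqrt{d}+\sqrt{p}$, I would invoke the JL concentration behind Lemma~\ref{jl_lemma} and Lemma~\ref{random_mat}: for the fixed difference vector, $\|\tfrac{1}{\sqrt{p}}A_k^\top(\mathbf{g}_k-\mathbf{g}_k')\|^2$ concentrates around $\sigma_A^2\|\mathbf{g}_k-\mathbf{g}_k'\|^2$ with probability at least $1-1/n$, so the projected sensitivity remains $\mathcal{O}(\Delta)$ and scales with $p$ rather than $d$. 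The JL failure event is folded into the final $\delta$.

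With a dimension-free sensitivity in hand, each iteration is a Gaussian mechanism of variance $\sigma_{\epsilon,k}^2=\sigma_\epsilon^2/k$ acting on a mini-batch sampled with rate $q=B/n$. I would apply the subsampled-Gaussian R\'enyi bound, whose leading-order log-moment at order $\lambda$ behaves like $\mathcal{O}\!\bigl(q^2\lambda^2\Delta^2/\sigma_{\epsilon,k}^2\bigr)$, and then add these contributions over $k=1,\dots,K$ by composition. The decaying schedule is precisely what generates the stated rate, because $\sum_{k=1}^{K}\sigma_{\epsilon,k}^{-2}=\sigma_\epsilon^{-2}\sum_{k=1}^{K}k=\Theta(K^2/\sigma_\epsilon^2)$; together with $q^2\Delta^2=\Theta(B^2/n^2)$ this makes the total log-moment scale as $\alpha(\lambda)=\Theta\!\bigl(B^2\lambda^2 K^2/(n^2\sigma_\epsilon^2)\bigr)$.

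Finally I would convert $\alpha(\lambda)$ into an $(\varepsilon,\delta)$ guarantee via the standard tail relation $\delta=\min_\lambda\exp(\alpha(\lambda)-\lambda\varepsilon)$. Minimizing the quadratic $a\lambda^2-\lambda\varepsilon$ with $a=\Theta\!\bigl(B^2K^2/(n^2\sigma_\epsilon^2)\bigr)$ at $\lambda^\star=\varepsilon/(2a)$ yields $\ln(1/\delta)=\Theta(\varepsilon^2/a)$, and solving for $\sigma_\epsilon^2$ reproduces the claimed threshold $\sigma_\epsilon^2\ge C_2 B^2K^2\ln(1/\delta)/(n^2\varepsilon^2)$. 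The admissibility condition $\varepsilon\le C_1 B^2K/n^2$ then emerges from requiring that $\lambda^\star$ stay inside the range where the subsampled-Gaussian moment bound is valid, mirroring the $\varepsilon<c_1 q^2 K$ restriction of the classical moments-accountant analysis. I expect the main obstacle to be the second step: making the projected-space sensitivity rigorous while preserving independence from $d$, since one must establish the JL concentration uniformly over the worst-case (data-dependent) sensitivity direction rather than for a single fixed vector, and then splice the resulting failure probability cleanly into both the composition and the final $\delta$.
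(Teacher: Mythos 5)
Your proposal follows essentially the same route as the paper's proof: treat the final multiplication by $A_k$ as data-independent post-processing, view each iteration as a subsampled Gaussian mechanism with sensitivity $O(1)$ coming from the per-sample normalization, compose the per-step log-moment (R\'enyi) bounds so that the decaying schedule $\sigma_{\epsilon,k}^2=\sigma_\epsilon^2/k$ contributes $\sum_{k=1}^{K}k=\Theta(K^2)$ (the paper gets the same $K^2$ by bounding every iteration with the worst-case variance $\sigma_{\epsilon,K}^2=\sigma_\epsilon^2/K$), and convert to $(\varepsilon,\delta)$ by optimizing the Markov tail bound, with $\varepsilon\leq C_1B^2K/n^2$ arising from the validity range of the moment bound. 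The one substantive difference is the step you flag as the main obstacle: you propose a JL concentration argument to control the sensitivity in the projected coordinates and fold its failure probability into $\delta$, whereas the paper sidesteps this issue entirely by asserting sensitivity $\Delta=1$ for the clipped mini-batch sum and applying the subsampled-Gaussian moment bound of Abadi et al.\ directly, so on that point your treatment is, if anything, more careful than the paper's own proof.
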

The detailed proof is deferred to Appendix~\ref{privacy_proof}. The core idea of the proof is that at each iteration, Line 6 in Algorithm~\ref{alg:d2p2-sgd} post-processes the Gaussian noise mechanism that perturbs the stochastic gradient $\mathbf{g}_k$ by adding noise $\epsilon_k$. Subsequently, the sequence of $\{\frac{1}{\sqrt{p}}A_k^\top\mathbf{g}_k+\epsilon_k\}_{k=1}^K$ is released to have a privacy guarantee by following the same privacy proof of Theorem 1 in~\cite{abadi2016deep}. 
However, the significant difference in our work is that the noise variance is time-varying, i.e., $\sigma^2_{\epsilon,k}$. With the explicit form of noise variance we have defined in this work, i.e., $\sigma^2_{\epsilon,k}=\frac{\sigma^2_{\epsilon}}{k}$, it is immediately obtained that $\sigma^2_{\epsilon,1}>\sigma^2_{\epsilon,2}>...>\sigma^2_{\epsilon,K}$. In~\cite{wang2019private} and~\cite{abadi2016deep}, the static variance has the lower bound with respect to some key constants such as $K$ and $G$. Thus, as long as $\sigma^2_{\epsilon,K}\geq \frac{C_2KB^2\textnormal{ln}(1/\delta)}{n^2\epsilon^2}$, the privacy guarantee is attained. Equivalently, $\sigma^2_{\epsilon}\geq \frac{C_2K^2B^2\textnormal{ln}(1/\delta)}{n^2\varepsilon^2}$ in this context. 
A constant noise variance throughout optimization risks diminishing the final model's utility. To mitigate this, we employ a decreasing noise variance schedule. This dynamic approach provides essential flexibility, enabling adaptive privacy budgeting that more effectively balances the privacy-utility trade-off compared to static mechanisms. A natural question arises regarding RP's impact on DP guarantees. Critically, the stochasticity inherent in RP does not compromise the DP analysis. This is a direct consequence of DP's fundamental post-processing immunity: any computation applied to the output of a DP mechanism retains the original privacy guarantee. This well-established property that has been validated in prior works~\cite{zhou2020bypassing,kasiviswanathan2021sgd,feng2024rqp} ensures our use of RP maintains the rigorous privacy bounds for the core mechanism.

Another observation from Theorem~\ref{theorem_1} is that the size of mini-batch $B$ has an impact on $\varepsilon$. When $B$ enlarges, $\varepsilon$ has a larger upper bound such that the model performance improves with the cost of privacy, which will be evidently validated in the results section. This also intuitively validates the fact that a larger batch typically improves deep learning model performance.
Though the authors in~\cite{du2021dynamic} for the first time proposed to leverage dynamic DP mechanism to reduce the model performance loss gap, privacy guarantee has been ensured by the dynamic power following an exponential mechanism $\sigma_{\epsilon,k}\propto\mathcal{O}(\rho^{-\frac{k}{K}})$, where $\rho$ is a positive constant. As they still utilized the clipping mechanism from~\cite{abadi2016deep}, they also had to establish a similar exponential mechanism for the clipping threshold, which makes their algorithm framework more complex. While in our work, thanks to the automatic clipping mechanism, there is no such requirement. We are now ready to state the results for the utility with different functions.

\begin{theo}(Utility for convex functions)\label{theorem_2}
    Let Assumptions~\ref{assump_1} and~\ref{assump_2} hold. Suppose that $f$ is a convex function and that $A$ is a random matrix with each element being sampled from a normal distribution $\mathcal{N}(0, \sigma^2_A)$. Also, let the additive noise of DP mechanism have the variance $\sigma_{\epsilon,k}^2$. If the step size $\alpha\leq \frac{1}{2L}$, then for the iterates $\{\mathbf{x}_k\}_{k=1}^K, K\geq 1$ generated by D2P2-SGD, the following relationship holds true
    \begin{equation}\label{error_con}
    \begin{split}
        &\mathbb{E}[f(\bar{\mathbf{x}}_K)-f^*]\leq \frac{\|\mathbf{x}_1-\mathbf{x}^*\|^2(1+\gamma)}{2\alpha K\sigma^2_A\sqrt{p}}+\frac{\alpha p^{1.5}(1+\gamma)(\textnormal{ln}K+1)\sigma^2_\epsilon}{2K}+\frac{\alpha pd^2\sigma^2_A(1+\gamma)}{2},
    \end{split}
    \end{equation}
where $\bar{\mathbf{x}}_K=\frac{1}{K}\sum_{k=1}^K\mathbf{x}_k$.  
\end{theo}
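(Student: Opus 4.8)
The plan is to run the standard one-step contraction argument for perturbed/projected SGD on the squared distance to the minimizer, $\|\mathbf{x}_{k+1}-\mathbf{x}^*\|^2 = \|\mathbf{x}_k-\mathbf{x}^*\|^2 - 2\alpha\langle\tilde{\mathbf{g}}_k,\mathbf{x}_k-\mathbf{x}^*\rangle + \alpha^2\|\tilde{\mathbf{g}}_k\|^2$, and then to control the two new terms by conditioning on the current iterate and averaging over the fresh randomness $A_k,\epsilon_k$. The first thing I would do is compute the conditional mean of the surrogate gradient. Since $A_k$ has i.i.d.\ $\mathcal{N}(0,\sigma^2_A)$ entries, $\mathbb{E}[A_kA_k^\top]=p\sigma^2_A\mathbb{I}_d$, so that $\mathbb{E}[\tilde{\mathbf{g}}_k\mid\mathbf{g}_k]=\frac{1}{\sqrt p}\mathbb{E}[A_kA_k^\top]\mathbf{g}_k+\mathbb{E}[A_k\epsilon_k]=\sqrt p\,\sigma^2_A\,\mathbf{g}_k$, the noise term vanishing because $\epsilon_k$ is independent of $A_k$ and mean-zero. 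This scalar factor $\sqrt p\,\sigma^2_A$ is precisely what appears in the denominators of the bound, so it is what I would carry through the telescoping.

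The crux is relating the clipped, normalized mini-batch gradient $\mathbf{g}_k$ to a genuine descent on $f$. Using convexity of each per-sample loss, $\langle\nabla f(\mathbf{x}_k;s),\mathbf{x}_k-\mathbf{x}^*\rangle\geq f(\mathbf{x}_k;s)-f(\mathbf{x}^*;s)$, and dividing by the normalizer $\|\nabla f(\mathbf{x}_k;s)\|+\gamma$, I would lower-bound $\langle\hat{\mathbf{g}}_k^s,\mathbf{x}_k-\mathbf{x}^*\rangle$; averaging over the batch and taking expectations should yield $\mathbb{E}\langle\mathbf{g}_k,\mathbf{x}_k-\mathbf{x}^*\rangle\geq\frac{1}{1+\gamma}\mathbb{E}[f(\mathbf{x}_k)-f^*]$, which is the origin of the $(1+\gamma)$ factor multiplying every term. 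This is the main obstacle: per-sample normalization makes $\mathbf{g}_k$ a \emph{biased} estimate of $\nabla f(\mathbf{x}_k)$ and the denominator is sample-dependent, so bounding the weights (e.g.\ via a per-sample gradient-norm bound guaranteeing $\|\nabla f(\mathbf{x}_k;s)\|+\gamma\leq 1+\gamma$) and handling the signs of the per-sample gaps is exactly the piece missing from the automatic-clipping analysis of \cite{bu2024automatic} in the convex regime. Here Assumption~\ref{assump_1} and the step-size restriction $\alpha\leq\frac{1}{2L}$ enter to absorb part of the second-moment contribution in the one-step inequality.

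For the second moment I would expand $\tilde{\mathbf{g}}_k=\frac{1}{\sqrt p}A_kA_k^\top\mathbf{g}_k+A_k\epsilon_k$, noting the cross term vanishes in expectation over $\epsilon_k$. The projection part is controlled through a Gaussian fourth-moment estimate, either the exact Wishart identity $\mathbb{E}[(A_kA_k^\top)^2]=\sigma^4_A\,p(p+d+1)\mathbb{I}_d$ or the cruder $\|A_kA_k^\top\mathbf{g}_k\|^2\leq\|A_k\|_F^4\|\mathbf{g}_k\|^2$ with $\mathbb{E}\|A_k\|_F^4$ of order $d^2p^2\sigma^4_A$, which is where the $d^2$ in the non-vanishing third term originates, combined with the automatic-clipping bound $\|\mathbf{g}_k\|\leq 1$. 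The noise part contributes $\mathbb{E}\|A_k\epsilon_k\|^2=\sigma^2_{\epsilon,k}\,\mathbb{E}\|A_k\|_F^2$, carrying the time-varying variance $\sigma^2_{\epsilon,k}=\sigma^2_\epsilon/k$.

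Finally I would telescope the one-step inequality over $k=1,\dots,K$, so the distance terms collapse to $\|\mathbf{x}_1-\mathbf{x}^*\|^2$; the harmonic sum $\sum_{k=1}^{K}1/k\leq\ln K+1$ converts the decaying noise variances into the $(\ln K+1)$ factor of the second term, while the projection variance, being constant in $k$, sums to $K$ and produces the non-vanishing third term. Dividing through by $2\alpha\sqrt p\,\sigma^2_A K/(1+\gamma)$ and applying Jensen's inequality $f(\bar{\mathbf{x}}_K)\leq\frac{1}{K}\sum_{k=1}^{K}f(\mathbf{x}_k)$ to pass from the averaged function values to $f(\bar{\mathbf{x}}_K)$ then gives the stated bound in Eq.~\ref{error_con}.
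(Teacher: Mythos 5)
Your overall skeleton (one-step expansion of $\|\mathbf{x}_{k+1}-\mathbf{x}^*\|^2$, the mean identity $\mathbb{E}[A_kA_k^\top]=p\sigma^2_A\mathbb{I}_d$, second-moment bounds for the projection and noise parts, telescoping, the harmonic-sum bound $\sum_{k}1/k\leq\ln K+1$, and Jensen at the end) matches the paper's proof. The gap is at the central step: how the clipped gradient produces a descent inequality. You propose to apply convexity \emph{per sample}, divide each inequality by its own normalizer $\|\nabla f(\mathbf{x}_k;s)\|+\gamma$, and average to get $\mathbb{E}\langle\mathbf{g}_k,\mathbf{x}_k-\mathbf{x}^*\rangle\geq\frac{1}{1+\gamma}\mathbb{E}[f(\mathbf{x}_k)-f^*]$. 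This step fails for exactly the two reasons you yourself flag and then leave unresolved: (i) the denominators can only be bounded above by $1+\gamma$ if per-sample gradient norms are bounded by $1$, an assumption the paper deliberately avoids (it assumes only bounded variance, Assumption~\ref{assump_2}); and (ii) the per-sample gaps $f(\mathbf{x}_k;s)-f(\mathbf{x}^*;s)$ can be negative, since $\mathbf{x}^*$ minimizes the average and not each sample loss, and a negative gap divided by a denominator as small as $\gamma$ gets amplified by $1/\gamma$ while positive gaps are shrunk by $1/(1+\gamma)$; the weighted average therefore does not dominate $\frac{1}{1+\gamma}$ times the unweighted one. Acknowledging an obstacle is not the same as closing it, so as written the proposal does not prove the theorem.

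The paper's route sidesteps both problems by never going per-sample at this stage: it writes the expected clipped mini-batch gradient as $\frac{\mathbf{g}_k}{\|\mathbf{g}_k\|+\gamma}$, splits $\mathbf{g}_k=(\mathbf{g}_k-\nabla f(\mathbf{x}_k))+\nabla f(\mathbf{x}_k)$, kills the deviation term by unbiasedness, applies convexity of the \emph{full} objective $f$ once via Lemma~\ref{convexity}, and only afterwards divides through by the scalar $\frac{2\alpha\sigma_A^2\sqrt{p}}{\|\mathbf{g}_k\|+\gamma}$, using $\|\mathbf{g}_k\|+\gamma\leq 1+\gamma$, which holds \emph{by construction} of the clipping (the average of vectors of norm at most $1$ has norm at most $1$). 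Crucially, the $(1+\gamma)$ factor there multiplies the nonnegative quantity $\mathbb{E}[f(\mathbf{x}_k)-f^*]$, so no sign issue arises. Two further mismatches with the paper are worth noting. First, smoothness and the condition $\alpha\leq\frac{1}{2L}$ do not enter the paper's convex proof at all (the second moments are bounded by constants thanks to clipping), so your claim that they absorb part of the second-moment term does not correspond to any step you would actually need. Second, your (in fact correct) computation $\mathbb{E}\|A_k\epsilon_k\|^2=\sigma^2_{\epsilon,k}\mathbb{E}\|A_k\|_F^2=dp\sigma_A^2\sigma^2_{\epsilon,k}$ produces a noise term scaling as $d\sqrt{p}$ after normalization, whereas the stated bound has $p^{1.5}$; the paper reaches $p^{1.5}$ through the identity $\mathbb{E}[A_k^\top A_k]=p\sigma_A^2\mathbb{I}_p$ used in its proof, so even with the descent step repaired, your constants would not land on Eq.~\ref{error_con} as stated.
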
  
Theorem~\ref{theorem_2} suggests that the error bound involves three terms: the initialization error, the error of additive noise due to the DP mechanism, and the random projection approximation error. We first turn to the second term and see how it impacts the error bound. If $\sigma^2_{\epsilon,k}=\sigma^2_\epsilon$, this term becomes $(\alpha p^{1.5}(1+\gamma)\sigma^2_\epsilon)/2$, which is a constant. Instead, if $\sigma^2_{\epsilon,k}=\frac{\sigma^2_\epsilon}{k}$, it can be bounded by $\mathcal{O}(\frac{\textnormal{ln}K}{K})$, which also relaxes the dependence on $\alpha$ to decay the magnitude. Though the model performance loss gap is reduced, dynamic variance can breach privacy. To maintain the $(\varepsilon, \delta)$-differential privacy for D2P2-SGD, as implied in Theorem~\ref{theorem_1}, the additive noise should be sampled with a larger $\sigma^2_\epsilon$ to offset the privacy loss, particularly in the early phase during the optimization, compared to DPSGD. When $\sigma_\epsilon\propto(\frac{1}{\varepsilon})$, we can immediately know that with a higher privacy (which corresponds to a smaller $\varepsilon$), the utility decreases. In this scenario, RP becomes more advantageous because noise reduction with a small $p$ counters strict privacy. Conversely, with a lower privacy (which corresponds to a larger $\varepsilon$), the utility increases. However, RP may hurt utility as the dimension distortion error dominates if the noise is already small. Thus, prioritizing one forces tradeoffs in the others, making dimension $p$ the pivotal lever in this trilemma.
We include the details of how privacy budget $\varepsilon$ and dimensions ($d,p$) impact the error bound and the comparison to DPSGD in Appendix~\ref{impact_pri_dim}.
The last term is associated with RP approximation error, while the term $\sigma^2_Ad^2$ due to model compression can cause significant error. One empirical remedy is to leverage a small $\alpha$, but this leads to slow convergence. 
Eq.~\ref{error_con} implies that when $K\to\infty$, D2P2-SGD converges to the neighborhood of $\mathbf{x}^*$ asymptotically with the rate of $\mathcal{O}(1/K+\text{ln}K/K)$, up to a constant $\alpha pd^2\sigma_A^2(1+\gamma)/2$. Additionally, a small initialization error $\|\mathbf{x}_1-\mathbf{x}^*\|$ and a larger batch size lead to better performance, which stresses the implication from Theorem~\ref{theorem_1}. 
Overall, the consolidation among complexity, utility, and privacy in D2P2-SGD is reflected explicitly in Theorem~\ref{theorem_2}. 
The following corollary summarizes the explicit convergence rate when $\alpha=\mathcal{O}(\frac{1}{\sqrt{K}})$.
\begin{corollary}\label{coro_1} With conditions defined in Theorem~\ref{theorem_2}, when $\alpha=\mathcal{O}(\frac{1}{\sqrt{K}})$, the following relationship holds true, i.e.,
$
    \mathbb{E}[f(\bar{\mathbf{x}}_K)-f^*]\leq\mathcal{O}(\frac{1}{\sqrt{K}}+\frac{\textnormal{ln}K}{K^{1.5}}).
$
\end{corollary}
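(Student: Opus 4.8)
The plan is to substitute the prescribed step size directly into the bound of Theorem~\ref{theorem_2} and track the asymptotic order of each of the three terms, treating $p$, $d$, $\sigma_A$, $\sigma_\epsilon$, $\gamma$, and $\|\mathbf{x}_1-\mathbf{x}^*\|$ as fixed problem-dependent constants absorbed into the $\mathcal{O}(\cdot)$ notation. Writing $\alpha = c/\sqrt{K}$ for a constant $c>0$, I would first verify that the hypothesis $\alpha \le \frac{1}{2L}$ of Theorem~\ref{theorem_2} holds for all $K$ large enough (specifically $K \ge 4L^2c^2$), so that the estimate in Eq.~\ref{error_con} is valid in the asymptotic regime of interest.

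Next I would evaluate each term of Eq.~\ref{error_con} under this choice. For the initialization term, $\frac{1}{\alpha K} = \frac{\sqrt{K}}{cK} = \frac{1}{c\sqrt{K}}$, so the first term is $\mathcal{O}(1/\sqrt{K})$. For the noise term, $\frac{\alpha(\textnormal{ln}K+1)}{K} = \frac{c(\textnormal{ln}K + 1)}{K^{1.5}}$, giving $\mathcal{O}(\textnormal{ln}K / K^{1.5})$ since $\textnormal{ln}K + 1 = \mathcal{O}(\textnormal{ln}K)$. For the random-projection term, the factor $\alpha = c/\sqrt{K}$ multiplies the $K$-independent constant $\frac{pd^2\sigma_A^2(1+\gamma)}{2}$, yielding $\mathcal{O}(1/\sqrt{K})$. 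Summing the three contributions and retaining the dominant growth orders gives $\mathcal{O}(1/\sqrt{K} + \textnormal{ln}K/K^{1.5})$, which is exactly the claimed rate.

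The only conceptual point worth flagging is that the third term, which in Theorem~\ref{theorem_2} remained a nonvanishing constant (the asymptotic neighborhood radius $\frac{\alpha pd^2\sigma_A^2(1+\gamma)}{2}$), now decays at rate $\mathcal{O}(1/\sqrt{K})$ precisely because $\alpha$ is taken to shrink with $K$; this is what converts the ``convergence to a neighborhood'' statement of Theorem~\ref{theorem_2} into genuine sublinear convergence. There is no real obstacle beyond bookkeeping: the corollary is a direct specialization, and the main care needed is to confirm that $\textnormal{ln}K/K^{1.5} = o(1/\sqrt{K})$ (indeed $\frac{\textnormal{ln}K/K^{1.5}}{1/\sqrt{K}} = \frac{\textnormal{ln}K}{K}\to 0$), so that the first and third terms dominate asymptotically, while the noise term is nonetheless kept in the stated bound to make the dynamic-DP contribution explicit.
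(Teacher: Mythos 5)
Your proposal is correct and matches the paper's own proof, which likewise just substitutes $\alpha=\mathcal{O}(1/\sqrt{K})$ into the bound of Theorem~\ref{theorem_2} and reads off the orders of the three terms. Your added checks (validity of $\alpha\leq\frac{1}{2L}$ for large $K$ and the observation that the constant neighborhood term now decays as $\mathcal{O}(1/\sqrt{K})$) are sound bookkeeping on top of the same argument.
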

The conclusion in Corollary~\ref{coro_1} requires the constant learning rate $\alpha$ to have a format $\alpha\propto\mathcal{O}(\frac{1}{\sqrt{K}})$, which is a quite popular choice in stochastic optimization~\cite{garrigos2023handbook}. 
After carefully reviewing some recent works~\cite{koloskova2023revisiting,bu2024automatic,xiao2023theory,chen2020understanding}, though they have also focused on the investigation of convergence guarantee for differentially private SGD with gradient clipping, all of them paid only attention to nonconvex cases. Our result explicitly and rigorously establishes the convergence rate of D2P2-SGD for convex functions. Without dynamic differential privacy and model compression, our result resembles exactly the same convergence rate of regular SGD in $\mathcal{O}(\frac{1}{\sqrt{K}})$. Particularly, if a desired accuracy $\zeta>0$ is defined specifically for $\mathbb{E}[f(\bar{\mathbf{x}}_K)-f^*]\leq \zeta$, the complexity bound for $K$ is $\mathcal{O}(\frac{1}{\zeta^2}+\frac{(\textnormal{ln}(\frac{1}{\zeta}))^{2/3}}{\zeta^{2/3}})$. In comparison, our rate improves the one reported in~\cite{kasiviswanathan2021sgd}, which is $\mathcal{O}(\frac{\textnormal{ln}K}{\sqrt{K}})$. Note that in their case, they use a decaying step size $\alpha_k\propto \frac{1}{\sqrt{k}}$ and there is no smoothness assumption.
We will now present the main result for non-convex functions.

\begin{theo}(Utility for non-convex functions)\label{theorem_3}
    Let Assumptions~\ref{assump_1} and~\ref{assump_2} hold. Suppose that $A$ is a random matrix with each element being sampled from a normal distribution $\mathcal{N}(0, \sigma^2_A)$. Also, let the additive noise of the DP mechanism have the variance $\sigma_{\epsilon,k}^2$. If the step size $\alpha\leq\frac{1}{2L}$, then for the iterates $\{\mathbf{x}_k\}_{k=1}^K, K\geq 1$ generated by D2P2-SGD, the following relationship holds true
    \begin{equation}\label{eq_6}
    \begin{split}
 &\textnormal{min}_{k\in[1,K]}\mathbb{E}[\|\nabla f(\mathbf{x}_k)\|] \leq \frac{f(\mathbf{x}_1)-f^*}{K\sigma^2_A\sqrt{p}\alpha}+ \frac{\alpha Lp^{1.5}\sigma_\epsilon^2(\textnormal{ln}K+1)}{K} +L\alpha\sqrt{p}d^2\sigma^2_A +\frac{2\sigma}{\sqrt{B}}+\gamma.
    \end{split}
    \end{equation} 
\end{theo}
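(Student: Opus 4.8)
The plan is to run the standard descent-lemma argument for smooth non-convex functions, but carefully account for the two extra sources of randomness introduced by D2P2-SGD: the Gaussian projector $A_k$ and the DP noise $\epsilon_k$. Starting from $L$-smoothness (Assumption~\ref{assump_1}(a)) applied to the update $\mathbf{x}_{k+1}=\mathbf{x}_k-\alpha\tilde{\mathbf{g}}_k$, I would write
\[
f(\mathbf{x}_{k+1})\leq f(\mathbf{x}_k)-\alpha\langle\nabla f(\mathbf{x}_k),\tilde{\mathbf{g}}_k\rangle+\tfrac{L\alpha^2}{2}\|\tilde{\mathbf{g}}_k\|^2,
\]
and then take expectations over $A_k$, $\epsilon_k$, and the mini-batch in turn. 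The entire proof reduces to turning the inner-product term into a positive multiple of $\|\nabla f(\mathbf{x}_k)\|$ minus error, bounding the quadratic term, telescoping, and invoking $\min_k\leq\text{average}$.

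First I would resolve the projection expectations. Since the entries of $A_k$ are i.i.d. $\mathcal{N}(0,\sigma_A^2)$ (Lemma~\ref{random_mat}) and $\epsilon_k$ is zero-mean and independent, $\mathbb{E}[A_kA_k^\top]=p\sigma_A^2\mathbb{I}_d$ and $\mathbb{E}[A_k\epsilon_k]=0$, so $\mathbb{E}[\tilde{\mathbf{g}}_k\mid\mathbf{g}_k]=\sqrt{p}\,\sigma_A^2\,\mathbf{g}_k$. This identifies $\alpha\sqrt{p}\sigma_A^2$ as the \emph{effective} step size and explains the factor $\sqrt{p}\sigma_A^2$ in the denominator of the leading term. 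For the quadratic term I would expand $\tilde{\mathbf{g}}_k=\tfrac{1}{\sqrt p}A_kA_k^\top\mathbf{g}_k+A_k\epsilon_k$: the cross term vanishes in expectation, the noise part contributes $\sigma_{\epsilon,k}^2\,\mathbb{E}\,\mathrm{tr}(A_k^\top A_k)$, and the signal part is a fourth-moment (Wick/Isserlis) Gaussian calculation giving a bound proportional to $\sigma_A^4\|\mathbf{g}_k\|^2$ times a polynomial in $d$ and $p$. Using that every clipped per-sample gradient has norm below $1$, hence $\|\mathbf{g}_k\|\leq 1$, these two pieces become the random-projection approximation term $L\alpha\sqrt{p}d^2\sigma_A^2$ and, after summation, the DP-noise term.

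The crux, and the step I expect to be hardest, is lower-bounding the signal term $\langle\nabla f(\mathbf{x}_k),\mathbb{E}[\mathbf{g}_k]\rangle$ by $\|\nabla f(\mathbf{x}_k)\|$ up to the clipping bias $\gamma$ and the sampling error $2\sigma/\sqrt{B}$, following the automatic-clipping analysis of~\cite{bu2024automatic}. The clean part is the deterministic identity $\tfrac{\|\nabla f\|^2}{\|\nabla f\|+\gamma}\geq\|\nabla f\|-\gamma$, which is exactly where the additive $\gamma$ originates. The difficulty is that the per-sample normalization $\mathbf{v}\mapsto\mathbf{v}/(\|\mathbf{v}\|+\gamma)$ has a random denominator, so $\mathbb{E}[\mathbf{g}_k]$ is a biased estimate of $\nabla f(\mathbf{x}_k)$ and one cannot simply pull the expectation inside. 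I would control this deviation using the bounded-variance Assumption~\ref{assump_2}: the mini-batch average concentrates around $\nabla f(\mathbf{x}_k)$ with standard deviation at most $\sigma/\sqrt{B}$ by Jensen's inequality, and the normalized-gradient discrepancy it induces is what surfaces as the $2\sigma/\sqrt{B}$ term. Care is needed to avoid a spurious $1/\gamma$ blow-up from naive Lipschitz bounds on the normalization map; exploiting $\|\mathbf{g}_k\|\leq 1$ and bounding the inner product directly rather than the vector difference keeps the estimate clean.

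Finally I would assemble the per-step inequality
\[
\alpha\sqrt{p}\sigma_A^2\,\mathbb{E}\|\nabla f(\mathbf{x}_k)\|\leq \mathbb{E}[f(\mathbf{x}_k)]-\mathbb{E}[f(\mathbf{x}_{k+1})]+\alpha\sqrt{p}\sigma_A^2\big(\gamma+\tfrac{2\sigma}{\sqrt B}\big)+\tfrac{L\alpha^2}{2}\mathbb{E}\|\tilde{\mathbf{g}}_k\|^2,
\]
sum over $k=1,\dots,K$ so the function values telescope to $f(\mathbf{x}_1)-f^*$ (finite by Assumption~\ref{assump_1}(b)), and divide by $K\alpha\sqrt{p}\sigma_A^2$. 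The time-varying schedule $\sigma_{\epsilon,k}^2=\sigma_\epsilon^2/k$ enters only through $\sum_{k=1}^K 1/k\leq\ln K+1$, producing the $(\ln K+1)/K$ factor on the noise term, while the step-size condition $\alpha\leq 1/(2L)$ keeps the quadratic contribution from overwhelming the descent. Bounding $\min_{k}\mathbb{E}\|\nabla f(\mathbf{x}_k)\|$ by the average then yields exactly Eq.~\ref{eq_6}.
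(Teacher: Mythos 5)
Your proposal follows essentially the same route as the paper's own proof: the descent lemma, the identities $\mathbb{E}[A_kA_k^\top]=p\sigma_A^2 I$ and $\mathbb{E}[A_k\epsilon_k]=0$ yielding the effective step size $\alpha\sqrt{p}\sigma_A^2$, a fourth-moment bound on $\|A_kA_k^\top\|^2$ together with $\|\mathbf{g}_k\|\leq 1$ for the quadratic term, the normalization identity plus a bounded-variance/Jensen argument turning the clipping bias into $\gamma+2\sigma/\sqrt{B}$, the harmonic sum $\sum_{k}1/k\leq\ln K+1$ for the dynamic-noise term, telescoping, and $\min_k\leq$ average. The only (cosmetic) difference is that the paper decomposes $\nabla f(\mathbf{x}_k)=(\nabla f(\mathbf{x}_k)-\mathbf{g}_k)+\mathbf{g}_k$ inside the inner product and applies the identity to $\|\mathbf{g}_k\|$ rather than to $\|\nabla f(\mathbf{x}_k)\|$, then recovers $\|\nabla f(\mathbf{x}_k)\|$ via the triangle inequality, producing exactly the same $2\sigma/\sqrt{B}$ and $\gamma$ terms you describe.
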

We immediately have the following result when the step size satisfies a certain condition.
\begin{corollary}
    \label{coro_2} With conditions defined in Theorem~\ref{theorem_3}, when $\alpha=\mathcal{O}(\frac{1}{\sqrt{K}})$, the following relationship hold true,
    $\textnormal{min}_{k\in[1,K]}\mathbb{E}[\|\nabla f(\mathbf{x}_k)\|]\leq\mathcal{O}(\frac{1}{\sqrt{K}}+\frac{\textnormal{ln}K}{K^{1.5}}+\sigma+\gamma)$.
\end{corollary}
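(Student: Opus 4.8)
The plan is to obtain Corollary~\ref{coro_2} directly from the bound already established in Theorem~\ref{theorem_3}, by substituting the prescribed step size $\alpha=\mathcal{O}(1/\sqrt{K})$ into the right-hand side of Eq.~\ref{eq_6} and then tracking the dependence on $K$ of each of its five terms. I would treat the problem-dependent quantities $L$, $p$, $d$, $\sigma_A$, $\sigma_\epsilon$, and $B$ as fixed constants absorbed into the $\mathcal{O}(\cdot)$ notation. No new inequality is required: the entire task is asymptotic bookkeeping of the orders.

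First I would write $\alpha=c/\sqrt{K}$ for some constant $c>0$ and handle the terms one at a time. The initialization term $\frac{f(\mathbf{x}_1)-f^*}{K\sigma_A^2\sqrt{p}\,\alpha}$ has $\alpha$ in the denominator, hence scales as $1/(K\cdot K^{-1/2})=\mathcal{O}(1/\sqrt{K})$. The projection-approximation term $L\alpha\sqrt{p}d^2\sigma_A^2$ is linear in $\alpha$, hence $\mathcal{O}(\alpha)=\mathcal{O}(1/\sqrt{K})$. The dynamic-noise term $\frac{\alpha Lp^{1.5}\sigma_\epsilon^2(\ln K+1)}{K}$ already carries an explicit $\ln K+1$ factor and a $1/K$ factor in Eq.~\ref{eq_6}, so with $\alpha=\mathcal{O}(1/\sqrt{K})$ it scales as $\mathcal{O}\!\big(\frac{\ln K}{K^{1.5}}\big)$. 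The remaining two terms, the sampling-variance contribution $2\sigma/\sqrt{B}$ and the clipping-stability constant $\gamma$, are independent of both $K$ and $\alpha$, and therefore persist as the additive floor $\mathcal{O}(\sigma)+\mathcal{O}(\gamma)$. Collecting these orders yields $\mathcal{O}(1/\sqrt{K})+\mathcal{O}(\ln K/K^{1.5})+\mathcal{O}(\sigma)+\mathcal{O}(\gamma)$, which is exactly the claimed rate.

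I would also remark that the choice $\alpha\propto1/\sqrt{K}$ is not arbitrary but is the rate-optimal balance for this bound: the initialization term decreases in $\alpha$ while the projection term increases in $\alpha$, and minimizing their sum $\frac{1}{K\alpha}+\alpha$ over $\alpha$ gives $\alpha\propto1/\sqrt{K}$ with common value $\mathcal{O}(1/\sqrt{K})$. Thus $1/\sqrt{K}$ is the best decaying rate attainable from Eq.~\ref{eq_6}, matching the convex case in Corollary~\ref{coro_1} and the standard SGD rate.

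The main subtlety is conceptual rather than technical, and the only point requiring care: unlike in Corollary~\ref{coro_1}, the bound here does \emph{not} vanish as $K\to\infty$ but instead converges to a nonzero neighborhood of radius $\mathcal{O}(\sigma+\gamma)$. I would make explicit that $\sigma$ reflects the irreducible mini-batch stochastic-gradient variance (Assumption~\ref{assump_2}) and that $\gamma$ reflects the bias introduced by the automatic clipping mechanism, so neither additive term can be removed by running more iterations. This is the expected behavior of normalized/clipped DP-SGD on non-convex objectives and is precisely the sense in which the minimum gradient norm converges only to a $\gamma$-dependent neighborhood of stationarity rather than to zero.
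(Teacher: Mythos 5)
Your proof is correct and takes essentially the same route as the paper: the paper's own proof of Corollary~\ref{coro_2} is simply the substitution of $\alpha=\mathcal{O}(1/\sqrt{K})$ into the bound of Theorem~\ref{theorem_3}, which is exactly your term-by-term bookkeeping of Eq.~\ref{eq_6}. Your additional remarks on the rate-optimality of $\alpha\propto 1/\sqrt{K}$ and on the irreducible $\mathcal{O}(\sigma+\gamma)$ floor are accurate and consistent with the paper's surrounding discussion, though not part of its proof.
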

Similarly, the error bound from Theorem~\ref{theorem_3} is dictated by the initialization error, the error of addictive noise, the random projection approximation error and the clipping bias. Especially, the clipping bias $\frac{2\sigma}{\sqrt{B}}+\gamma$ is determined by the variance, the batch size and the stability constant. To mitigate the clipping bias, one immediate solution is to properly increase the batch size. Due to $\gamma$, in practical implementation, one can choose a small value for $\gamma$. When $\gamma$ is $\mathcal{O}(\frac{1}{\sqrt{K}})$ in Corollary~\ref{coro_2}, the clipping bias resembles the result in~\cite{koloskova2023revisiting}, up to some absolute constants. 
Also, the convergence rate of non-convex functions remains similar to that of generally convex functions. However, comparing results from Corollary~\ref{coro_1} and Corollary~\ref{coro_2}, the error of non-convex functions is larger due to the clipping bias, which illustrates that the convergence for non-convex functions is more challenging. We summarize the clipping bias for different methods in Table~\ref{table:clipping_bias} in Appendix~\ref{clipping_bias} for comparison (we only compare for non-convex functions as most of the existing methods only discussed non-convex objectives). Practically, non-asymptotic convergence is preferred such that we can resort to a pre-defined small constant $\xi>0$ to define metric, i.e., the norm of gradient $\|\nabla f(\mathbf{x})\|\leq\xi$ for non-convex functions. Suppose that $B=\sigma^2/\xi^2$ and $\gamma=\mathcal{O}(\frac{1}{\sqrt{K}})$. If the computational complexity is defined as the total number of gradient computations, it can be observed that the computational complexity is $KB=\mathcal{O}(1/\xi^4)$, which retains the same complexity as in~\cite{ghadimi2013stochastic}. The tradeoff between privacy and utility is analogous to that for convex functions, only differing in some absolute constants. Thus, we do not repeat the same remarks in this context, but provide the impact of privacy budget $\varepsilon$ and dimensions ($d,p$) on the error bound in Appendix~\ref{impact_pri_dim_non}. 

\section{Numerical Experiments}\label{experiments}
We present extensive empirical results to thoroughly validate our proposed approaches with a comparison to baselines. The baselines we use in this study consist of SGD, vanilla DPSGD, D2P-SGD, and DP2-SGD. D2P-SGD is an equivalent alternative to Dynamic DPSGD in~\cite{du2021dynamic}. DP2-SGD can also be regarded as an equivalence of PrivSGD~\cite{kasiviswanathan2021sgd} since the compression technique they adopted is also random projection, with a static DP. We leverage the Opacus library~\cite{yousefpour2021opacus} and build the framework on top of it.
We use a 4-layer Convolutional Neural Network (CNN) as the model, which has been widely used in testing optimizers. A more detailed explanation of the architecture is provided in Appendix~\ref{additional_results}. Additionally, the datasets for testing our algorithms include FashionMNIST and SVHN~\cite{figueroa2019semi}. 
As we have particularly identified the critical relationship between the privacy loss $\varepsilon$ and other parameters, an ablation study on this is shown to reveal their impacts on the performance. Hyperparameters have a significant impact on deep learning models' performance. In the interest of brevity, we have chosen most hyperparameters used in the model via manual tuning,
although we note that an automated hyperparameter-tuning framework would likely be beneficial for these models.
Additional information, including network architecture and specific hyperparameter setup, and results on other \textit{larger} models and datasets are in Appendix~\ref{additional_results}.
\begin{figure}[h]
\includegraphics[width=\linewidth]{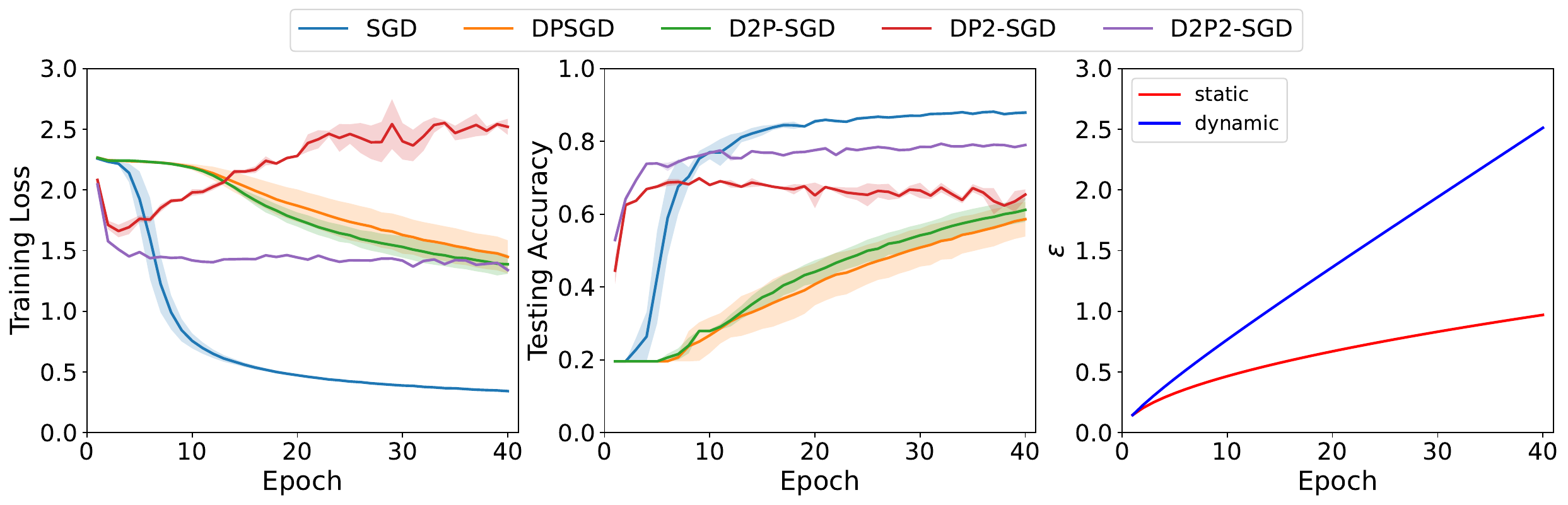}
\centering
\caption{Comparison among methods for SVHN data: on the right side, the privacy loss is shown for static and dynamic scenarios.}
\label{fig:svhn_comparison}
\end{figure}
\begin{figure}[h]
\includegraphics[width=\linewidth]{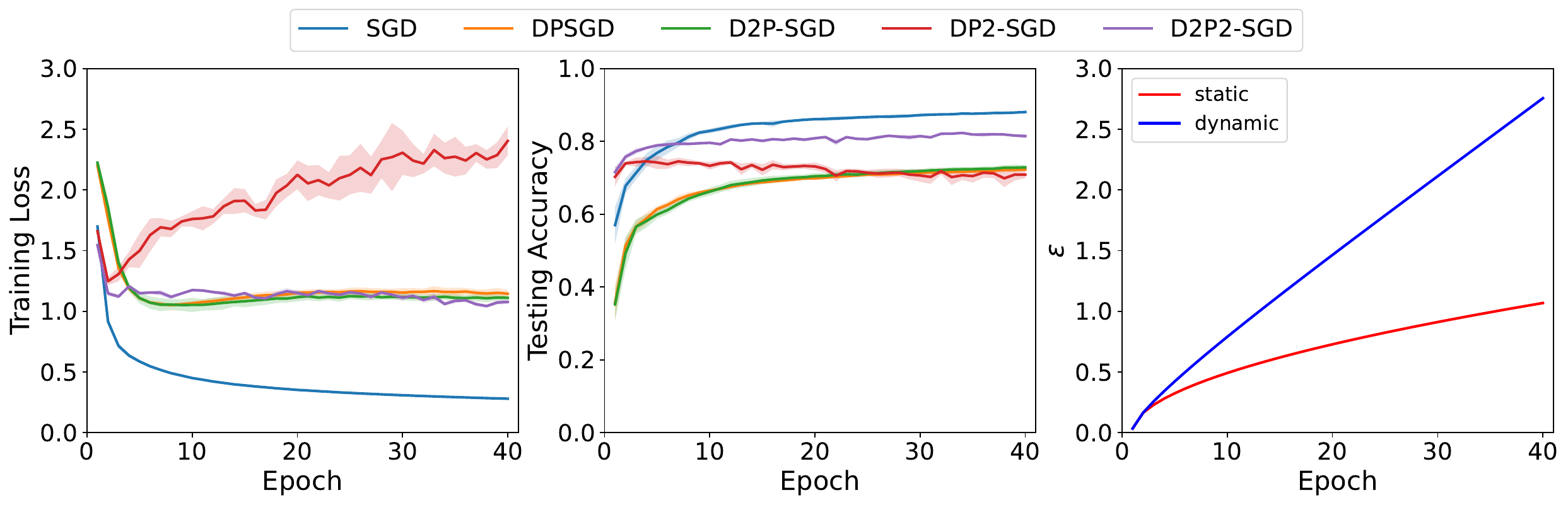}
\centering
\caption{Comparison among methods for FashionMNIST data: on the right side, the privacy loss is shown for static and dynamic scenarios.}
\label{fig:fmnist_comparison}
\end{figure}

\begin{figure*}[t!]
\centering
\begin{subfigure}[t]{0.39\textwidth}
\centering
\includegraphics[width=\linewidth,trim={0in 0in 0in 0in}]{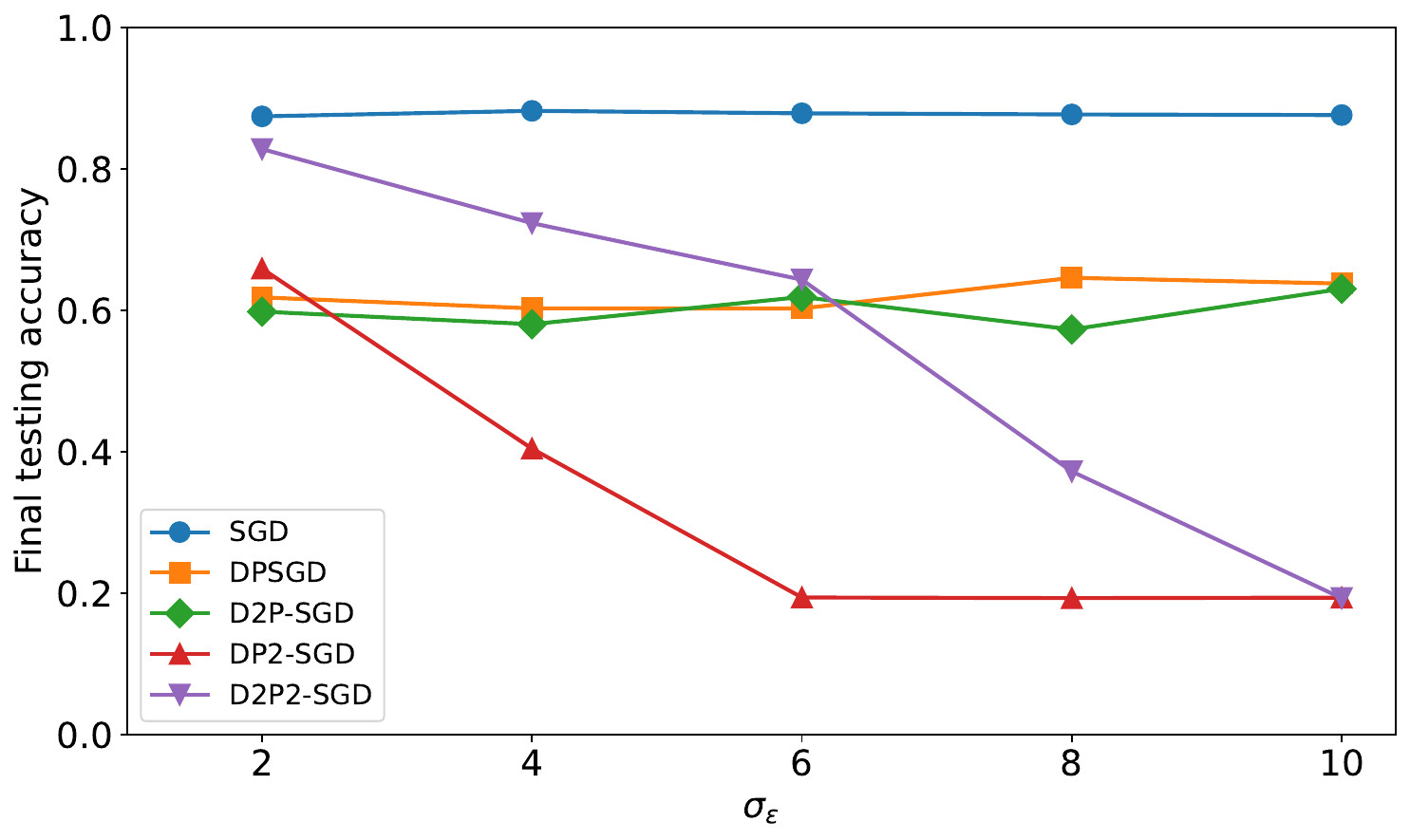}
\caption{SVHN}
    \end{subfigure}%
    \begin{subfigure}[t]{0.39\textwidth}
        \centering
        \includegraphics[width=\linewidth,trim={0in 0in 0in 0in}]{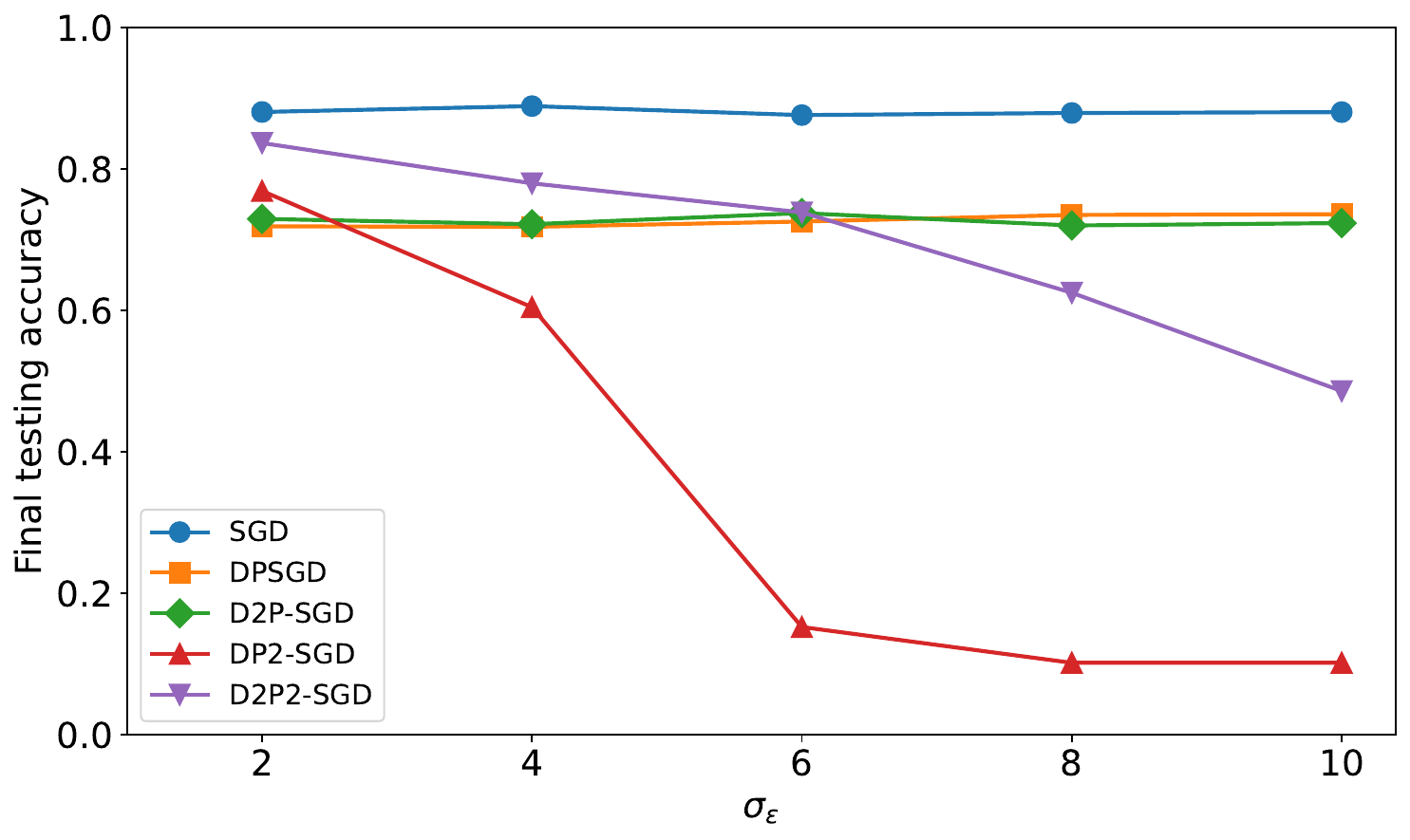}
        \caption{FashionMNIST}
    \end{subfigure}
    \caption{Accuracy vs. standard deviation}
    \label{fig:sigma_vs_accuracy}
        \vspace{-0.2in}
\end{figure*}
\begin{figure*}[t!]
    \centering
    \begin{subfigure}[t]{0.39\textwidth}
        \centering
        \includegraphics[width=\linewidth,trim={0in 0in 0in 0in}]{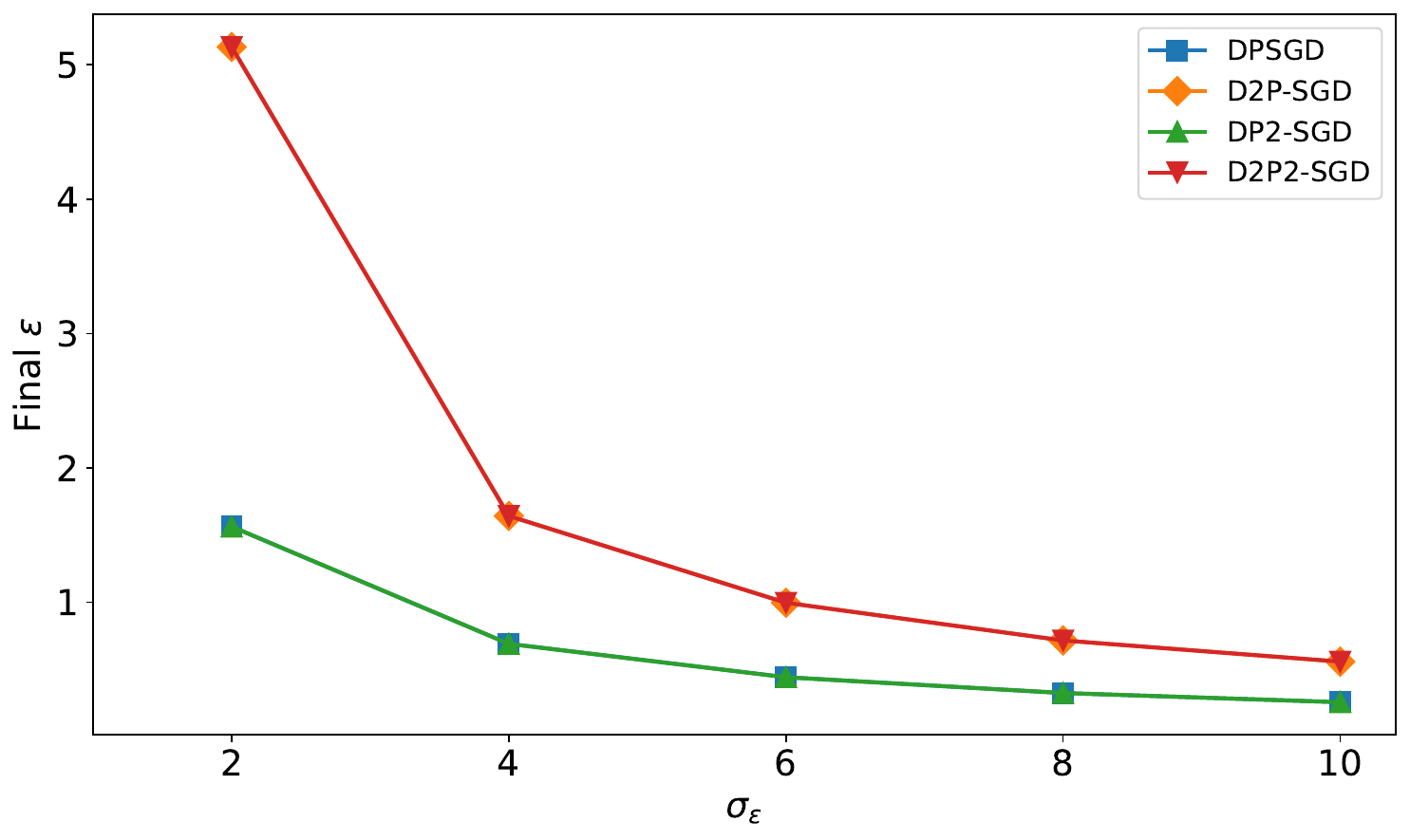}
        \caption{SVHN}
    \end{subfigure}%
    \begin{subfigure}[t]{0.39\textwidth}
        \centering
        \includegraphics[width=\linewidth,trim={0in 0in 0in 0in}]{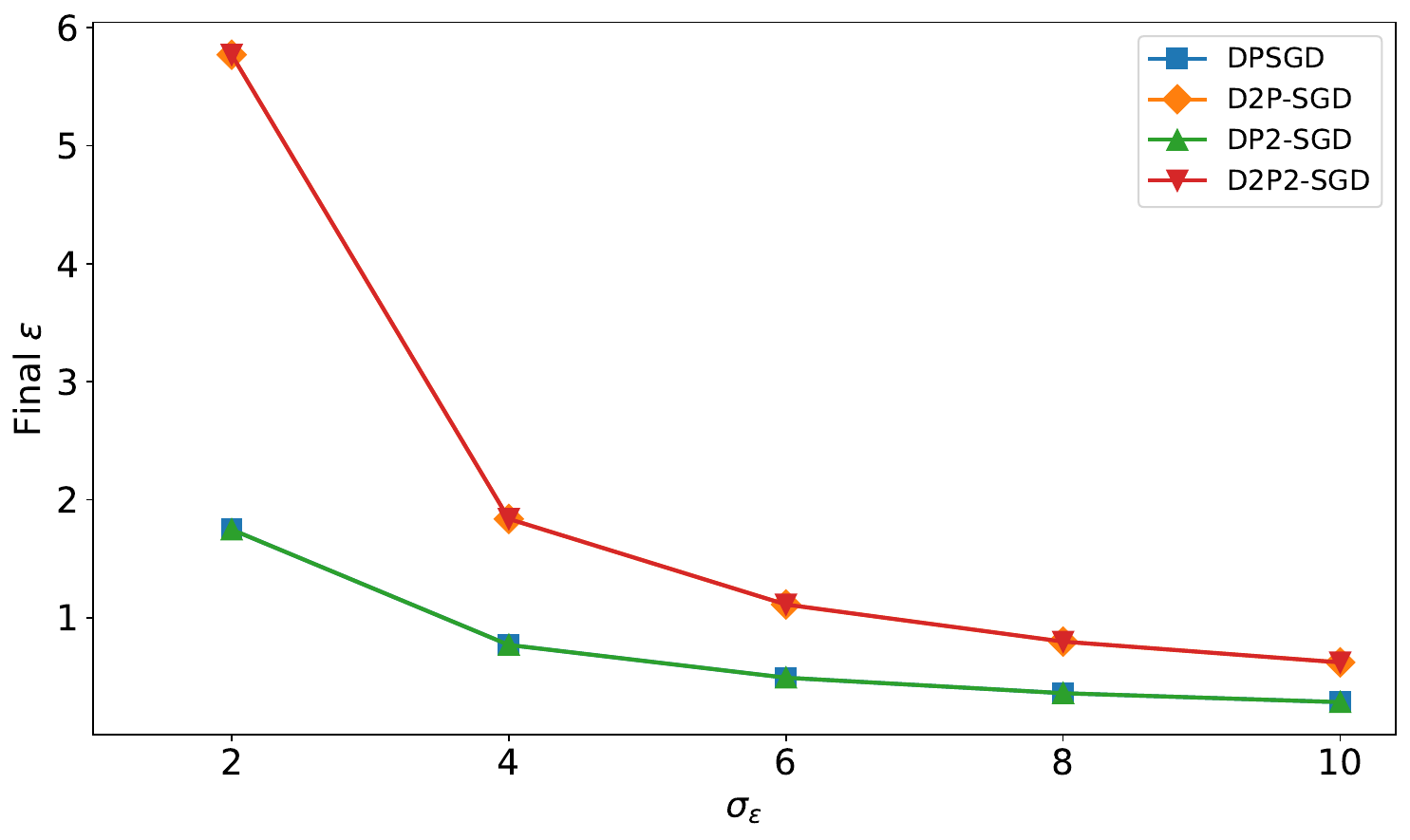}
        \caption{FashionMNIST}
    \end{subfigure}
    \caption{Privacy loss vs. standard deviation}
    \label{fig:sigma_vs_eps}
    \vspace{-0.2in}
\end{figure*}

\noindent\textbf{Comparative Evaluation.}
Figure~\ref{fig:svhn_comparison} shows the model performance and privacy loss for different methods. We train five
different instances of each algorithm with different random
seeds. The solid curves correspond to the
mean and the shaded region to the minimum and maximum values over the five runs. For the privacy loss, the standard deviation is fairly small. Also, the dynamic mechanism for D2P-SGD and D2P2-SGD is the same. Similarly, DPSGD and DP2-SGD have the same static mechanism.
According to Figure~\ref{fig:svhn_comparison}, D2P2-SGD significantly improves the model accuracy compared to DPSGD, D2P-SGD, and DP2-SGD. While this comes at the expense of a larger privacy loss, which is expected. This is attributed to the decreasing variance $\sigma^2_{\epsilon,k}/k$ along with iterations. However, the testing accuracy of D2P2-SGD is much closer to SGD, 
while having a gap due to projection error and clipping bias. 
Notably, D2P2-SGD spends a lesser number of epochs, reaching a higher accuracy at the early phase, even earlier than SGD, which implies that random projection enables more efficient model learning.
Similar conclusions can be made from Figure~\ref{fig:fmnist_comparison}. Comparing DP2-SGD and D2P2-SGD, we see that with random projection, if the noise variance from the DP mechanism is static, the performance deteriorates accordingly. However, with a dynamic mechanism, it performs robustly throughout training. This validates the conclusion from Theorem~\ref{theorem_3}, where the second term decays faster when the number of iterations increases. However, DP2-SGD remains with the rate of $\mathcal{O}(1/\sqrt{K})$ (If $\alpha$ is not in $\mathcal{O}(1/\sqrt{K})$, this term in D2P2-SGD is in $\mathcal{O}(\textnormal{ln}(K/K))$, while DP2-SGD $\mathcal{O}(1)$).
Figures~\ref{fig:svhn_comparison} and~\ref{fig:fmnist_comparison} reveal comparable performance between DPSGD and D2P-SGD. This alignment stems from our privacy configuration. In DPSGD, the fixed noise variance $\sigma^2_\epsilon = \frac{C_2K \ln(1/\delta) B^2}{n^2\varepsilon^2}$ yields a static privacy-utility tradeoff. Conversely, D2P-SGD employs dynamic noise variance $\sigma^2_{\epsilon,k} = \frac{C_2K^2 \ln(1/\delta) B^2}{n^2\varepsilon^2 k}$. During early optimization ($k \ll K$), initialization error dominates, masking performance differences despite divergent noise mechanisms. As $k \to K$, privacy error dominates, but the noise variances converge, explaining the sustained similarity in performance.
Turning to the privacy loss ($\varepsilon$) in both figures, we can observe that the maximum privacy losses of the dynamic mechanism end up with respectively 2.45 (for SVHN) and 2.75 (for FashionMNIST). Compared to values with the static DP mechanism (1.06 and 0.95, respectively), the privacy loss of D2P2-SGD grows sharply. However, given the bound for $\varepsilon$ in Theorem~\ref{theorem_1}, as long as the constant $C_1$ ($\geq314$, see Appendix~\ref{additional_results}) is selected properly, D2P2-SGD still remains $(\varepsilon, \delta)$-differentially private. While the value of 2.45 or 2.75 offers a reasonable, rather than exceptionally strong privacy guarantee, it reflects our focus on balancing privacy and utility. However, in highly private
domains such as medical applications, D2P2-SGD can still be adopted to bias towards privacy by setting large dynamic noise. These two values are the only resulting values based on the data
we have applied in this work.
Thus, our proposed scheme improves the accuracy over baselines while successfully maintaining differential privacy. Table~\ref{tab:privacy_accuracy} summarizes the performance for different optimizers at the same privacy loss $\varepsilon$ values ($\sigma_\epsilon$ = 3.0), showing that D2P2-SGD outperforms other differentially private optimizers and validates our theoretical finding.

\begin{table}[h]
\centering
\caption{Testing accuracy (averaged through 4 random seeds) at different $\varepsilon$ values (FashionMNIST dataset)}
\label{tab:privacy_accuracy}
\begin{tabular}{lccccc}
\toprule
\multirow{2}{*}{\textbf{Optimizers}} & \textbf{Test acc.} & \textbf{Test acc.} & \textbf{Test acc.} & \textbf{Test acc.} & \textbf{Test acc.} \\
 & \textbf{at $\varepsilon=1.0$} & \textbf{at $\varepsilon=2.0$} & \textbf{at $\varepsilon=4.0$} & \textbf{at $\varepsilon=6.0$} & \textbf{at $\varepsilon=8.0$} \\
\midrule
DPSGD      & 0.6910 & 0.5291 & 0.5331 & 0.5704 & 0.5849 \\
D2P-SGD    & 0.6549 & 0.7063 & 0.5242 & 0.5656 & 0.5573 \\
DP2-SGD    & 0.6042 & 0.6736 & 0.5884 & 0.5461 & 0.6297 \\
D2P2-SGD   & \textbf{0.7132} & \textbf{0.7322} & \textbf{0.7417} & \textbf{0.7851} & \textbf{0.7566} \\
\bottomrule
\end{tabular}
\end{table}

\begin{figure*}[t!]
    \centering
    \begin{subfigure}[t]{0.39\textwidth}
        \centering
        \includegraphics[width=\linewidth,trim={0in 0in 0in 0in}]{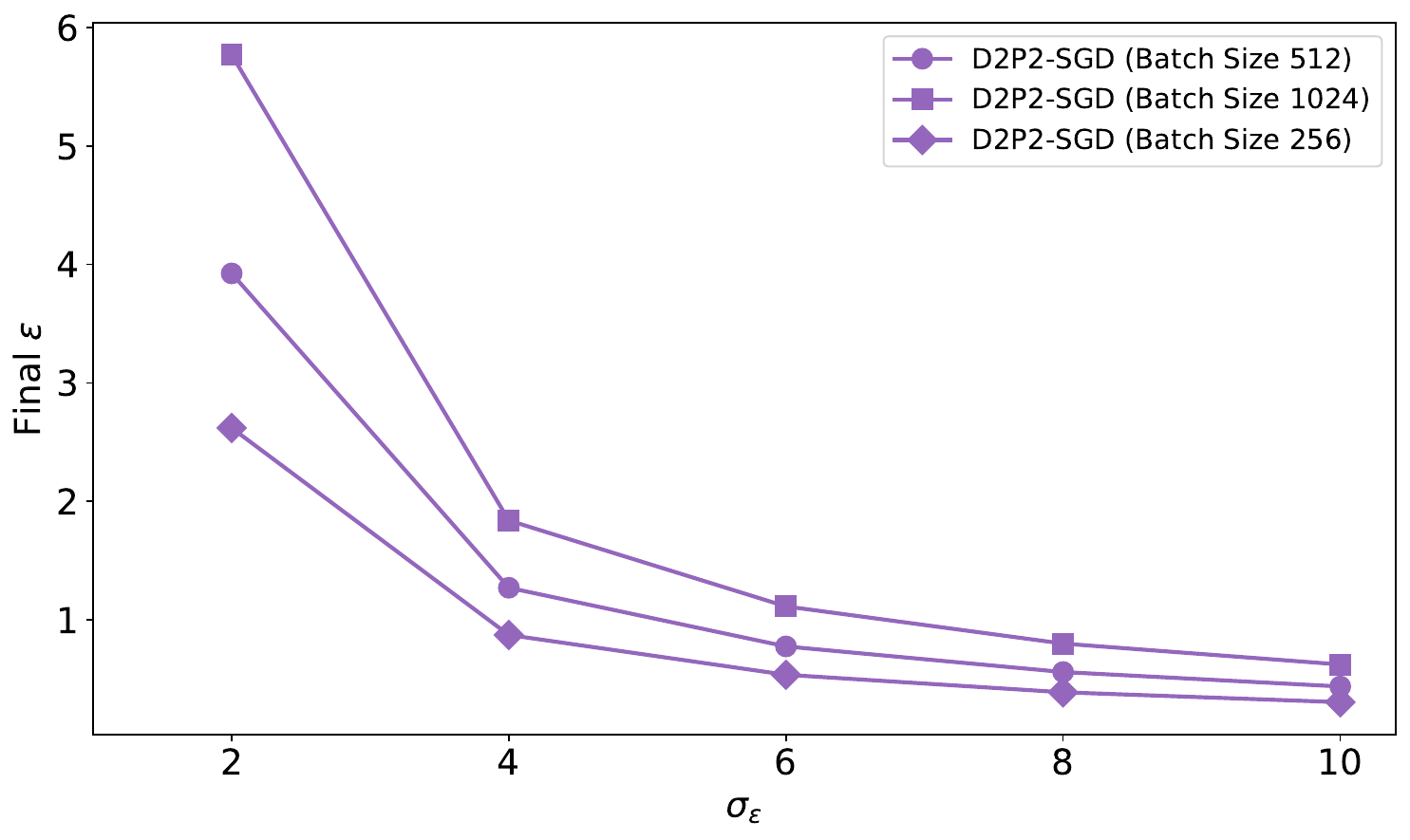}
        \caption{Batch size vs. privacy loss for FashionMNIST}
        \label{fig:batch_size_privacy}
    \end{subfigure}%
    \begin{subfigure}[t]{0.39\textwidth}
        \centering
\includegraphics[width=\linewidth,trim={0in 0in 0in 0in}]{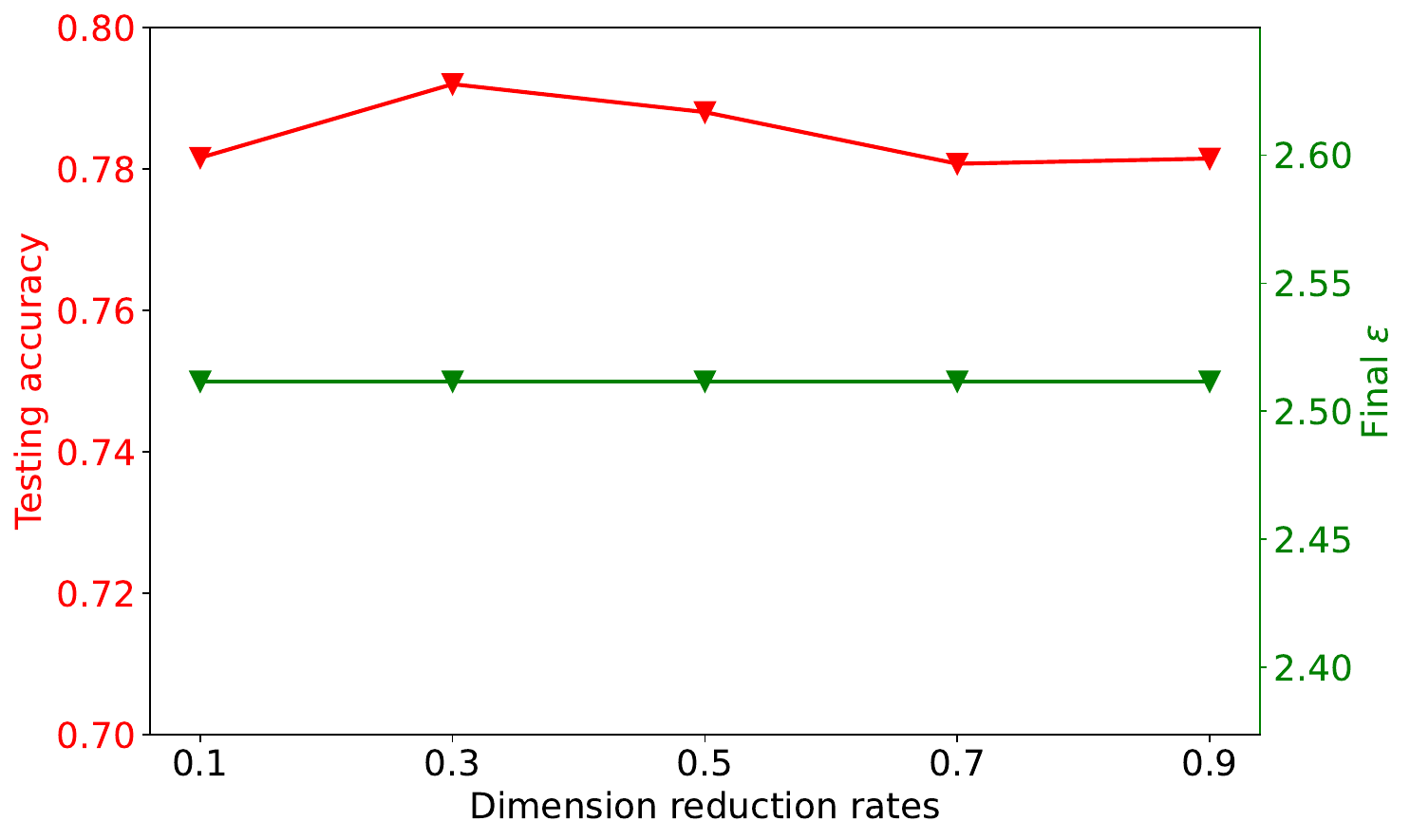}
        \caption{Dimension vs. accuracy/privacy for SVHN}
        \label{fig:dimension_privacy}
    \end{subfigure}
    \caption{Impacts of parameters in D2P2-SGD}
    \label{fig:impact_params}
    \vspace{-0.2in}
\end{figure*}
\noindent\textbf{Impact of $\sigma_\epsilon$.}
From Figure~\ref{fig:sigma_vs_accuracy}, it shows that as $\sigma_\epsilon$ value increases, the final model accuracy drops for DP2-SGD and D2P2-SGD. This first validates the coupling between projection error $\sigma^2_A$ and the noise variance $\sigma^2_{\epsilon, k}$ in Theorem~\ref{theorem_3} and shows the trade-off between the utility and privacy. When $\sigma_\epsilon<6$, which can be treated as a low privacy regime, D2P2-SGD maintains better performance than DPSGD and D2P-SGD, while underperforming in the high privacy regime after $\sigma_\epsilon>6$. This intuitively shows us a careful selection of $\sigma_\epsilon$ is required to balance the trade-off. Figure~\ref{fig:sigma_vs_eps} delivers a similar conclusion in terms of privacy loss. 

\noindent\textbf{Impact of $B$.} 
As suggested by Theorem~\ref{theorem_1}, we can adjust the privacy loss by setting the batch size $B$. In Figure~\ref{fig:batch_size_privacy}, we can observe that when batch size increases from 256 to 1024, it increases the upper bound for $\varepsilon\leq\frac{C_1B^2K}{n^2}$ such that the privacy loss is relatively higher through all $\sigma_\epsilon$ for D2P2-SGD, leading to the better performance. This essentially validates the condition $B=\sigma^2/\xi^2$, where $\sigma^2$ decreases in the error bounds, leading to better convergence due to the smaller $\xi$. 

\noindent\textbf{Impact of $p$.} Figure~\ref{fig:dimension_privacy} shows the impact of different lower dimensions on the testing accuracy and privacy loss. It immediately suggests that the performance of random projection varies with different $p$ values. The optimal one is a 30\% reduction rate, which implies that random projection can assist in model learning efficiency if the $p$ value is chosen properly. Instead, the privacy loss is independent of the dimension change based on Figure~\ref{fig:dimension_privacy}. Thus, D2P2-SGD allows us to reduce the computational complexity by random projection while maintaining privacy. The dimension can be chosen in a wide range and will not affect model performance significantly, including accuracy and privacy.

\noindent\textbf{Limitation.}
Though D2P2-SGD has shown good performance compared to the existing baselines, some potential limitations exist, which can also help us close such gaps in future work. First, D2P2-SGD may not work well in scenarios with \textit{high privacy restrictions}. As we have the decaying noise variance that ensures decent model performance, privacy loss will inevitably be the resulting outcome. One can carefully tune these parameters to obtain acceptable values, but it is still fairly challenging to scale. One way to get rid of this is to develop more effective dynamic DP mechanisms such that the tradeoff between utility and privacy can be handled better. Second, getting an optimal $p$ value for random projection may be difficult as well. Though based on Johnson-Lindenstrauss Lemma, $p$ can be analytically obtained, its practical values for different scenarios have not yet been accessible in a principled manner.

\section{Conclusions and Broader Impacts}\label{conclusion}
This work presents a novel differentially private optimizer termed D2P2-SGD with a dynamic DP mechanism, automatic gradient clipping, and model compression, which reveals the synthesis among privacy, utility, and complexity. Specifically, we establish the dynamic privacy guarantee such that a relatively larger variance is required in the early phase of optimization to compensate for the privacy loss in the latter phase. Given a pre-defined dynamic variance, D2P2-SGD enables a slightly tighter error bound compared to vanilla DPSGD with a static DP mechanism. Empirical results are shown to validate the theory first and then compared with baselines using popular models and benchmark datasets. Compared to vanilla DPSGD, our D2P2-SGD is more robust against larger noise variance but with a slightly larger privacy loss. However, the accuracy is significantly improved without dependence on the large dimension. The broader vision of this work is to advance the field of differentially private machine learning with the potential impact of building privacy-aware deep learning models with highly sensitive information for critical sectors such as healthcare and national security.

\section*{Acknowledgments}
We sincerely thank the anonymous reviewers for their thorough evaluation and insightful recommendations, which have
helped improve the presentation and clarity of this work. This work was supported by the COALESCE: COntext Aware
LEarning for Sustainable CybEr-Agricultural Systems (CPS Frontier
\#1954556). 

\bibliography{tmlr}
\bibliographystyle{tmlr}

\appendix
\section{Appendix}
\subsection{Additional Related Works}\label{additional_related_works}
Beyond the works presented in the main content, differential privacy with dimension reduction has received considerable attention recently due to the emerging data-centric deep learning techniques, which require privacy-preserving yet efficient learning. Inspired by DPSGD~\cite{bassily2014private}, Zhou et al.~\cite{zhou2020bypassing} proposed projected DPSGD performing noise reduction by projecting the noisy gradients to a low-dimensional subspace, which is induced by the top gradient eigenspace on a small public dataset. Though PDP-SGD is theoretically and empirically validated well, the adoption of a public dataset may not be feasible in some scenarios, particularly those being data privacy-critical. Another work~\cite{mireshghallah2022differentially} developed a technique called differentially private model compression specifically for pre-trained large language models such as GPT-2, leveraging a knowledge distillation algorithm. However, the lightweight model comes at the cost of accuracy loss. To mitigate this issue, the authors developed the differentially private iterative magnitude pruning, which produces compressed models whose performance is comparable to the original models. Similarly, the author from~\cite{kasiviswanathan2021sgd} found that with a constraint set, SGD can be operated with the lower-dimensional (compressed) stochastic gradients, and applied it to differentially private learning with nonconvex functions to improve error bounds. However, the additive noise mechanism is still static such that the model performance gap cannot be shrunk along with iterations. 
In terms of differential privacy, the authors in~\cite{koloskova2023gradient} proposed a new stochastic optimization method that is coupled with linearly correlated noise and showed explicit convergence rates in both convex and non-convex objective functions. They also devised a new objective for the offline matrix factorization, improving the convergence property. Du et al.~\cite{du2021dynamic} presented for the first time a dynamic differentially private mechanism for SGD that can adaptively trade off between the utility and the privacy. However, their additive noise mechanism involving an exponential function could be sophisticated to implement. A more recent work~\cite{feng2024rqp} proposed a novel differentially private optimizer by combining noisy SGD and randomized quantization, which is close to our work. They theoretically analyzed the utility-privacy tradeoff of the proposed scheme on specifically convex losses and empirically showed results regarding the impact of different randomized quantization parameters. Unfortunately, there is no reported result on the nonconvex objectives using the developed method.
\subsection{Additional Algorithms}\label{additional_algo}
In this section, we present additional algorithm frameworks degenerated from D2P2-SGD, including D2P-SGD (in Algorithm~\ref{alg:d2p-sgd}) and DP2-SGD (in Algorithm~\ref{alg:dp2-sgd}). We also list out the differences among these three algorithms in Table~\ref{table:variants}.
\begin{algorithm}
\caption{D2P-SGD}
\label{alg:d2p-sgd}
\begin{algorithmic}[1]
    \STATE \textbf{Input:} Model initialization$ \mathbf{x}_1$, step size $\alpha$, the number of epochs $K$, size of mini-batch $B$, training set $\mathcal{D}$, noise sequence $\sigma^2_{\epsilon,1},\sigma^2_{\epsilon,2},...,\sigma^2_{\epsilon,K}, \gamma$
    \FOR{each $k$ in $1$ to $K$}
    \STATE Split the dataset $\mathcal{D}$ into multiple mini-batches with size $B$ and randomly sample one $\mathcal{B}$
    \STATE Clip the per-sample gradient $\hat{\mathbf{g}}_k^s=\nabla f(\mathbf{x}_k;s)/(\|\nabla f(\mathbf{x}_k;s)\|+\gamma),\;s\in\mathcal{B}$
    \STATE Calculate the mini-batch stochastic gradient $\mathbf{g}_k=\frac{1}{B}\sum_{s\sim\mathcal{B}}\hat{\mathbf{g}}^s_k$
    \STATE Perturb the gradient $\mathbf{g}_k$ using dynamic noise: $\tilde{\mathbf{g}}_{k}=\mathbf{g}_k+\epsilon_k$, where $\epsilon_k\sim\mathcal{N}(0,\sigma^2_{\epsilon,k}\mathbb{I}_p)$
    \STATE Update parameter using projected noisy gradient: $\mathbf{x}_{k+1}=\mathbf{x}_k-\alpha\tilde{\mathbf{g}}_{k}$
    \ENDFOR
    \STATE \textbf{Output:} $\mathbf{x}_K$
\end{algorithmic}
\end{algorithm}

\begin{algorithm}
\caption{DP2-SGD}
\label{alg:dp2-sgd}
\begin{algorithmic}[1]
    \STATE \textbf{Input:} Model initialization$ \mathbf{x}_1$, step size $\alpha$, the number of epochs $K$,lower dimension $p$, random matrices $A_1,A_2,...,A_K$, size of mini-batch $B$, training set $\mathcal{D}$, noise variance $\sigma^2_\epsilon, \gamma$
    \FOR{each $k$ in $1$ to $K$}
    \STATE Split the dataset $\mathcal{D}$ into multiple mini-batches with size $B$ and randomly sample one $\mathcal{B}$
    \STATE Clip the per-sample gradient $\hat{\mathbf{g}}_k^s=\nabla f(\mathbf{x}_k;s)/(\|\nabla f(\mathbf{x}_k;s)\|+\gamma),\;s\in\mathcal{B}$
    \STATE Calculate the mini-batch stochastic gradient $\mathbf{g}_k=\frac{1}{B}\sum_{s\sim\mathcal{B}}\hat{\mathbf{g}}^s_k$
    \STATE Project noisy gradient using $A_k^\top$: $\tilde{\mathbf{g}}_{k}=A_k(\frac{1}{\sqrt{p}}A_k^\top\mathbf{g}_k+\epsilon_k)$, where $\epsilon_k\sim\mathcal{N}(0,\sigma^2_\epsilon\mathbb{I}_p)$
    \STATE Update parameter using projected noisy gradient: $\mathbf{x}_{k+1}=\mathbf{x}_k-\alpha\tilde{\mathbf{g}}_{k}$
    \ENDFOR
    \STATE \textbf{Output:} $\mathbf{x}_K$
\end{algorithmic}
\end{algorithm}


\begin{table}[htp]
\caption{D2P2-SGD and its different variants.}
\begin{center}
\begin{threeparttable}
\begin{tabular}{c c c c c}
    \toprule
    \textbf{Method} & $p$ & $\sigma^2_{\epsilon,k}$ & $A_k$ \\ \midrule
      \texttt{D2P2-SGD}   & $\geq 1$& $\frac{\sigma^2_{\epsilon}}{\sqrt{k}}$                  &  $\mathbb{R}^{d\times p}$\\
      \texttt{D2P-SGD}   & N/A                     &  $\frac{\sigma^2_{\epsilon}}{\sqrt{k}}$  &$I\in\mathbb{R}^{d\times d}$         \\ 
      \texttt{DP2-SGD}   & $\geq 1$ & $\sigma^2_{\epsilon}$                     & $\mathbb{R}^{d\times p}$            \\
      \bottomrule
      
\end{tabular}
\end{threeparttable}
\end{center}
\label{table:variants}
\end{table}
\subsection{Proof for privacy guarantee}\label{privacy_proof}
In this subsection, we provide proof of the privacy guarantee. Before that, we present some existing results to characterize the proof. We begin with a function defined in the following.
\begin{definition}\label{definition_5}
    Denote by $\mathcal{M}$ a randomized mechanism and by $\mathcal{D}$ and $\mathcal{D}'$ two adjacent inputs. The parameterized R\'enyi divergence between two distributions is defined as:
    \begin{equation}\label{eq_4}
    \begin{split}
        e_\mathcal{M}(\eta)&=\textnormal{sup}_{\mathcal{D},\mathcal{D}'}D_\eta(\mathcal{M}(\mathcal{D})||\mathcal{M}(\mathcal{D}'))\\&=\textnormal{sup}_{\mathcal{D},\mathcal{D}'}\frac{1}{\eta-1}\textnormal{log}\mathbb{E}_{\theta\sim\mathcal{M}(\mathcal{D}')}\bigg[\bigg(\frac{\mathcal{M}(\mathcal{D})(\theta)}{\mathcal{M}(\mathcal{D}')(\theta)}\bigg)^\eta\bigg],
    \end{split}
    \end{equation}
    where $\mathcal{M}(\mathcal{D})(\theta)$ refers to the density at $\theta$ of this distribution.
\end{definition}
With the above definition, the first result as follows is from Theorem 2.1 in~\cite{abadi2016deep}. 
\begin{lemma}\label{lemma_3}
    Let $\mathcal{M}=\mathcal{M}_K\circ\mathcal{M}_{K-1}\circ\cdot\cdot\cdot\circ\mathcal{M}_1$ be defined in an interactively compositional way, then for any fixed $\eta \geq 1$,
    \begin{equation}
        e_\mathcal{M}(\eta)\leq \sum_{k=1}^Ke_{\mathcal{M}_i}(\eta).
    \end{equation}
    $e_\mathcal{M}(\eta)$ is defined as in Definition~\ref{definition_5}.
\end{lemma}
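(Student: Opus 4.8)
The plan is to reduce the additive bound on $e_\mathcal{M}$ to a multiplicative bound on the associated $\eta$-th moments, and then establish the latter by peeling off the mechanisms $\mathcal{M}_k$ one at a time via iterated conditional expectations. For any mechanism $\mathcal{N}$ write
\begin{equation*}
\Lambda_\mathcal{N}(\eta) := \sup_{\mathcal{D},\mathcal{D}'}\mathbb{E}_{\theta\sim\mathcal{N}(\mathcal{D}')}\left[\left(\frac{\mathcal{N}(\mathcal{D})(\theta)}{\mathcal{N}(\mathcal{D}')(\theta)}\right)^\eta\right].
\end{equation*}
Since $x\mapsto\frac{1}{\eta-1}\log x$ is increasing for $\eta>1$, the supremum in Definition~\ref{definition_5} commutes with the logarithm and $e_\mathcal{N}(\eta)=\frac{1}{\eta-1}\log\Lambda_\mathcal{N}(\eta)$. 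Hence the claimed inequality $e_\mathcal{M}(\eta)\le\sum_k e_{\mathcal{M}_k}(\eta)$ is equivalent to the product bound $\Lambda_\mathcal{M}(\eta)\le\prod_{k=1}^K\Lambda_{\mathcal{M}_k}(\eta)$, which is what I would prove. The boundary case $\eta=1$ then follows by continuity, since $D_1$ is the KL divergence, which also composes additively.

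First I would exploit the interactive-composition structure to factor the output density. Writing an output of $\mathcal{M}$ as $\theta=(\theta_1,\dots,\theta_K)$, where $\theta_k$ is produced by $\mathcal{M}_k$ acting on $\mathcal{D}$ together with the auxiliary input $\theta_{<k}:=(\theta_1,\dots,\theta_{k-1})$, the joint density factors as $\mathcal{M}(\mathcal{D})(\theta)=\prod_{k=1}^K\mathcal{M}_k(\mathcal{D};\theta_{<k})(\theta_k)$. Consequently the likelihood ratio between neighbouring inputs is a product of per-step ratios, and for fixed $\mathcal{D},\mathcal{D}'$,
\begin{equation*}
\mathbb{E}_{\theta\sim\mathcal{M}(\mathcal{D}')}\left[\left(\frac{\mathcal{M}(\mathcal{D})(\theta)}{\mathcal{M}(\mathcal{D}')(\theta)}\right)^\eta\right]=\mathbb{E}_{\theta\sim\mathcal{M}(\mathcal{D}')}\left[\prod_{k=1}^K\left(\frac{\mathcal{M}_k(\mathcal{D};\theta_{<k})(\theta_k)}{\mathcal{M}_k(\mathcal{D}';\theta_{<k})(\theta_k)}\right)^\eta\right].
\end{equation*}

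Next I would bound this by iterated expectation, conditioning on the history $\theta_{<K}$ and integrating out $\theta_K$ last. The inner conditional expectation over $\theta_K\sim\mathcal{M}_K(\mathcal{D}';\theta_{<K})$ of the $K$-th factor is, by definition, at most $\Lambda_{\mathcal{M}_K}(\eta)$ — crucially, uniformly over every realized history $\theta_{<K}$, because the supremum defining $\Lambda_{\mathcal{M}_K}$ ranges over all neighbouring pairs and, in the interactive model, all admissible auxiliary inputs. Pulling this constant factor out and repeating the argument for $\theta_{K-1},\dots,\theta_1$ yields $\mathbb{E}_{\theta\sim\mathcal{M}(\mathcal{D}')}[\,\cdots\,]\le\prod_{k=1}^K\Lambda_{\mathcal{M}_k}(\eta)$. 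Taking the supremum over $\mathcal{D},\mathcal{D}'$ gives $\Lambda_\mathcal{M}(\eta)\le\prod_k\Lambda_{\mathcal{M}_k}(\eta)$, and applying $\frac{1}{\eta-1}\log(\cdot)$ recovers the stated additive bound.

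The main obstacle is the uniformity in the peeling step: the conditional $\eta$-moment of the $k$-th factor must be controlled by $\Lambda_{\mathcal{M}_k}(\eta)$ for \emph{every} possible prefix $\theta_{<k}$, not merely in expectation. This is exactly where the adaptive/interactive nature of the composition enters, and where care is needed in interpreting the per-mechanism quantity $e_{\mathcal{M}_k}$: it must be understood with a supremum that also absorbs the auxiliary input, so that the bound holds pointwise in the history before the outer expectation is taken. Once this convention is fixed, the tower-property argument is routine, and I would present it as a backward induction on the number of remaining factors to keep the bookkeeping clean.
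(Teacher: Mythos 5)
Your proof is correct and takes essentially the same route as the paper, whose own ``proof'' simply defers to Theorem 2.1 (Composability) of \cite{abadi2016deep}: that argument is exactly your reduction to a product bound on the $\eta$-th moments, the factorization of the joint output density under interactive composition, and the peeling of mechanisms via iterated conditional expectations. Your caveat that the per-mechanism quantity $e_{\mathcal{M}_k}(\eta)$ must be understood with a supremum that also ranges over the auxiliary input (the realized history) is precisely the convention adopted in that reference, so no gap remains.
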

\begin{proof}
    The proof directly follows from the proof of Theorem 2.1 (Composability) in~\cite{abadi2016deep}.
\end{proof}
The $K$ iterations of D2P2-SGD can be decomposed into $K$ compositions of sub-sampled Gaussian mechanisms with uniform sampling without replacement. We denote by $\mathcal{G}(\sigma_\epsilon)\circ\mathcal{S}(n,B)$. With the abuse of notations, we denote by $\tilde{e}(\eta)$ the privacy-accountant functional of Eq.~\ref{eq_4} in Definition~\ref{definition_5}. Another lemma in the following introduces a sufficient and necessary condition for a mechanism to be $(\varepsilon,\delta)-$ differentially private.
\begin{lemma}\label{lemma_4}
    Let $\mathcal{M}$ be a randomized mechanism. $\mathcal{M}$ is $(\varepsilon,\delta)-$DP if and only if $\delta\geq \textnormal{exp}[(\eta-1)(e_\mathcal{M}(\eta)-\varepsilon)]$, for some $\eta>1$.
\end{lemma}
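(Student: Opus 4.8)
The plan is to prove the two implications separately, reducing $(\varepsilon,\delta)$-differential privacy to a tail bound on the \emph{privacy loss random variable} and then controlling that tail through the moment functional $e_{\mathcal{M}}(\eta)$ via a Chernoff-type argument. Throughout, fix a pair of neighboring datasets $\mathcal{D},\mathcal{D}'$, write $P=\mathcal{M}(\mathcal{D})$ and $Q=\mathcal{M}(\mathcal{D}')$ for the corresponding output densities, and define the privacy loss $Z=\log\frac{P(\theta)}{Q(\theta)}$ with $\theta\sim P$. The first observation I would record is the standard reformulation: $\mathcal{M}$ is $(\varepsilon,\delta)$-DP if and only if $\Pr_{\theta\sim P}[\theta\in\mathcal{Y}]-e^{\varepsilon}\Pr_{\theta\sim Q}[\theta\in\mathcal{Y}]\le\delta$ for every event $\mathcal{Y}$ and every neighboring pair, so the whole argument can be phrased in terms of $P$, $Q$, and $Z$.

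For the sufficiency direction ($\Leftarrow$), assume some $\eta>1$ satisfies $\delta\ge\exp[(\eta-1)(e_{\mathcal{M}}(\eta)-\varepsilon)]$. I would introduce the ``bad set'' $\mathcal{B}=\{\theta:P(\theta)>e^{\varepsilon}Q(\theta)\}$ and split any event as $\mathcal{Y}=(\mathcal{Y}\cap\mathcal{B})\cup(\mathcal{Y}\cap\mathcal{B}^{c})$. On $\mathcal{B}^{c}$ the density ratio is at most $e^{\varepsilon}$, so $P(\mathcal{Y}\cap\mathcal{B}^{c})\le e^{\varepsilon}Q(\mathcal{Y})$, which leaves $P(\mathcal{Y})-e^{\varepsilon}Q(\mathcal{Y})\le P(\mathcal{B})=\Pr_P[Z>\varepsilon]$. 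The remaining task is a pure tail bound: setting $\lambda=\eta-1>0$ and applying Markov's inequality to $e^{\lambda Z}$ gives
\[
\Pr_P[Z>\varepsilon]\le e^{-\lambda\varepsilon}\,\mathbb{E}_P\!\big[e^{\lambda Z}\big]=e^{-(\eta-1)\varepsilon}\,\mathbb{E}_{\theta\sim Q}\!\Big[\big(\tfrac{P(\theta)}{Q(\theta)}\big)^{\eta}\Big],
\]
where the identity $\mathbb{E}_P[(P/Q)^{\eta-1}]=\mathbb{E}_Q[(P/Q)^{\eta}]$ is just a change of measure. By the definition of $e_{\mathcal{M}}(\eta)$ as a supremum over neighboring pairs, the last expectation is at most $\exp[(\eta-1)e_{\mathcal{M}}(\eta)]$, so $\Pr_P[Z>\varepsilon]\le\exp[(\eta-1)(e_{\mathcal{M}}(\eta)-\varepsilon)]\le\delta$. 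Since this bounds $P(\mathcal{Y})-e^{\varepsilon}Q(\mathcal{Y})$ uniformly over all events and neighbors, $(\varepsilon,\delta)$-DP follows. This mirrors the tail-bound step of Theorem~2 in~\cite{abadi2016deep}, now read off the functional $e_{\mathcal{M}}$.

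The necessity direction ($\Rightarrow$) is where I expect the real difficulty, and it is the step I would budget most care for. The sufficiency argument runs through a Chernoff bound, which is generically loose, so one cannot simply invert it to recover a specific admissible order $\eta$ from a single $(\varepsilon,\delta)$ guarantee. The approach I would take is to work with the hockey-stick (privacy-profile) divergence $\delta_{\mathcal{M}}(\varepsilon)=\sup_{\mathcal{D},\mathcal{D}'}\int\big(P-e^{\varepsilon}Q\big)_{+}$, for which $(\varepsilon,\delta)$-DP is exactly $\delta\ge\delta_{\mathcal{M}}(\varepsilon)$, and then exploit the variational relationship between this divergence and the moment functional to produce an order $\eta$ at which the Chernoff bound is (near) tight, e.g.\ by exhibiting the worst-case pair attaining the supremum in $e_{\mathcal{M}}$ and matching the upper tail with a lower bound on $\mathbb{E}_Q[(P/Q)^{\eta}]$. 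The subtle point, which I would flag explicitly, is that this converse is clean only when the optimizing order in $\inf_{\eta>1}\exp[(\eta-1)(e_{\mathcal{M}}(\eta)-\varepsilon)]$ is attained and the mechanism's privacy profile is governed by that single order; absent such structure the statement should be read primarily as the sufficiency half (the direction the paper actually uses to certify privacy), with necessity holding in the tight-accountant sense. I would therefore either invoke the corresponding converse lemma from the R\'enyi/moments-accountant literature or restrict the claim to the regime where the infimum over $\eta$ is achieved.
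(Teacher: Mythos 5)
Your sufficiency direction ($\Leftarrow$) is correct and follows essentially the same route as the paper's proof: Markov's inequality applied to $e^{(\eta-1)Z}$, the change of measure $\mathbb{E}_{P}[e^{(\eta-1)Z}]=\mathbb{E}_{Q}[(P/Q)^{\eta}]$, and then the supremum defining $e_{\mathcal{M}}(\eta)$. Your version is in fact more complete than the paper's: the bad-set decomposition $P(\mathcal{Y})\le P(\mathcal{B})+e^{\varepsilon}Q(\mathcal{Y})$ supplies the step ``pointwise tail bound on the privacy loss implies $(\varepsilon,\delta)$-DP,'' which the paper merely asserts (and asserts as an \emph{equivalence}, which is too strong; only the direction you proved holds).

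The gap is the necessity half ($\Rightarrow$): the lemma is an ``if and only if,'' and you never prove the ``only if.'' Proposing to invoke an unnamed converse lemma from the literature, or to restrict the claim to a regime where the infimum over $\eta$ is attained, is not a proof of the stated statement, so judged against the lemma your attempt is incomplete. That said, your suspicion about this direction is exactly right, and it is worth making it concrete: the necessity direction is false in general. Take the mechanism that outputs $\perp$ with probability $1-\delta$ and outputs the dataset itself with probability $\delta$. It is $(0,\delta)$-DP, hence $(\varepsilon,\delta)$-DP for every $\varepsilon\ge 0$; yet for neighboring datasets $P\not\ll Q$, so $e_{\mathcal{M}}(\eta)=\infty$ for every $\eta>1$ and no $\eta$ can satisfy $\delta\ge \exp[(\eta-1)(e_{\mathcal{M}}(\eta)-\varepsilon)]$. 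Consistently with this, the paper's own necessity argument is unsound: it opens with the premise that $(\varepsilon,\delta)$-DP implies $\Pr_{\zeta\sim P}[L(\zeta)>\varepsilon]\le\delta$, but that pointwise tail bound is strictly \emph{stronger} than DP (DP only yields $\int_{L>\varepsilon}(P-e^{\varepsilon}Q)\le\delta$), and it further requires $\sup_{\zeta:L(\zeta)>\varepsilon}e^{(c-2)L(\zeta)}$ to be finite, i.e.\ a bounded privacy loss, which is nowhere assumed. So the honest summary is: your sufficiency half matches the paper and is sound; the necessity half is missing from your write-up, but it is also not actually established by the paper and cannot be established without additional hypotheses on the mechanism.
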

\begin{proof}
    Let $P=\mathcal{M}(x)$ and $Q=\mathcal{M}(x')$ for adjacent datasets $x, x'$ and define the privacy loss random variable $L(\zeta)=\textnormal{log}\frac{Pr[P=\zeta]}{Pr[Q=\zeta]}$.
    We first show the sufficiency, i.e., if the condition holds, then $\mathcal{M}$ is $(\varepsilon,\delta)$-DP. Based on Definition~\ref{definition_5}, we know that 
    \begin{equation}
        \textnormal{exp}[(\eta-1)e_{\mathcal{M}}(\eta)] = \mathbb{E}_{\zeta\sim Q}\bigg[e^{(\eta-1)L(\zeta)}\bigg].
    \end{equation}
    We then apply Markov's inequality to the privacy loss under $P$ such that 
    \begin{equation}\label{eq_7}
        Pr_{\zeta\sim P}\bigg[L(\zeta)>\varepsilon\bigg] = Pr_{\zeta\sim P}\bigg[e^{(\eta-1)L(\zeta)}>e^{(\eta-1)\varepsilon}\bigg]\leq \frac{\mathbb{E}_{\zeta\sim P}\bigg[e^{(\eta-1)L(\zeta)}\bigg]}{e^{(\eta-1)\varepsilon}}.
    \end{equation}
    Transforming the expectation under $P$ to one under $Q$ yields
    \begin{equation}\label{eq_8}
        \mathbb{E}_{\zeta\sim P}\bigg[e^{(\eta-1)L(\zeta)}\bigg]=\mathbb{E}_{\zeta\sim Q}\bigg[e^{\eta L(\zeta)}\bigg] = \textnormal{exp}[(\eta-1)e_{\mathcal{M}}(\eta)].
    \end{equation}
    Substituting Eq.~\ref{eq_8} to Eq.~\ref{eq_7} and simplifying it grants us the following 
    \begin{equation}
        Pr_{\zeta\sim P}\bigg[L(\zeta)>\varepsilon\bigg]\leq \textnormal{exp}[(\eta-1)e_{\mathcal{M}}(\eta)]\leq \delta.
    \end{equation}
    The second inequality is based on the condition $\delta\geq \textnormal{exp}[(\eta-1)(e_\mathcal{M}(\eta)-\varepsilon)]$. Thus, this implies $Pr[L>\varepsilon]\leq \delta$, which is equivalent to $(\varepsilon, \delta)$-DP.
    We now show the necessity, i.e., If $\mathcal{M}$ is $(\varepsilon,\delta)$-DP, then the condition holds for some $\eta>1$. By the definition of $(\varepsilon,\delta)$-DP, the privacy loss $L$ satisfies $Pr_{\zeta\sim P}[L(\zeta)>\varepsilon]\leq \delta$. Then, we have for any $c>1$
    \begin{equation}
        \textnormal{exp}[(c-1)e_{\mathcal{M}}(c)]=\mathbb{E}_{\zeta\sim Q}\bigg[e^{(c-1)L(\zeta)}\bigg]=\int_{L\leq \varepsilon}e^{(c-1)L}dQ+\int_{L> \varepsilon}e^{(c-1)L}dQ.
    \end{equation}
    We next bound the integrals. For the first term, we have \[\int_{L\leq \varepsilon}e^{(c-1)L}dQ\leq e^{(c-1)\varepsilon}.\] While for the second integral, with the fact that $dP=e^LdQ$ and $Pr[L>\varepsilon]\leq \delta$, we can obtain
    \[\int_{L> \varepsilon}e^{(c-1)L}dQ=\int_{L> \varepsilon}e^{(c-2)L}dP\leq \textnormal{sup}_{\zeta:L(\zeta)>\varepsilon}e^{(c-2)L(\zeta)}\cdot \delta.\]
    We will now optimize $c$ in the following to conclude the proof. 
    The $\textnormal{sup}_{\zeta:L(\zeta)>\varepsilon}e^{(c-2)L(\zeta)}\cdot \delta$ may be large, but for a fixed $\delta>0$, there exists a $c>1$ (sufficiently close to 1) such that 
    \[\textnormal{exp}[(c-1)e_{\mathcal{M}}(c)]\leq e^{(c-1)\varepsilon}+\delta\cdot H(c),\] where $H(c)$ is a constant depending on $c$ and $\delta$. Therefore, we can rearrange the above relationship to obtain the following:
    \[e_{\mathcal{M}}(c)\leq\varepsilon+\frac{\textnormal{log}(1+\delta\cdot H(c))}{c-1}.\] Hence, for a sufficiently large $c$, i.e., $c=1+\sqrt{2\textnormal{log}(1/\delta)/\varepsilon}$, this implies that $\delta>\textnormal{exp}[(c-1)(e_{\mathcal{M}}(c)-\varepsilon)$, which completes the proof.
\end{proof}
Before the formal proof for the privacy guarantee in Theorem~\ref{theorem_1}, we present another auxiliary technical lemma, which dictates the privacy amplification by uniformly sub-sampling without replacement.
\begin{lemma}\label{lemma_5}
    For the mechanism induced by D2P2-SGD, $\mathcal{G}(\sigma_\epsilon)\circ\mathcal{S}(n,B)$, with $\frac{B}{n}< \frac{1}{10}$, the following privacy accountant holds:
    \begin{equation}
        \tilde{e}(\eta)\leq \frac{7B^2\eta}{\sigma_\epsilon^2n^2},\;\forall\eta \leq\frac{\sigma^2_\epsilon}{2}\textnormal{log}\bigg(\frac{n}{B}\bigg).
    \end{equation}
\end{lemma}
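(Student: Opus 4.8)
The plan is to recognize this statement as the subsampled-Gaussian log-moment (Rényi) bound of Abadi et al., specialized to sampling without replacement, and to reproduce its derivation. First I would reduce the $p$-dimensional mechanism to a one-dimensional computation. By the per-sample normalization in Line 4 of Algorithm~\ref{alg:d2p2-sgd}, each clipped per-sample gradient has $\ell_2$-norm at most $G=1$, so replacing one sample changes $\frac{1}{\sqrt{p}}A_k^\top\mathbf{g}_k$ by a vector of bounded norm; after rescaling this is the canonical $\ell_2$-sensitivity-$1$ Gaussian mechanism, and the random projection itself is post-processing and so irrelevant here. Since $\mathcal{N}(0,\sigma_\epsilon^2\mathbb{I}_p)$ is rotationally invariant, the Rényi divergence depends only on the displacement along the sensitivity direction, collapsing the problem to comparing $\mu_0=\mathcal{N}(0,\sigma_\epsilon^2)$ against the sampling-induced mixture $\mu_1=(1-q)\mu_0+q\,\mathcal{N}(1,\sigma_\epsilon^2)$ with $q=B/n$.

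Next I would write the quantity $\tilde{e}(\eta)$ from Definition~\ref{definition_5} as $\frac{1}{\eta-1}\log\mathbb{E}_{z\sim\mu_1}\big[(\mu_0(z)/\mu_1(z))^\eta\big]$ (together with the symmetric direction), and expand the ratio. Writing $\mu_1=\mu_0\big(1+q(r(z)-1)\big)$ where $r(z)=\exp\big((2z-1)/(2\sigma_\epsilon^2)\big)$ is the likelihood ratio between the two Gaussians, a binomial/Taylor expansion of $(\mu_1/\mu_0)^\eta$ (and of its reciprocal) in powers of $q$ produces a series whose order-$0$ term equals $1$, whose order-$1$ term integrates to zero because $\mathbb{E}_{\mu_0}[r(z)-1]=0$, and whose leading surviving contribution is the order-$2$ term scaling like $q^2\,\mathbb{E}_{\mu_0}[(r(z)-1)^2]=q^2(e^{1/\sigma_\epsilon^2}-1)\asymp q^2/\sigma_\epsilon^2$. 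This is precisely where the $B^2/(n^2\sigma_\epsilon^2)$ dependence of the claimed bound originates, and it accounts for a constant close to $1/(1-q)$.

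The main work, and the main obstacle, is bounding the higher-order terms $t\ge 3$ of the expansion and showing they do not spoil the constant. Each such term involves $\binom{\eta}{t}q^t\,\mathbb{E}_{\mu_0}[|r(z)-1|^t]$, and the Gaussian moments of $r(z)-1$ grow super-exponentially in $t/\sigma_\epsilon^2$, so one must use both hypotheses in tandem: the condition $q=B/n<\frac{1}{10}$ makes the geometric factor $q^{t-2}$ summable, while the range restriction $\eta\le\frac{\sigma_\epsilon^2}{2}\log(n/B)$ keeps $\binom{\eta}{t}q^t$ times the $t$-th moment dominated by the order-$2$ term for every $t\ge 3$. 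Carefully collecting these estimates (following the tail analysis in Abadi et al.) yields a geometric series summing to at most a small constant multiple of the leading term, and tracking the numerical constants through the $q<\frac{1}{10}$ regime gives the stated factor $7$, i.e. $\tilde{e}(\eta)\le \frac{7B^2\eta}{\sigma_\epsilon^2 n^2}$. I would close by remarking that sampling without replacement only tightens the with-replacement estimate, so the bound transfers, and by invoking Lemma~\ref{lemma_3} one may then compose the $K$ iterations additively in the privacy proof of Theorem~\ref{theorem_1}.
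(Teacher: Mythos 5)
Your proposal is correct and takes essentially the same route as the paper: both reduce the mechanism to a sensitivity-$1$ subsampled Gaussian and rest on the log-moment bound of Lemma 3 in \cite{abadi2016deep}, then use the two hypotheses $B/n<1/10$ and $\eta\leq\frac{\sigma_\epsilon^2}{2}\log(n/B)$ to absorb the non-leading contribution into the constant $7$. The only difference is presentational: the paper invokes that moment bound as a black box, writing $\tilde{e}(\eta)\leq\frac{q^2\eta}{\sigma_\epsilon^2}\bigl(1+\frac{3.5q\eta}{\sigma_\epsilon^2}\bigr)$ with $q=B/n$ and bounding the bracket by $1+1.75\,q\log(1/q)\approx 1.42\leq 7$, whereas you sketch the internal derivation of that bound (the order-by-order expansion with vanishing first-order term and dominant second-order term), which is Abadi et al.'s own argument rather than a genuinely new route.
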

\begin{proof}
    As per-sample gradients are clipped to $l_2$ norm 1 in D2P2-SGD, this ensures the sensitivity $\Delta=1$ for the mini-batch sum. Let $q=B/n$. Based on Lemma 3 from~\cite{abadi2016deep}, we can obtain that \[\tilde{e}(\eta)\leq\frac{q^2\eta}{\sigma^2_\epsilon}+\frac{3.5q^2\eta^2}{\sigma^4_\epsilon}=\frac{q^2\eta}{\sigma^2_\epsilon}\bigg(1+\frac{3.5q\eta}{\sigma^2_\epsilon}\bigg).\] 
The condition $\eta\leq \frac{\sigma^2_\epsilon}{2}\textnormal{log}(n/B)$ implies $\eta\leq \frac{\sigma^2_\epsilon}{2}\textnormal{log}(1/q)$. As $1/q\leq 1/10$, then we have:
\[
    \frac{3.5q\eta}{\sigma^2_\epsilon}= \frac{3.5q}{\sigma^2_\epsilon}\frac{\sigma^2_\epsilon}{2}\textnormal{log}(1/q)=1.75q\textnormal{log}(1/q)\leq 1.75\cdot 0.24\approx 0.42.
\]
Hence, $\tilde{e}(\eta)\leq \frac{7q^2\eta}{\sigma_\epsilon^2}$, which completes the proof.
\end{proof}
With this in hand, we are now ready to prove Theorem~\ref{theorem_1}. We restate it in the following for completeness.

\textbf{Theorem 1:} (Privacy)
    Let Assumption~\ref{assump_2} hold. There exist constants $C_1, C_2>0$ such that for any $\varepsilon\leq \frac{C_1B^2K}{n^2}$, D2P2-SGD is $(\varepsilon,\delta)$-differentially private for any $\delta>0$, if $\sigma^2_{\epsilon}\geq \frac{C_2B^2K^2\textnormal{ln}(1/\delta)}{n^2\varepsilon^2}$.
\begin{proof}
    With the explicit form of noise variance we have defined in this work, i.e., $\sigma^2_{\epsilon,k}=\frac{\sigma^2_{\epsilon}}{k}$, it is immediately obtained that $\sigma^2_{\epsilon,1}>\sigma^2_{\epsilon,2}>...>\sigma^2_{\epsilon,K}$. If $\sigma^2_{\epsilon,K}\geq \frac{C_2B^2K\textnormal{ln}(1/\delta)}{n^2\varepsilon^2}$, then $\sigma^2_{\epsilon,k}\geq \frac{C_2B^2K\textnormal{ln}(1/\delta)}{n^2\varepsilon^2}$ for all $k$. Thus, the core of the proof has now turned to $\sigma^2_\epsilon$. As discussed before, D2P2-SGD can be treated as a composition, denoted by $\mathcal{M}$. In light of Lemma~\ref{lemma_3} and Lemma~\ref{lemma_5}, it is easily obtained that
    \begin{equation}
        e_\mathcal{M}(\eta)\leq\frac{7K^2B^2\eta}{\sigma^2_\epsilon n^2}, \;\forall\eta \leq\frac{\sigma^2_\epsilon}{2}\textnormal{log}\bigg(\frac{n}{B}\bigg).
    \end{equation}
    $K$ is due to the $K$ iterations in D2P2-SGD and $K\leq K^2$. Additionally, based on Lemma~\ref{lemma_4}, we can know that D2P2-SGD is $(\varepsilon,\delta)$-DP if there exists $\eta \leq\frac{\sigma^2_\epsilon}{2}\textnormal{log}\bigg(\frac{n}{B}\bigg)$ such that
    \begin{equation}
        \frac{7K^2B^2\eta}{\sigma^2_\epsilon n^2}\leq\varepsilon/2,\;\textnormal{exp}\bigg(\frac{-(\eta-1)\varepsilon}{2}\bigg)\leq\delta.
    \end{equation}
    It is now easy to verify that if $\varepsilon=C_1B^2K^2/n^2$, all these conditions can be satisfied by setting $\sigma_\epsilon=\frac{C_2BK\sqrt{\textnormal{log}(1/\delta)}}{n\varepsilon}$, for some explicit constants $C_1$ and $C_2$.
\end{proof}
\subsection{Proof for convex objectives}\label{proof_convex}
In this section, we will present the convergence proof for convex functions. Different from the traditional analysis to DPSGD, in D2P2-SGD, the difficulties and complexities of the proof arise from the matrix projection caused by $A_k$ and the inner product between $\mathbf{x}_k-\mathbf{x}^*$ and $\frac{\mathbf{g}_k}{\|\mathbf{g}_k\|+\gamma}$. The matrix product requires random projection properties and the relationship with the outer product of vectors. While for the inner product between $\mathbf{x}_k-\mathbf{x}^*$ and $\frac{\mathbf{g}_k}{\|\mathbf{g}_k\|+\gamma}$, we need to decompose $\mathbf{g}_k$ to $\mathbf{g}_k-\nabla f(\mathbf{x}_k)$ and $\nabla f(\mathbf{x}_k)$. Also, we have to handle the denominator $\|\mathbf{g}_k\|+\gamma$, which is a varying constant with $\|\mathbf{g}_k\|$.

We start with a well-known result regarding the convexity is presented in the following to characterize the analysis.
\begin{lemma}\label{convexity}(convexity~\cite{garrigos2023handbook})
    If $f:\mathbb{R}^d\to\mathbb{R}$ is convex and differentiable, then for $\mathbf{x}, \mathbf{y}\in\mathbb{R}^d$,
    \begin{equation}
        f(\mathbf{x})\geq f(\mathbf{y})+\langle\nabla f(\mathbf{y}),\mathbf{x}-\mathbf{y}\rangle
    \end{equation}
\end{lemma}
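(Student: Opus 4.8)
The plan is to obtain the first-order inequality directly from the definition of convexity by differentiating $f$ along the line segment joining $\mathbf{y}$ to $\mathbf{x}$. First I would fix $\mathbf{x},\mathbf{y}\in\mathbb{R}^d$ and, for each $t\in(0,1]$, apply the defining convexity inequality to the convex combination $(1-t)\mathbf{y}+t\mathbf{x}=\mathbf{y}+t(\mathbf{x}-\mathbf{y})$, which gives $f(\mathbf{y}+t(\mathbf{x}-\mathbf{y}))\leq(1-t)f(\mathbf{y})+tf(\mathbf{x})$. Subtracting $f(\mathbf{y})$ from both sides and dividing by $t>0$ yields the difference-quotient bound
\begin{equation}
    \frac{f(\mathbf{y}+t(\mathbf{x}-\mathbf{y}))-f(\mathbf{y})}{t}\leq f(\mathbf{x})-f(\mathbf{y}),
\end{equation}
which holds for every $t\in(0,1]$.

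Next I would pass to the limit $t\to 0^+$. Since $f$ is differentiable at $\mathbf{y}$, the left-hand side is precisely the difference quotient whose limit defines the directional derivative of $f$ at $\mathbf{y}$ in the direction $\mathbf{x}-\mathbf{y}$, and by differentiability this limit equals $\langle\nabla f(\mathbf{y}),\mathbf{x}-\mathbf{y}\rangle$. Because the right-hand side $f(\mathbf{x})-f(\mathbf{y})$ does not depend on $t$, and weak inequalities are preserved under limits, letting $t\to0^+$ gives $\langle\nabla f(\mathbf{y}),\mathbf{x}-\mathbf{y}\rangle\leq f(\mathbf{x})-f(\mathbf{y})$, which rearranges immediately to the claimed bound $f(\mathbf{x})\geq f(\mathbf{y})+\langle\nabla f(\mathbf{y}),\mathbf{x}-\mathbf{y}\rangle$.

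The main technical point, rather than a genuine obstacle, is the identification of the one-sided limit of the difference quotient with $\langle\nabla f(\mathbf{y}),\mathbf{x}-\mathbf{y}\rangle$. This requires only the definition of differentiability, which guarantees that $f(\mathbf{y}+t\mathbf{v})=f(\mathbf{y})+t\langle\nabla f(\mathbf{y}),\mathbf{v}\rangle+o(t)$ as $t\to0$ for the fixed direction $\mathbf{v}=\mathbf{x}-\mathbf{y}$; dividing by $t$ and sending $t\to0^+$ annihilates the remainder $o(t)/t$. Equivalently, I could set $\varphi(t)=f(\mathbf{y}+t(\mathbf{x}-\mathbf{y}))$, observe that $\varphi$ is a convex function of the scalar $t$ whose chain-rule derivative at $t=0$ is $\langle\nabla f(\mathbf{y}),\mathbf{x}-\mathbf{y}\rangle$, and invoke the one-dimensional first-order convexity inequality $\varphi(1)\geq\varphi(0)+\varphi'(0)$; this scalar reduction packages the argument with no additional computation.
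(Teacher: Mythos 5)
Your proof is correct and complete: the passage from the defining convexity inequality to the difference quotient, and then the limit $t\to 0^+$ identified with $\langle\nabla f(\mathbf{y}),\mathbf{x}-\mathbf{y}\rangle$ via differentiability, is exactly the standard first-order characterization argument. The paper itself gives no proof of this lemma—it is quoted as a known result from the cited handbook—and your argument is precisely the textbook derivation that reference relies on, so there is nothing to reconcile between the two.
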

\noindent\textbf{Theorem 2:} (Utility for convex functions)
    Let Assumptions~\ref{assump_1} and~\ref{assump_2} hold. Suppose that $f$ is a convex function and that $A$ is a random matrix with each element being sampled from a normal distribution $\mathcal{N}(0, \sigma^2_A)$. Also, let the additive noise of DP mechanism have the variance $\sigma_{\epsilon,k}^2$. If the step size $\alpha\leq \frac{1}{2L}$, then for the iterates $\{\mathbf{x}_k\}_{k=1}^K, K\geq 1$ generated by D2P2-SGD, the following relationship holds true
    \begin{equation}\label{error_con_2}
    \begin{split}
&\mathbb{E}[f(\bar{\mathbf{x}}_K)-f^*]\leq \frac{\|\mathbf{x}_1-\mathbf{x}^*\|^2(1+\gamma)}{2\alpha K\sigma^2_A\sqrt{p}}+\frac{\alpha p^{1.5}(1+\gamma)(\textnormal{ln}K+1)\sigma^2_\epsilon}{2K}+\frac{\alpha pd^2\sigma^2_A(1+\gamma)}{2},
    \end{split}
    \end{equation}
where $\bar{\mathbf{x}}_K=\frac{1}{K}\sum_{k=1}^K\mathbf{x}_k$.  
\begin{proof}
Let $\mathbf{g}_k^s=\nabla f(\mathbf{x}_k;s)$ and define $C_s=\frac{1}{\|\mathbf{g}_k^s\|+\gamma}$. Using the update law we have:
\begin{equation}
    \begin{split}
        &\mathbb{E}[\|\mathbf{x}_{k+1}-\mathbf{x}^*\|^2] = \mathbb{E}[\|\mathbf{x}_{k}-\alpha A_k(\frac{1}{\sqrt{p}}A_k^\top \frac{1}{B}\sum_sC_s\mathbf{g}^s_k+\epsilon_k)-\mathbf{x}^*\|^2] \\
        &= \mathbb{E}[\|\mathbf{x}_{k}-\mathbf{x}^*\|^2] + \mathbb{E}[\|\alpha A_k(\frac{1}{\sqrt{p}}A_k^\top \frac{1}{B}\sum_sC_s\mathbf{g}^s_k+\epsilon_k)\|^2] - 2 \alpha\mathbb{E}[\langle\frac{1}{\sqrt{p}}A_kA_k^\top \frac{1}{B}\sum_sC_s\mathbf{g}^s_k+A_k\epsilon_k, \mathbf{x}_k - \mathbf{x}^*\rangle] \\
        &= \mathbb{E}[\|\mathbf{x}_{k}-\mathbf{x}^*\|^2] + \mathbb{E}[\|\alpha(\frac{1}{\sqrt{p}}A_kA_k^\top \frac{1}{B}\sum_sC_s\mathbf{g}^s_k+A_k\epsilon_k)\|^2]- 2 \alpha\mathbb{E}[\langle\frac{1}{\sqrt{p}}\mathbb{E}[A_kA_k^\top] \frac{\mathbf{g}_k}{\|\mathbf{g}_k\|+\gamma}, \mathbf{x}_k - \mathbf{x}^*\rangle]\\
        &= \mathbb{E}[\|\mathbf{x}_{k}-\mathbf{x}^*\|^2] + \mathbb{E}[\|\alpha(\frac{1}{\sqrt{p}}A_kA_k^\top \frac{1}{B}\sum_sC_s\mathbf{g}^s_k+A_k\epsilon_k)\|^2] - 2 \alpha\mathbb{E}[\langle\frac{1}{\sqrt{p}}p\sigma_A^2I\frac{\mathbf{g}_k}{\|\mathbf{g}_k\|+\gamma}, \mathbf{x}_k - \mathbf{x}^*\rangle]\\
        &= \mathbb{E}[\|\mathbf{x}_{k}-\mathbf{x}^*\|^2] + \mathbb{E}[\|\alpha(\frac{1}{\sqrt{p}}A_kA_k^\top \frac{1}{B}\sum_sC_s\mathbf{g}^s_k+A_k\epsilon_k)\|^2] - 2 \alpha\sqrt{p}\sigma_A^2\mathbb{E}[\langle\mathbf{x}_k - \mathbf{x}^* , \frac{\mathbf{g}_k}{\|\mathbf{g}_k\|+\gamma}\rangle] \\
        &= \mathbb{E}[\|\mathbf{x}_{k}-\mathbf{x}^*\|^2] + \alpha^2 \mathbb{E}[\|\frac{1}{\sqrt{p}}A_kA_k^\top \frac{1}{B}\sum_sC_s\mathbf{g}^s_k\|^2] + \alpha^2\mathbb{E}[\|A_k\epsilon_k\|^2]- 2 \alpha\sqrt{p}\sigma_A^2\mathbb{E}[\langle\mathbf{x}_k - \mathbf{x}^* , \frac{\mathbf{g}_k}{\|\mathbf{g}_k\|+\gamma}\rangle]
    \end{split}
\end{equation}
The third equality follows from that $\mathbb{E}[A_k\epsilon_k]=0$ based on the Concentration Theorem for Projections~\cite{dasgupta2012concentration}, the independence between $A_kA_k^\top$ and $\mathbf{g}(\mathbf{x}_k)$, and $\mathbb{E}[\frac{1}{B}\sum_sC_s\mathbf{g}^s_k]=\frac{\mathbf{g}_k}{\|\mathbf{g}_k\|+\gamma}$. According to Lemma~\ref{random_mat}, we know that $\mathbb{E}[A_kA_k^\top]=p\sigma^2_AI$~\cite{nabil2017random}, which yields the fourth equality. The last equality follows similarly from $\mathbb{E}[A_k\epsilon_k]=0$.
As \( \mathbb{E}[\|A_k A_k^\top\|^2] = \mathbb{E}[\left\|\sum_{i=1}^p A_{k,i}A_{k,i}^\top\right\|^2] \leq p \sum_{i=1}^p \mathbb{E}[\|A_{k, i}A_{k, i}^\top\|^2] \leq p \sum_{i=1}^p \mathbb{E}[\|A_{k,i}\|^2] \mathbb{E}[\|A_{k,i}^\top\|^2] = p^2 d^2 \sigma_A^4\). Also,  \(\mathbb{E}[\|A_k \epsilon_k\|^2] = \mathbb{E}[\epsilon_k^\top A_k^\top A_k \epsilon_k] = \mathbb{E}[\epsilon_k^\top \mathbb{E}[A_k^\top A_k]\epsilon_k] = \epsilon_k^\top p \sigma_A^\top I \epsilon_k = \mathbb{E}[p \sigma_A^2\|\epsilon_k\|^2]\). With $\|C_s\mathbf{g}^s_k\|\leq 1$,
the following relationship holds:
\begin{equation}
    \begin{split}
    &\mathbb{E}[\|\mathbf{x}_{k+1}-\mathbf{x}^*\|^2]\leq\mathbb{E}[\|\mathbf{x}_{k}-\mathbf{x}^*\|^2] + \alpha^2 p^{1.5} d^2\sigma_A^4+ \alpha^2p^2\sigma_A^2\sigma_{\epsilon,k}^2- 2 \alpha\sqrt{p}\sigma_A^2\mathbb{E}[\langle\mathbf{x}_k - \mathbf{x}^* , \frac{\mathbf{g}_k-\nabla f(\mathbf{x}_k)+\nabla f(\mathbf{x}_k)}{\|\mathbf{g}_k\|+\gamma}\rangle]\\&\leq\mathbb{E}[\|\mathbf{x}_{k}-\mathbf{x}^*\|^2] + \alpha^2 p^{1.5} d^2\sigma_A^4+ \alpha^2p^2\sigma_A^2\sigma_{\epsilon,k}^2+2\alpha\sqrt{p}\sigma_A^2\mathbb{E}[\langle\mathbf{x}^* - \mathbf{x}_k, \frac{\nabla f(\mathbf{x}_k)}{\|\mathbf{g}_k\|+\gamma}\rangle]\\&+2\alpha\sqrt{p}\sigma_A^2\mathbb{E}[\langle\mathbf{x}^* - \mathbf{x}_k, \frac{\mathbf{g}_k-\nabla f(\mathbf{x}_k)}{\|\mathbf{g}_k\|+\gamma}\rangle]
    \end{split}
\end{equation}
As $\mathbb{E}[\mathbf{g}_k]=\nabla f(\mathbf{x}_k)$, the last term on the right hand side of the above inequality equals 0. Using Lemma~\ref{convexity} and Cauchy-Schwartz inequality yields the following, 
\begin{equation}
    \begin{split}
        &\mathbb{E}[\|\mathbf{x}_{k+1}-\mathbf{x}^*\|^2]\leq \mathbb{E}[\|\mathbf{x}_{k}-\mathbf{x}^*\|^2] + \alpha^2 p^{1.5} d^2\sigma_A^4+ \alpha^2p^2\sigma_A^2\sigma_{\epsilon,k}^2 + \frac{2\alpha\sigma_A^2\sqrt{p}}{\|\mathbf{g}_k\|+\gamma}\mathbb{E}[f^*-f(\mathbf{x}_k)]\\
    \end{split}
\end{equation}
Then, we can obtain
\begin{equation}
    \begin{split}
        & \frac{2\alpha\sigma_A^2\sqrt{p}}{\|\mathbf{g}_k\|+\gamma}\mathbb{E}[f(\mathbf{x}_k)-f^*] \leq \mathbb{E}[\|\mathbf{x}_{k}-\mathbf{x}^*\|^2] + \alpha^2 p^{1.5} d^2\sigma_A^4+ \alpha^2p^2\sigma_A^2\sigma_{\epsilon,k}^2 - \mathbb{E}[\|\mathbf{x}_{k+1}-\mathbf{x}^*\|^2]
    \end{split}
\end{equation}
Based on the definition of the clipping mechanism, we know that $\mathbf{g}\leftarrow \mathbf{g}\cdot \frac{1}{\|\mathbf{g}\|+\gamma}$ ensures $\|\mathbf{g}_k\|\leq 1$ for all $k>0$. Therefore, diving both sides of the last inequality by $\frac{2\alpha\sigma_A^2\sqrt{p}}{\|\mathbf{g}_k\|+\gamma}$ and applying $\|\mathbf{g}_k\|\leq 1$ produces the following
\begin{equation}
    \begin{split}
        &\mathbb{E}[f(\mathbf{x}_k)-f^*]\leq\frac{\mathbb{E}[\|\mathbf{x}_{k}-\mathbf{x}^*\|^2-\|\mathbf{x}_{k+1}-\mathbf{x}^*\|^2]}{2\sqrt{p}\sigma_A^2\alpha}(1+\gamma)+\frac{\alpha pd^2\sigma^2_A(1+\gamma)}{2}+\frac{\alpha p^{1.5}\sigma_{\epsilon,k}^2(1+\gamma)}{2}\\&\leq \frac{\mathbb{E}[\|\mathbf{x}_{k}-\mathbf{x}^*\|^2-\|\mathbf{x}_{k+1}-\mathbf{x}^*\|^2]}{2\sqrt{p}\sigma_A^2\alpha}(1+\gamma)+\frac{\alpha pd^2\sigma^2_A(1+\gamma)}{2}+\frac{\alpha p^{1.5}\sigma_{\epsilon,k}^2(1+\gamma)}{2}
    \end{split}
\end{equation}
Summing over $k$ from 1 to $K$ and dividing both sides by $K$ produces the following relationship:     
\begin{equation}
    \begin{split}
        & \frac{1}{K}\sum_{k=1}^K \mathbb{E}[f(\mathbf{x}_k)-f^*] \leq \frac{\mathbb{E}[\|\mathbf{x}_1-\mathbf{x}^*\|^2}{2K\alpha\sqrt{p}\sigma_A^2}(1+\gamma)+\frac{\alpha pd^2\sigma^2_A(1+\gamma)}{2}+\frac{\alpha p^{1.5}(1+\gamma)}{2K}\sum_{k=1}^K\sigma_{\epsilon,k}^2
    \end{split}
\end{equation}
Using Jensen's inequality on $f(\mathbf{x}_k)-f^*$ (i.e., $f(\bar{\mathbf{x}}_K)-f^*\leq \frac{1}{K}\sum_{k=1}^K \mathbb{E}[f(\mathbf{x}_k)-f^*]$) completes the proof.
\end{proof}
\noindent\textbf{Corollary 1.} With conditions defined in Theorem~\ref{theorem_2}, when $\alpha=\mathcal{O}(\frac{1}{\sqrt{K}})$, the following relationship holds true, i.e.,
$
    \mathbb{E}[f(\bar{\mathbf{x}}_K)-f^*]\leq\mathcal{O}(\frac{1}{\sqrt{K}}+\frac{\textnormal{ln}K}{K^{1.5}}).
$
\begin{proof}
    Based on the conclusion from Theorem~\ref{theorem_2}, substituting $\alpha=\mathcal{O}(\frac{1}{\sqrt{K}})$ into it and applying the upper bound of the partial sum leads to the desirable result.
\end{proof}
\subsection{Impact of Privacy and Dimensions for Convex Functions}\label{impact_pri_dim}
In this subsection, we derive the explicit impact of the privacy budget $\varepsilon$ and dimensions, $d$ and $p$ on the error bound and compare it to that in DPSGD. Recall the conclusion from Theorem~\ref{theorem_2} in the following:
\begin{equation}
\begin{split}
    &\mathbb{E}[f(\bar{\mathbf{x}}_K)-f^*]\leq \frac{\|\mathbf{x}_1-\mathbf{x}^*\|^2(1+\gamma)}{2\alpha K\sigma^2_A\sqrt{p}}+\frac{\alpha p^{1.5}(1+\gamma)(\textnormal{ln}K+1)\sigma^2_\epsilon}{2K}+\frac{\alpha pd^2\sigma^2_A(1+\gamma)}{2}
\end{split}
\end{equation}
Let $D=\|\mathbf{x}_1-\mathbf{x}^*\|^2$ and set $\gamma\approx 0$ for simplicity such that the last inequality can be rewritten as
\begin{equation}
    \mathbb{E}[f(\bar{\mathbf{x}}_K)-f^*]\leq\mathcal{O}\bigg(\frac{D}{\alpha K\sigma_A^2\sqrt{p}}+\frac{\alpha p^{3/2}\textnormal{ln}K\sigma^2_\epsilon}{K}+\alpha pd^2\sigma^2_A\bigg).
\end{equation}
Let $\sigma^2_\epsilon=\frac{C_2B^2K^2\text{ln}(1/\delta)}{n^2\epsilon^2}$, $C_2=\frac{n\varepsilon}{p^{5/2}K\text{ln}K\sqrt{\text{ln}(1/\delta)}}$ and $B=\mathcal{O}(1)$. Also, based on Corollary~\ref{coro_1}, we set step size $\alpha=\frac{p^{3/2}}{d^2\sqrt{K}}$. Substituting $\sigma^2_\epsilon, C_2, B, \alpha$ into the last inequality, we have
\begin{equation}
    \mathbb{E}[f(\bar{\mathbf{x}}_K)-f^*]\leq\mathcal{O}\bigg(\frac{1}{\sqrt{K}}\cdot\bigg(\underbrace{\frac{Dd^2}{p^2\sigma^2_A}}_{\textnormal{Initialization error}}+\underbrace{\frac{\sqrt{p\textnormal{ln}(1/\delta)}}{d^2n\varepsilon}}_{\textnormal{Privacy error}}+\underbrace{p^{5/2}\sigma^2_A}_{\text{Projection error}}\bigg)\bigg).
\end{equation}
The last inequality states that the error bound is impacted by initialization error, privacy error, and projection error. Recalling from~\cite{bassily2014private}, we know that the vanilla DPSGD has achieved the optimal convergence rate of $\mathcal{O}\bigg(\frac{LD}{\sqrt{K}}\bigg(1+\frac{\sqrt{d\textnormal{ln}(1/\delta)}}{n\varepsilon}\bigg)\bigg)$. Comparing the privacy errors in both D2P2-SGD and DPSGD yields that the former has reduced the negative impact of privacy error significantly due to the random projection, but at the cost of additional approximation error $p^{5/2}\sigma_A^2$ caused by the random projection. Additionally, the initialization error in D2P2-SGD is also impacted by the dimensions $d$ and $p$. With random projection, the tradeoff between utility and privacy in D2P2-SGD becomes more complex compared to DPSGD. This also requires a careful selection of reduced dimension $p$ and a proper sampling distribution for the projection matrix $A$ in D2P2-SGD to achieve better performance.
\subsection{Proof for non-convex objectives}\label{proof_nonconvex}
In this subsection, we present the missing proof for the non-convex objectives.
Unlike the convex setting, our non-convex analysis faces distinct challenges arising from the interaction between the inner product of batch gradient and stochastic gradient, $\langle \nabla f(\mathbf{x}_k), \mathbf{g}_k \rangle$ and the random projection matrix product. To resolve these, we must simultaneously leverage: (i) the independence between random projections and mini-batch sampling, and (ii) triangle inequality decomposition of the gradient norms $\|\nabla f(\mathbf{x}_k)\|$ and $\|\mathbf{g}_k\|$.

\textbf{Theorem 3:} (Utility for non-convex functions)
    Let Assumptions~\ref{assump_1} and~\ref{assump_2} hold. Suppose that $A$ is a random matrix with each element being sampled from a normal distribution $\mathcal{N}(0, \sigma^2_A)$. Also, let the additive noise of the DP mechanism have the variance $\sigma_{\epsilon,k}^2$. If the step size $\alpha\leq\frac{1}{2L}$, and, then for the iterates $\{\mathbf{x}_k\}_{k=1}^K, K\geq 1$ generated by D2P2-SGD, the following relationship holds true
    \begin{equation}\label{eq_18}
    \begin{split}
 &\textnormal{min}_{k\in[1,K]}\mathbb{E}[\|\nabla f(\mathbf{x}_k)\|] \leq \frac{f(\mathbf{x}_1)-f^*}{K\sigma^2_A\sqrt{p}\alpha}+ \frac{\alpha Lp^{1.5}\sigma_\epsilon^2(\textnormal{ln}K+1)}{K} +L\alpha\sqrt{p}d^2\sigma^2_A +2\sigma+\gamma.
    \end{split}
    \end{equation} 
\begin{proof}
Due to the smoothness condition, we have the following relationship:
\begin{equation}
f(\mathbf{x}_{k+1}) \leq f(\mathbf{x}_{k}) + \langle\nabla f(\mathbf{x}_k),\mathbf{x}_{k+1}-\mathbf{x}_k\rangle + \frac{L}{2}\|\mathbf{x}_{k+1}-\mathbf{x}_{k}\|^2
\end{equation}
 Also, we know that $\mathbf{x}_{k+1}-\mathbf{x}_k = - \alpha A_k(\frac{1}{\sqrt{p}}A_k^\top \frac{1}{B}\sum_sC_s\mathbf{g}^s_k+\epsilon_k)$.

 Substituting the above equality into the smoothness equation yields the following relationship:
 \begin{equation}
 \begin{split}
&f(\mathbf{x}_{k+1}) \leq f(\mathbf{x}_k) + \langle\nabla f(\mathbf{x}_k), - \alpha A_k(\frac{1}{\sqrt{p}}A_k^\top \frac{1}{B}\sum_sC_s\mathbf{g}^s_k+\epsilon_k) \rangle \\&+ \frac{L \alpha^2}{2} \|A_k(\frac{1}{\sqrt{p}}A_k^\top \frac{1}{B}\sum_sC_s\mathbf{g}^s_k+\epsilon_k)\|^2
\end{split}
 \end{equation}
Taking expectation on both sides, with the proof from Theorem~\ref{theorem_2}, we can obtain the following relationships
\begin{equation}
\begin{split}
&\mathbb{E}[f(\mathbf{x}_{k+1})] \leq \mathbb{E}[f(\mathbf{x}_k)] - \alpha\mathbb{E}\left[\langle\nabla f(\mathbf{x}_k), \frac{1}{\sqrt{p}}A_k A_k^\top \frac{1}{B}\sum_sC_s\mathbf{g}^s_k+A_k\epsilon_k \rangle\right] \\
& + \frac{L \alpha^2}{2} \mathbb{E}\left[\|\frac{1}{\sqrt{p}}A_k A_k^\top \frac{1}{B}\sum_sC_s\mathbf{g}^s_k+A_k\epsilon_k\|^2\right] \\
& \leq \mathbb{E}[f(\mathbf{x}_k)] - \alpha\mathbb{E}[\langle\nabla f(\mathbf{x}_k), \frac{1}{\sqrt{p}}A_k A_k^\top \frac{1}{B}\sum_sC_s\mathbf{g}^s_k \rangle]  + \frac{L \alpha^2}{2} \mathbb{E}\left[\|\frac{1}{\sqrt{p}}A_k A_k^\top \frac{1}{B}\sum_sC_s\mathbf{g}^s_k \|^2 + \|A_k\epsilon_k\|^2\right]\\
& = \mathbb{E}[f(\mathbf{x}_k)] - \alpha\mathbb{E}[\langle\nabla f(\mathbf{x}_k), \frac{1}{\sqrt{p}}A_k A_k^\top \frac{1}{B}\sum_sC_s\mathbf{g}^s_k \rangle]  + \frac{L \alpha^2}{2} \mathbb{E}\left[\|\frac{1}{\sqrt{p}}A_k A_k^\top \frac{1}{B}\sum_sC_s\mathbf{g}^s_k \|^2\right] + \frac{L \alpha^2}{2} \mathbb{E}\left[\|\ A_k\epsilon_k\|^2\right] \\
& = \mathbb{E}[f(\mathbf{x}_k)] -\alpha\mathbb{E}[\langle\nabla f(\mathbf{x}_k), \frac{1}{\sqrt{p}}\mathbb{E}[A_k A_k^\top]\frac{\mathbf{g}_k}{\|\mathbf{g}_k\|+\gamma}\rangle] + \frac{L \alpha^2}{2} \mathbb{E}\left[\|\frac{1}{\sqrt{p}}A_k A_k^\top \frac{1}{B}\sum_sC_s\mathbf{g}^s_k \|^2\right] + \frac{L \alpha^2}{2} \mathbb{E}\left[\|\ A_k\epsilon_k\|^2\right]
\end{split}
\end{equation}
 The second inequality is due to the expectation of $A_k\epsilon_k$ equal to 0, as shown in the generally convex case.   From random projection property we know \(\mathbb{E}[A_kA_k^\top]= p \sigma_A^2 I\), so we will have:
\begin{equation}
\begin{split}
&\mathbb{E}[f(\mathbf{x}_{k+1})] \leq \mathbb{E}[f(\mathbf{x}_k)] -\alpha\sqrt{p}\sigma^2_A\mathbb{E}[\langle\nabla f(\mathbf{x}_k)-\mathbf{g}_k+\mathbf{g}_k,\frac{\mathbf{g}_k}{\|\mathbf{g}_k\|+\gamma}\rangle]\\&+\frac{L \alpha^2}{2} \mathbb{E}\left[\|\frac{1}{\sqrt{p}}A_k A_k^\top \frac{1}{B}\sum_sC_s\mathbf{g}^s_k \|^2\right] + \frac{L \alpha^2}{2} \mathbb{E}\left[\|\ A_k\epsilon_k\|^2\right]
\end{split}
\end{equation}
As $-\langle\mathbf{a},\mathbf{b}\rangle\leq\|\mathbf{a}\|\|\mathbf{b}\|$ and $\mathbf{g}_k^\top\mathbf{g}_k=\|\mathbf{g}_k\|^2$, we have:
\begin{equation}
\begin{split}
&\mathbb{E}[f(\mathbf{x}_{k+1})]\leq \mathbb{E}[f(\mathbf{x}_k)] +\alpha\sqrt{p}\sigma^2_A\mathbb{E}[\|\nabla f(\mathbf{x}_k)-\mathbf{g}_k\|\frac{\|\mathbf{g}_k\|}{\|\mathbf{g}_k\|+\gamma}]\frac{L\alpha^2}{2}pd^2\sigma^4_A+\frac{L\alpha^2}{2}p^2\sigma^2_A\sigma^2_{\epsilon,k}-\alpha\sqrt{p}\sigma^2_A\mathbb{E}[\frac{\|\mathbf{g}_k\|+\gamma}{\|\mathbf{g}_k\|+\gamma}\|\mathbf{g}_k\|]\\&+\alpha\sqrt{p}\sigma^2_A\mathbb{E}[\frac{\gamma}{\|\mathbf{g}_k\|+\gamma}\|\mathbf{g}_k\|]
\end{split}
\end{equation}
Since $\frac{\|\mathbf{g}_k\|}{\|\mathbf{g}_k\|+\gamma}\leq 1$, we have $\alpha\sqrt{p}\sigma^2_A\mathbb{E}[\|\nabla f(\mathbf{x}_k)-\mathbf{g}_k\|\frac{\|\mathbf{g}_k\|}{\|\mathbf{g}_k\|+\gamma}]\leq \alpha\sqrt{p}\sigma^2_A\mathbb{E}[\|\nabla f(\mathbf{x}_k)-\mathbf{g}_k\|]$. Hence, with $\|\nabla f(\mathbf{x}_k)\|\leq \|\nabla f(\mathbf{x}_k)-\mathbf{g}_k\| + \|\mathbf{g}_k\|$ the following relationship can be obtained:
\begin{equation}
    \begin{split}
        &\alpha\sqrt{p}\sigma_A^2\mathbb{E}[\|\nabla f(\mathbf{x}_k)\|]\leq \mathbb{E}[f(\mathbf{x}_k)-f(\mathbf{x}_{k+1})]+2\alpha\sqrt{p}\sigma_A^2\mathbb{E}[\|\nabla f(\mathbf{x}_k)-\mathbf{g}_k\|]\\&+\frac{L\alpha^2}{2}pd^2\sigma^4_A+\frac{L\alpha^2}{2}p^2\sigma^2_A\sigma^2_{\epsilon,k}+\alpha\sqrt{p}\sigma^2_A\mathbb{E}[\frac{\gamma}{\|\mathbf{g}_k\|+\gamma}\|\mathbf{g}_k\|]
    \end{split}
\end{equation}
Dividing both sides of the last inequality by $\alpha\sqrt{p}\sigma_A^2$ yields:
\begin{equation}
    \begin{split}
    &\mathbb{E}[\|\nabla f(\mathbf{x}_k)\|]\leq\frac{\mathbb{E}[f(\mathbf{x}_k)-f(\mathbf{x}_{k+1})]}{\alpha\sqrt{p}\sigma_A^2}+2\mathbb{E}[\|\nabla f(\mathbf{x}_k)-\mathbf{g}_k\|]\\&+\frac{L\alpha\sqrt{p}d^2\sigma_A^2}{2}+\frac{L\alpha p^{1.5}\sigma_{\epsilon,k}^2}{2}+\gamma
    \end{split}
\end{equation}
which is due to $\frac{\gamma}{\|\mathbf{g}_k\|+\gamma}\|\mathbf{g}_k\|\leq \gamma$. Since $\mathbb{E}[\|\nabla f(\mathbf{x}_k)-\mathbf{g}_k\|]\leq \sqrt{\mathbb{E}[\|\nabla f(\mathbf{x}_k)-\mathbf{g}_k\|^2]}\leq \frac{\sigma}{\sqrt{B}}$
Summing the last equation over from 1 to $K$ and dividing both sides by $K$ grants us the following relationship:
\begin{equation}
\begin{split}
    &\frac{1}{K}\sum_{k=1}^K\mathbb{E}[\|\nabla f(\mathbf{x}_k)\|]\leq \frac{f(\mathbf{x}_1)}{\alpha\sigma_A^2\sqrt{p}K} + 2\frac{\sigma}{\sqrt{B}} +\gamma + \frac{L\alpha\sqrt{p}d^2\sigma_A^2}{2}+\frac{L\alpha p^{1.5}}{2K}\sum_{k=1}^K\sigma_{\epsilon,k}^2
\end{split}
\end{equation}
With the fact that $\sum_{k=1}^K\sigma_{\epsilon,k}^2\leq (\textnormal{ln}K+1)\sigma^2_\epsilon$ and that the minimum is less than the average, the desirable result is obtained.
\end{proof}

\noindent\textbf{Corollary 2.} With conditions defined in Theorem~\ref{theorem_3}, when $\alpha=\mathcal{O}(\frac{1}{\sqrt{K}})$, the following relationship hold true,
    $\textnormal{min}_{k\in[1,K]}\mathbb{E}[\|\nabla f(\mathbf{x}_k)\|]\leq\mathcal{O}(\frac{1}{\sqrt{K}}+\frac{\textnormal{ln}K}{K^{1.5}}+\sigma+\gamma)$.
\begin{proof}
    Using the conclusion from Theorem~\ref{theorem_3} and substituting the step size $\alpha$ into the conclusion yields the desirable result.
\end{proof}
\subsection{Comparison of the Clipping Bias}\label{clipping_bias}
In this subsection, we show the comparison among different methods for the clipping bias in Table~\ref{table:clipping_bias}. It suggests that our clipping bias is proportional to $\sigma$, which is the same as in existing works. However, due to the adoption of per-sample gradient normalization, the stability constant $\gamma$ also affects the clipping bias. In practice, $\gamma$ is a small positive constant similar to the one in the Adam optimizer. 
\begin{table}[htp]
\captionsetup{skip=-5pt}
\caption{Comparison among different methods.}
\begin{center}
\begin{threeparttable}
\begin{tabular}{c c}
    \toprule
    $\quad$ \textbf{Method}$\quad$ & $\quad$ \textbf{Clipping Bias} $\quad$\\ \midrule
      \cite{chen2020understanding}   & Wasserstein distance\\
      \cite{koloskova2023revisiting}   & $\sigma$ or $\sigma^2/a$\\ 
      \cite{xiao2023theory}   & $15\sigma$\\
      \cite{bu2024automatic}   & $\sigma/r$\\
        \hline
        \texttt{D2P2-SGD (Ours)}   &$2\sigma/\sqrt{B} + \gamma$\\
      \bottomrule
\end{tabular}
\begin{tablenotes}
\item $a>0$ is the clipping threshold; $r>0$
\end{tablenotes}
\end{threeparttable}
\end{center}
\label{table:clipping_bias}
\end{table}
\subsection{Impact of Privacy and Dimensions for Non-convex Functions}\label{impact_pri_dim_non}
In this subsection, we derive the explicit impact of the privacy budget $\varepsilon$ and dimensions, $d$ and $p$ on the error bound. Recall the conclusion from Theorem~\ref{theorem_3} in the following:
    \begin{equation}\label{eq_29}
    \begin{split}
&\textnormal{min}_{k\in[1,K]}\mathbb{E}[\|\nabla f(\mathbf{x}_k)\|] \leq \frac{f(\mathbf{x}_1)-f^*}{K\sigma^2_A\sqrt{p}\alpha}+ \frac{\alpha Lp^{1.5}\sigma_\epsilon^2(\textnormal{ln}K+1)}{K} +L\alpha\sqrt{p}d^2\sigma^2_A +\frac{2\sigma}{\sqrt{B}}+\gamma.
    \end{split}
    \end{equation} 
Let $D=f(\mathbf{x}_1)-f^*$ such that the last inequality can be rewritten as
\begin{equation}
    \mathbb{E}[f(\bar{\mathbf{x}}_K)-f^*]\leq\mathcal{O}\bigg(\frac{D}{\alpha K\sigma_A^2\sqrt{p}}+\frac{\alpha p^{3/2}\textnormal{ln}KL\sigma^2_\epsilon}{K}+L\alpha \sqrt{p}d^2\sigma^2_A + \frac{\sigma}{\sqrt{B}} + \gamma\bigg).
\end{equation}
Let $\sigma^2_\epsilon=\frac{C_2B^2K^2\text{ln}(1/\delta)}{n^2\epsilon^2}$, $C_2=\frac{n\varepsilon}{p^{5/2}K\text{ln}K\sqrt{\text{ln}(1/\delta)}L}$ and $B=\mathcal{O}(1)$. Also, based on Corollary~\ref{coro_2}, we set step size $\alpha=\frac{p^{3/2}}{d^2\sqrt{K}}$. Substituting $\sigma^2_\epsilon, C_2, B, \alpha$ into the last inequality, we have
\begin{equation}
    \mathbb{E}[f(\bar{\mathbf{x}}_K)-f^*]\leq\mathcal{O}\bigg(\frac{1}{\sqrt{K}}\cdot\bigg(\underbrace{\frac{Dd^2}{p^2\sigma^2_A}}_{\textnormal{Initialization error}}+\underbrace{\frac{\sqrt{p\textnormal{ln}(1/\delta)}}{d^2n\varepsilon}}_{\textnormal{Privacy error}}+\underbrace{p^2\sigma^2_A}_{\text{Projection error}} \bigg)+ \underbrace{\sigma+\gamma}_{\textnormal{Clipping bias}}\bigg).
\end{equation}
The last inequality reveals that the error bound is dictated by initialization error, privacy error, projection error, and clipping bias. 
Different from convex functions, the asymptotic convergence for non-convex functions leads to the clipping bias, which worsens the utility when $K$ is sufficiently large. When $K$ is small, corresponding to the early phase of optimization, the tradeoff between utility and privacy dominates. Analogously, this requires a meticulous selection of dimension $p$ and a proper sampling distribution for the projection matrix in D2P2-SGD to achieve better performance if non-asymptotic convergence is favored. If $\gamma$ is set $\mathcal{O}(\frac{1}{\sqrt{K}})$, the clipping bias is $\sigma$, which resembles the result in~\cite{koloskova2023revisiting}.
\subsection{Additional Results}\label{additional_results}
All the experiments were conducted on a machine equipped with an Intel Xeon Silver 4110 CPU and an NVIDIA Titan RTX GPU.
\subsubsection{Additional Datasets}
Figure~\ref{fig:cifar10_comparison_resnet} shows the performance with ResNet20~\cite{wang2019resnets} on the CIFAR-10 dataset. From the plots, we can observe that D2P-SGD slightly performs better than D2P2-SGD due to the random projection error in the error bound, which also deteriorates the performance of DP2-SGD. The dynamic variance mechanism plays a central role in enhancing performance closer to SGD. However, the tradeoff between random projection and privacy drives the performance of D2P2-SGD between those in D2P-SGD (upper) and DP2-SGD (lower).

In Figures~\ref{fig:cifar10_comparison} and~\ref{fig:cifar10_comparison_rr_10}, results for the CIFAR-10 dataset are provided, showing a similar trend to the other datasets. We also compare them to disclose the impact of the reduction rate. The reduction rate is the number of dimensions that have been compressed in random projection. For example, if $d=10000$ and the reduction rate is 0.3, then $p=7000$. The chosen hyperparameters are: batch size = 1024, \( \sigma_\epsilon = 2.0 \), and dimension reduction rate = 0.3 in Figure~\ref{fig:cifar10_comparison}. If we decrease the dimension reduction rate to 0.1 in Figure~\ref{fig:cifar10_comparison_rr_10}, we can observe that the performance is enhanced, but the privacy loss remains the same, which aligns with our findings in the main contents.

In Figure~\ref{fig:cifar10_comparison_single}, when reducing $\sigma_\epsilon=1.0$, we can see a significant accuracy improvement, but with a sacrifice of privacy loss instead. However, D2P2-SGD achieves the best performance over baselines and is favorably comparable to SGD. Similarly, for Figures~\ref{fig:kmnist_comparison_single}-\ref{fig:mnist_comparison_single} (KMNIST, EMNIST, MNIST), D2P2-SGD is favorably comparable to or outperforms all baselines, which strengthens our claims.

\begin{figure}[h]
\includegraphics[width=\linewidth]{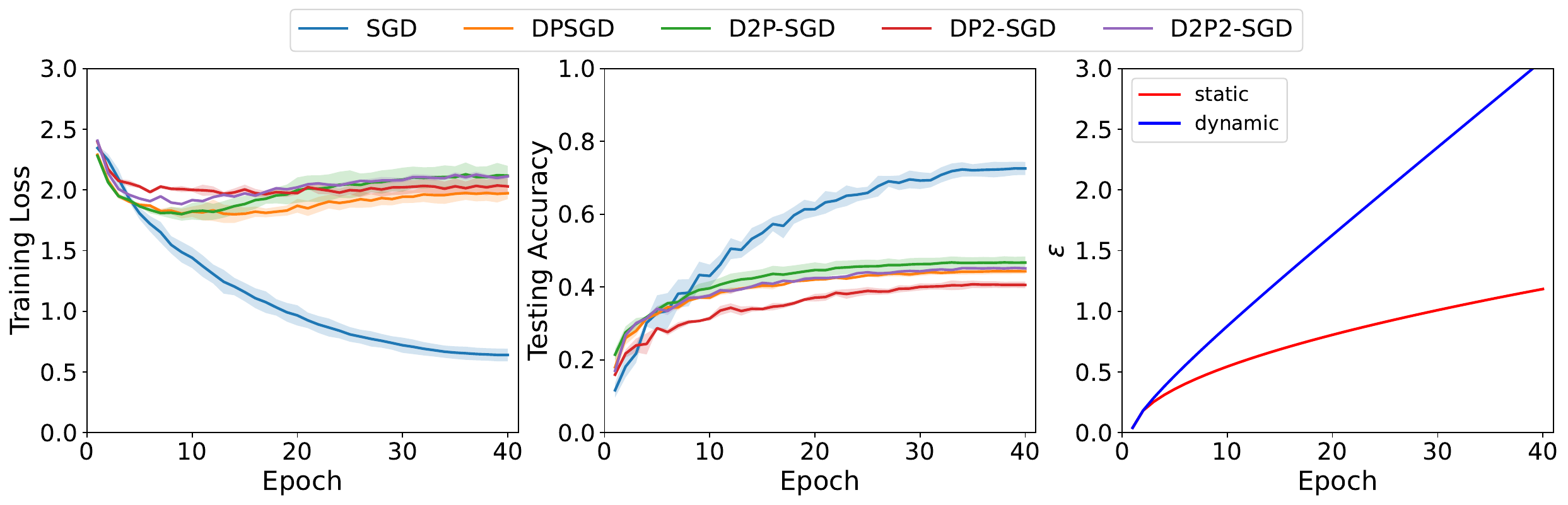}
\centering
\caption{Comparison among different methods for CIFAR10 data with ResNet20: on the right side, the privacy loss is shown for static and dynamic scenarios.}
\label{fig:cifar10_comparison_resnet}
\end{figure}

\begin{figure}[h]
\includegraphics[width=\linewidth]{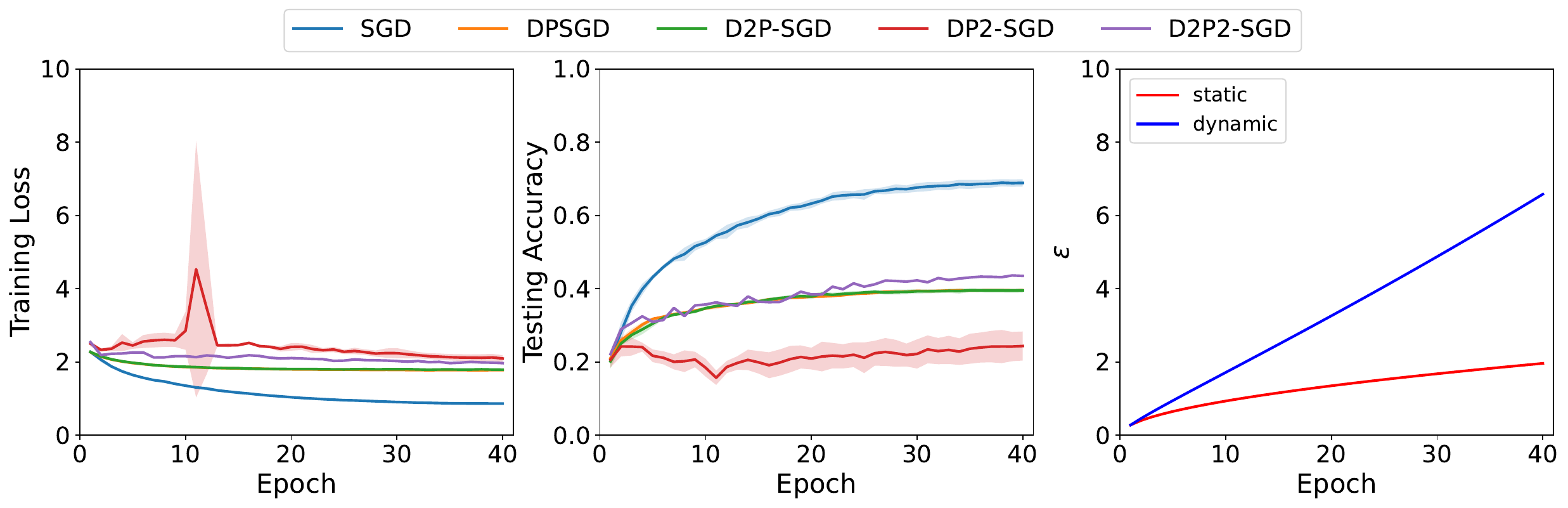}
\centering
\caption{Comparison among different methods for CIFAR10 data with reduction rate being 0.3, with CNN: on the right side, the privacy loss is shown for static and dynamic scenarios.}
\label{fig:cifar10_comparison}
\end{figure}

\begin{figure}[h]
\includegraphics[width=\linewidth]{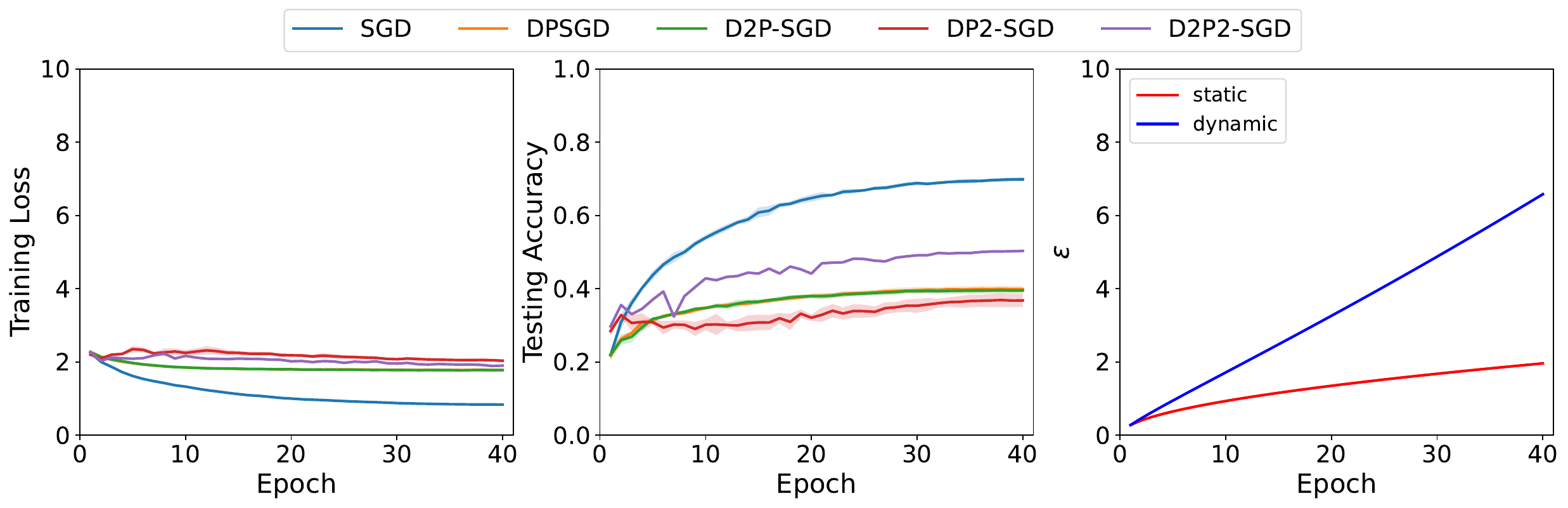}
\centering
\caption{Comparison among different methods for CIFAR10 data with reduction rate being 0.1, with CNN: on the right side, the privacy loss is shown for static and dynamic scenarios.}
\label{fig:cifar10_comparison_rr_10}
\end{figure}

\begin{figure}[h]
\includegraphics[width=\linewidth]{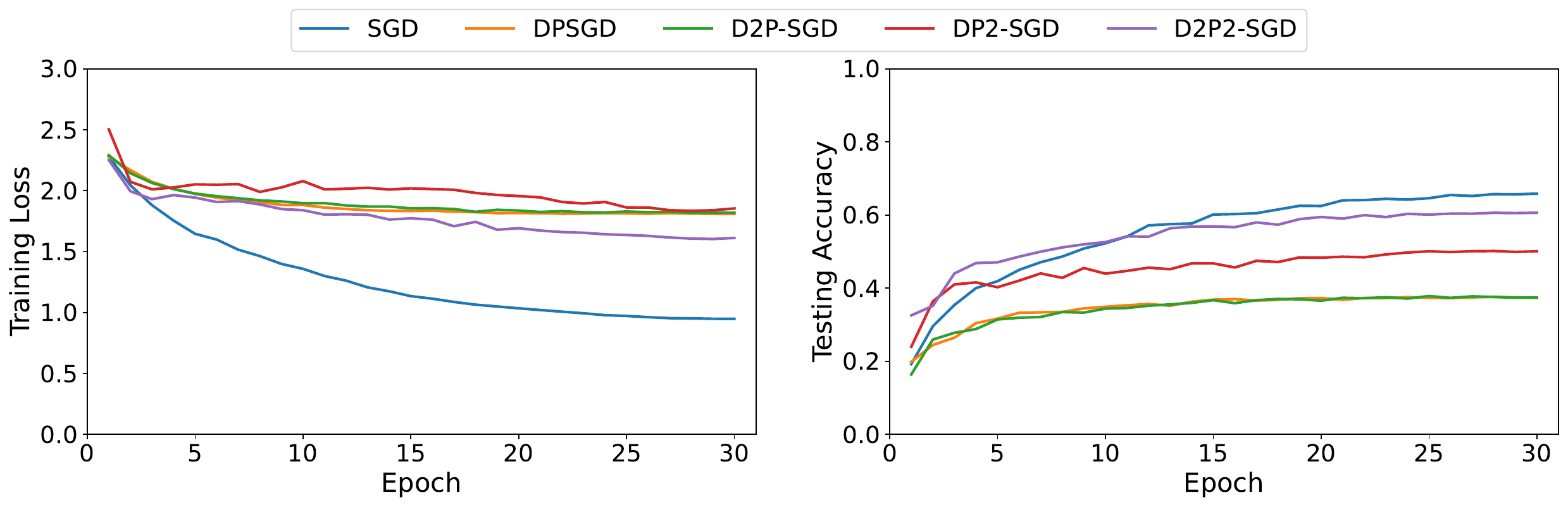}
\centering
\caption{Comparison among different methods for CIFAR10 data with CNN: training loss and testing accuracy.}
\label{fig:cifar10_comparison_single}
\end{figure}

\begin{figure}[ht!]
\includegraphics[width=\linewidth]{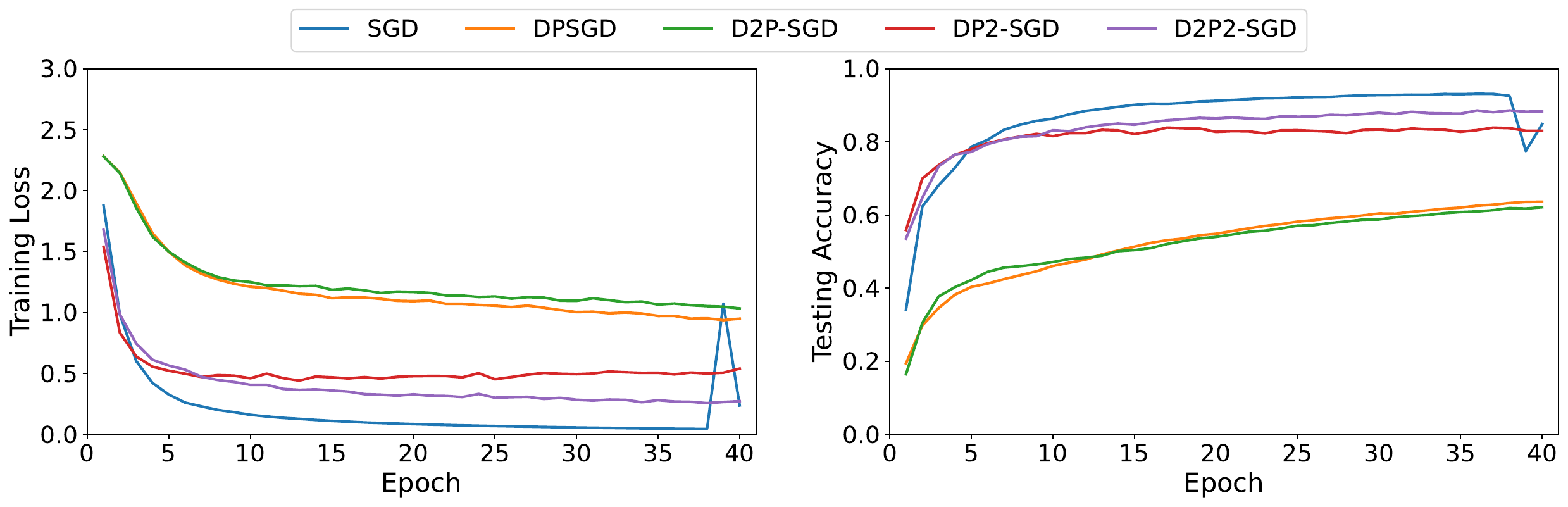}
\centering
\caption{Comparison among different methods for KMNIST data with CNN: training loss and testing accuracy.}
\label{fig:kmnist_comparison_single}
\end{figure}

\begin{figure}[ht!]
\includegraphics[width=\linewidth]{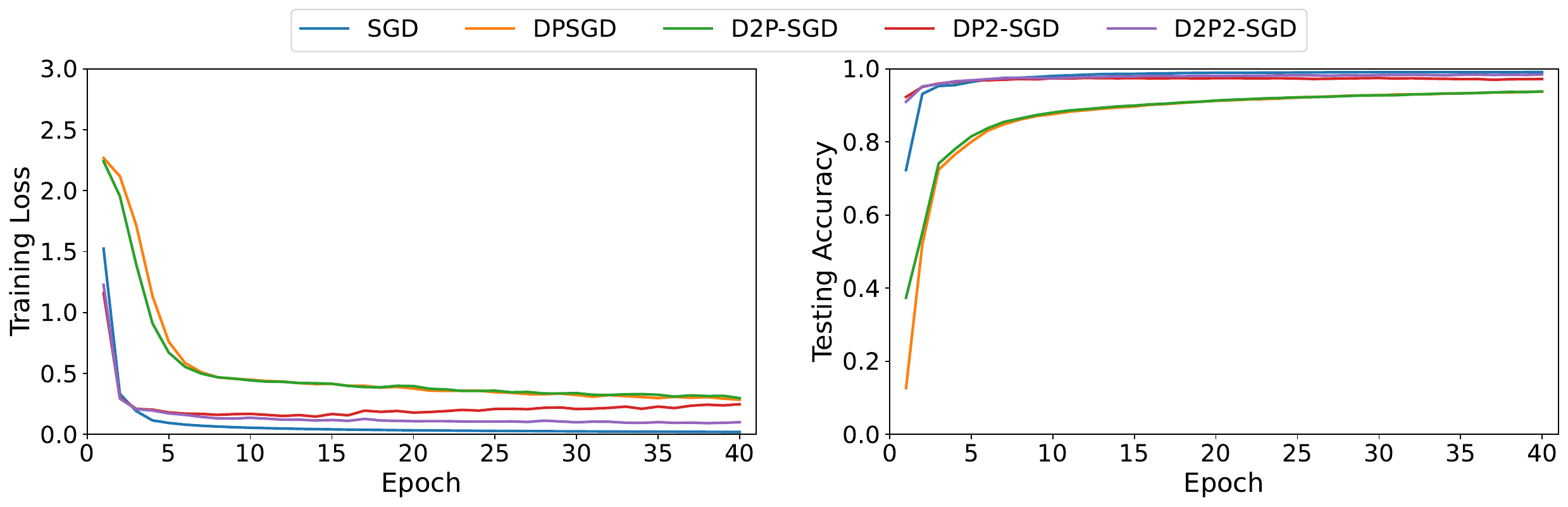}
\centering
\caption{Comparison among different methods for EMNIST data with CNN: training loss and testing accuracy.}
\label{fig:emnist_comparison_single}
\end{figure}

\begin{figure}[ht!]
\includegraphics[width=\linewidth]{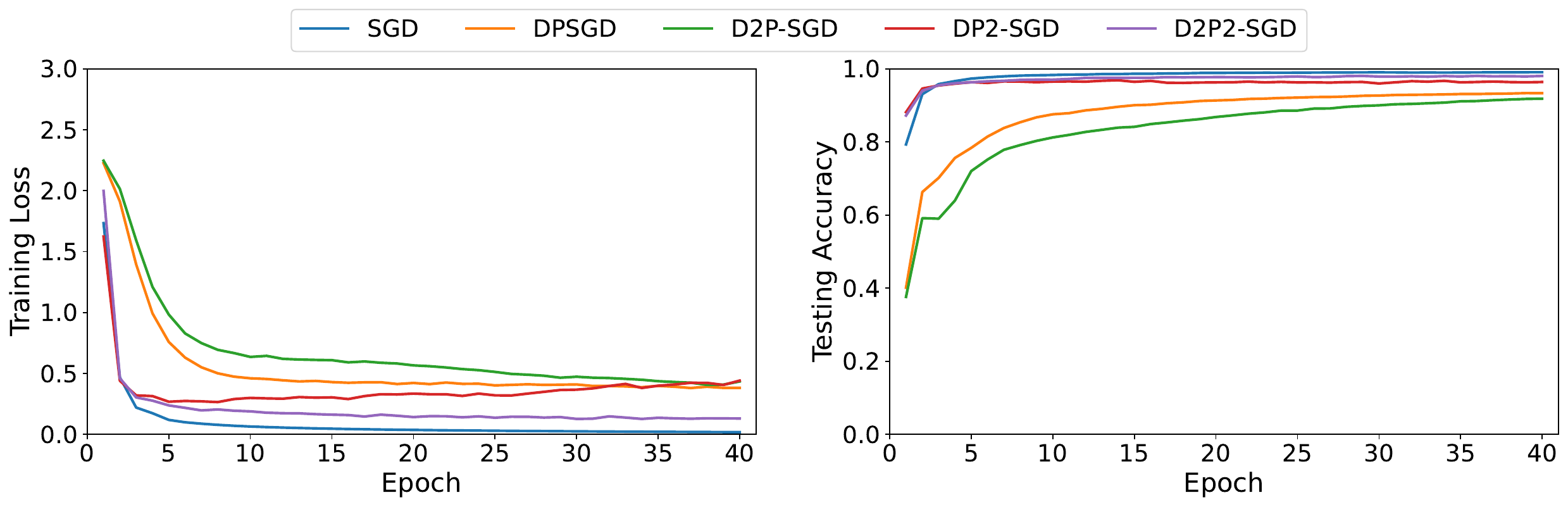}
\centering
\caption{Comparison among different methods for MNIST data with CNN: training loss and testing accuracy.}
\label{fig:mnist_comparison_single}
\end{figure}
\subsubsection{Calculation of $C_1$ value} From Figures~\ref{fig:svhn_comparison} and~\ref{fig:fmnist_comparison}, given the batch size equal to 1024, the number of epoch 40, and the training data sizes 60000 and 73257, based on the upper bound for $\varepsilon$ in Theorem~\ref{theorem_1}, $\varepsilon\leq \frac{C_1B^2K}{n^2}$, we can obtain that as long as $C_1\geq 314$, $\varepsilon$ values for both datasets will remain within the bound.
\subsubsection{Detail of network architecture}
\begin{table}[h]
\centering
\caption{Network architecture for FashionMNIST, SVHN, and CIFAR-10 datasets.}
\begin{tabular}{l l}
\hline
\textbf{Layer}        & \textbf{Parameters}                      \\ \hline \hline
Convolution           & 16 filters of 3 × 3, strides 1 \\ 
                            
Average Pooling       & 2 × 2                         \\ 
Convolution           & 32 filters of 3 × 3, strides 1\\     
Average Pooling       & 2 × 2                         \\ 
Convolution           & 32 filters of 3 × 3, strides 1\\
Average Pooling       & 2 × 2                       \\ 
Convolution           & 64 filters of 3 × 3, strides 1\\ 
Adaptive Average Pooling &  1 × 1                     \\                        
Fully connected       & 64 units                                 \\ 
Softmax               & 10 units                                 \\ \hline
\end{tabular}
\end{table}
\subsubsection{Hyperparameter Setup}\label{hyperparameters}
In this section, we detail the key hyperparameters we have used in this work. All hyperparameters are subject to manual tuning, though a hyperparameter optimization method is likely beneficial for potential performance improvements. 
\begin{table}[h]
\centering
\caption{Hyperparameters for experiments.}
\begin{tabular}{l l}
\hline
\textbf{Hyperparameter}        & \textbf{Value}                      \\ \hline \hline
Learning rate $\alpha$           & 0.01 \\              
Clipping parameter $\gamma$       & 0.01                         \\ 
Batch size $B$           & (256, 512, 1024)\\     
Number of Epoch $K$       & 40                         \\ 
Injected noise variance $\sigma_\epsilon$           & 3.0\\
Sampling variance       & 1                       
                              \\ 
Percentage of dimensionality reduction & 0.7\\
Number of random seeds & 4\\
\hline
\end{tabular}
\end{table}


\end{document}